\theoremstyle{plain}
\newtheorem{theorem}{Theorem}[section]
\newtheorem{lemma}[theorem]{Lemma}
\theoremstyle{definition}
\newtheorem{definition}[theorem]{Definition}
\theoremstyle{remark}
\journal{Elsevier}
\begin{document}

\begin{frontmatter}



\title{Bidirectional Linear Recurrent Models for Sequence-Level Multisource Fusion}


\author[inst1,inst2]{Qisai Liu}
\author[inst4]{Zhanhong Jiang}
\author[inst3]{Md Zahid Hasan}
\author[inst4]{Joshua R. Waite}
\author[inst5]{Chao Liu}
\author[inst4]{Aditya Balu}
\author[inst1,inst2,inst4]{Soumik Sarkar\corref{cor}}
\ead{soumiks@iastate.edu}

\cortext[cor]{Corresponding author}

\affiliation[inst1]{organization={Department of Mechanical Engineering},
            addressline={Iowa State University}, 
            city={Ames},
            postcode={50011}, 
            state={Iowa},
            country={United States}}

\affiliation[inst2]{organization={Department of Computer Science},
            addressline={Iowa State University}, 
            city={Ames},
            postcode={50011}, 
            state={Iowa},
            country={United States}}

\affiliation[inst3]{organization={Department of Electrical and Computer Engineering},
            addressline={Iowa State University}, 
            city={Ames},
            postcode={50011}, 
            state={Iowa},
            country={United States}}

\affiliation[inst4]{organization={Translational AI Center},
            addressline={Iowa State University}, 
            city={Ames},
            postcode={50011}, 
            state={IA},
            country={United States}}

\affiliation[inst5]{organization={Department of Energy and Power Engineering},
            addressline={Tsinghua University}, 
            city={Haidian},
            postcode={100084}, 
            state={Beijing},
            country={China}}
            
\begin{abstract}
    Sequence modeling is a critical yet challenging task with wide-ranging applications, especially in time series forecasting for domains like weather prediction, temperature monitoring, and energy load forecasting. Transformers, with their attention mechanism, have emerged as state-of-the-art due to their efficient parallel training, but they suffer from quadratic time complexity, limiting their scalability for long sequences. In contrast, recurrent neural networks (RNNs) offer linear time complexity, spurring renewed interest in linear RNNs for more computationally efficient sequence modeling. In this work, we introduce BLUR (Bidirectional Linear Unit for Recurrent network), which uses forward and backward linear recurrent units (LRUs) to capture both past and future dependencies with high computational efficiency. BLUR maintains the linear time complexity of traditional RNNs, while enabling fast parallel training through LRUs. Furthermore, it offers provably stable training and strong approximation capabilities, making it highly effective for modeling long-term dependencies. Extensive experiments on sequential image and time series datasets reveal that BLUR not only surpasses transformers and traditional RNNs in accuracy but also significantly reduces computational costs, making it particularly suitable for real-world forecasting tasks. Our code is available here. 
\end{abstract}



\begin{keyword}
Time Series \sep Recurrent Unit \sep Stability \sep Bidirectional


\end{keyword}

\end{frontmatter}



\section{Introduction}\label{intro}
In modern machine learning, sequence models have played a central role in different domains to precisely model sophisticated underlying relationships from data, such as neural machine translation~\cite{he2018layer,wu2016google}, time series prediction~\cite{zhou2023expanding,hua2019deep,zhou2021informer}, image classification and segmentation~\cite{stollenga2015parallel,tatsunami2022sequencer,liu2017bidirectional,sarkar2014sensor,ahamed2025tscmamba}, and speech recognition~\cite{graves2013hybrid,kheddar2024automatic,mehrish2023review}. Among numerous models developed, two categories of models have received considerable attention and been widely adopted, which are recurrent neural networks (RNNs)~\cite{hermans2013training} and Transformers~\cite{lin2022survey}.

\begin{figure}[t]
    \centering
    \includegraphics[width=0.85\linewidth]{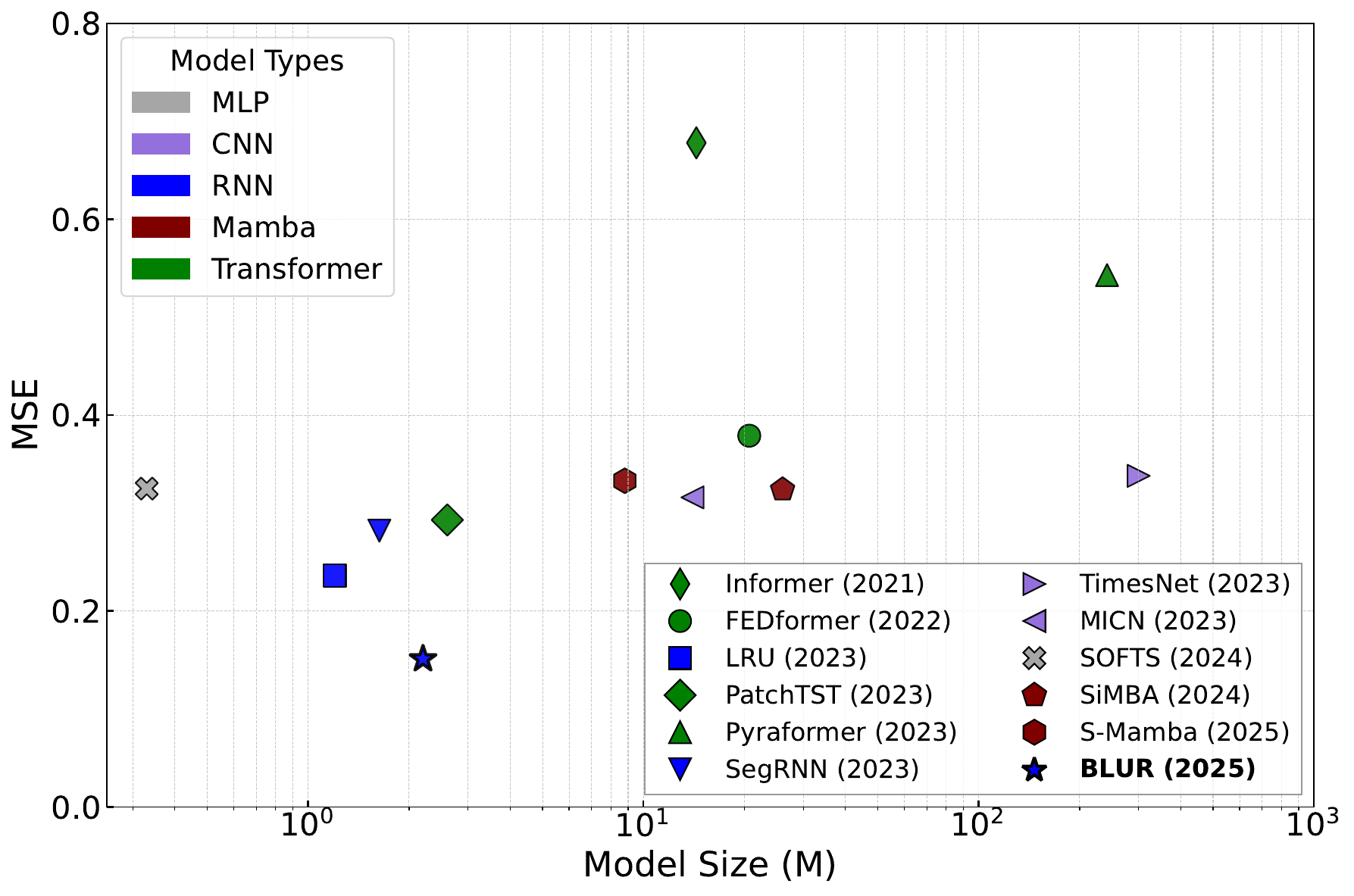}
    \caption{\textit{BLUR} vs. baselines with the number of parameters against the MSE on ETTm$_1$ dataset with horizon=96.}
    \label{fig:param}
    \vspace{5pt} 
\end{figure}

RNN and its variants represent the first generation of sequence models in deep learning, prior to the emergence of transformers. Despite their popularity, the vanishing and exploding gradient problems~\cite{ribeiro2020beyond} have impeded the applications to modeling long-term dependencies. To mitigate this issue, LSTM~\cite{yu2019review} and GRU~\cite{dey2017gate} are developed by leveraging gating mechanisms to control the flow of hidden state information along the sequence. However, they still inevitably undergo slow sequential training speed and limited capabilities in modeling long-term dependencies. In 2017, a seminal work~\cite{vaswani2017attention} devised the attention mechanism leading to the more advanced sequence models, i.e., Transformers. They have swiftly dominated a variety of areas in sequence modeling due to outstanding abilities in parallel training and long-term dependency modeling, but with the cost of quadratic time complexity $\mathcal{O}(N^2)$, where $N$ is assumed to be the sequence length. Unfortunately, this can make the deployment of Transformers on fairly long sequences computationally expensive. In contrast, RNN is able to achieve the linear time complexity $\mathcal{O}(N)$ and is ideally deemed a better model for long-term sequence modeling tasks. Thus, our goal in this work is to establish a model that benefits from parallel training and maintains high computational efficiency.

To this end, researchers have developed a more efficient variant of RNN, termed \textit{linear RNN}, which essentially employs element-wise linear recurrence relations~\cite{martin2017parallelizing}. Motivated by this, the authors in~\cite{gu2021efficiently} introduced the structured state-space models (S4), which achieved stunning performance on complex long-term sequence modeling tasks. Furthermore, S4 and other attempts such as DSS~\cite{gu2022parameterization}, S4D~\cite{gupta2022diagonal}, S5~\cite{smith2022simplified}, and Mamba~\cite{gu2023mamba} have successfully mitigated the $\mathcal{O}(N^2)$ bottleneck caused by the attention layer by resorting to a hidden state with appropriate discretization schemes to model interactions between inputs. The primary advantages of these models lie in two folds. The first one is to remove nonlinearities in recurrence to retain parallel training, and the second is to assume independence between distinct hidden states to realize highly efficient updates. Another line of work on top of linear RNN is to incorporate gating mechanisms~\cite{mehta2022long,peng2023rwkv,wang2022pretraining,tan2022multimodal}, driven by LSTM and GRU, yielding considerable performance gains. Additionally, to study the difference between linear RNNs and deep state-space models, the authors in~\cite{orvieto2023resurrecting} tactically analyzed and ablated a series of changes to standard RNNs, including linearizing and diagonalizing the recurrence. They also introduced a novel architecture called linear recurrent unit (LRU) to fulfill parallel training and fast computation. 

We have also noticed that since the emergence of LRU~\cite{orvieto2023resurrecting} and even its precedents, such as S4, DSS, S4D, S5, and Mamba, few of them have specifically been utilized to perform the relevant timeseries tasks. For example, in~\cite{orvieto2023resurrecting}, all six benchmark datasets are based on images or text, although they possess temporal patterns. However, they are still naturally different from the real timeseries data that can be generated from some unknown highly nonlinear dynamical systems, which already results in the development of numerous Transformer models~\cite{wen2022transformers,zeng2023transformers,wang2024spatiotemporal}. In addition to that, timeseries data could be so noisy that developing a precise model would become extremely challenging, such as in the cases of long-term wind power generation~\cite{ahmadi2020long}, atmospheric circulation~\cite{shepherd2014atmospheric}, and ocean surface current~\cite{sinha2021estimating}.

\textbf{Contributions.} Regardless of the impressive performance of the aforementioned models, they are only able to make use of previous contexts. For some applications, such as speech recognition, where all the utterances are transcribed at once, it is necessary to intelligently exploit the future context as well. By doing so, we can efficiently make full use of past features (via forward pass) and future features (via backward pass) in a specific time frame. Hence, to more effectively capture the long-term dependencies in sequence modeling, we propose a novel model dubbed \textit{Bidirectional Linear Unit for Recurrent (BLUR)} network, which includes forward and backward LRUs. Inheriting from LRU, BLUR still maintains both fast parallel
training and inference thanks to the linear complexity $\mathcal{O}(N)$. BLUR also theoretically enjoys the training stability and universality. Specifically, the contributions of this work are as follows:
\begin{itemize}
    \item Inspired by the bidirectional recurrent models~\cite{schuster1997bidirectional,graves2013speech}, We propose the Bidirectional Linear Unit for Recurrent (BLUR) network by resorting to forward and backward LRUs. This ensures that the information from the past and future states can be captured for each time step to facilitate computationally efficient modeling of long-term dependencies.
    \item We diagonalize the forward and backward recurrence matrices with the eigendecomposition and theoretically analyze the condition to provide assurance for training stability while also applying the universality theory to establish the model approximation property for BLUR. 
    \item To validate the proposed model, several benchmark datasets, including sequential images and timeseries are applied to show efficiency and effectiveness, with the comparison to baselines (as shown in Figure~\ref{fig:param}). It turns out that the bidirectional mechanism is significantly outperforming, with the cost of a slight computational time increase compared to the vanilla LRU. Additionally, we introduce the first benchmark specifically designed for LRU-type models, highlighting their suitability for timeseries prediction and forecasting tasks. See Table~\ref{table:comparison_method} for method comparison.
\end{itemize}

\begin{table}[htp]
\caption{Qualitative comparison among approaches. Full results are in Appendix~\ref{full_comparison_table}.}
\vspace{-0.2in}
\begin{center}
\resizebox{\columnwidth}{!}{ 
\begin{tabular}{c c c}
    \toprule
    \textbf{Method} & \textbf{T} & \textbf{M}\\ \midrule
    RNN~\cite{hewamalage2021recurrent} & $\mathcal{O}(N)$ & Nonlinear recurrence \\
    TCN~\cite{bai2018empirical} & $\mathcal{O}(N)$ & 1D convolutional network \\
    Transformer~\cite{vaswani2017attention}&$\mathcal{O}(N^2)$& Self-attention\\
    Reformer~\cite{kitaev2020reformer} & $\mathcal{O}(N\textnormal{log}N)$ & LSH attention \\
    Informer\cite{zhou2021informer} & $\mathcal{O}(N\textnormal{log}N)$ & Probsparse self-attention\\
    S4~\cite{gu2021efficiently} & $\mathcal{O}(N\textnormal{log}N)$ & Linear recurrence\\
LRU~\cite{orvieto2023resurrecting}   &              $\mathcal{O}(N)$ &Dia-linear recurrence     \\
      SOFTS~\cite{han2024softs} &  $\mathcal{O}(N)$ &  Star aggregate-redistribute  \\
      SiMBA~\cite{patro2024simba} & $\mathcal{O}(N\textnormal{log}N)$ & Mamba + EinFFT\\
      S-Mamba~\cite{wang2025mamba} & $\mathcal{O}(N\textnormal{log}N)$ & Mamba + MLP \\
      \textbf{BLUR}   &     $\mathcal{O}(N)$ & Dia-bi-linear recurrence               \\ 
      \bottomrule
      
\end{tabular}
}
\end{center}
\caption*{\small T: training time complexity, M: mechanism, LSH: locality sensible hashing, Dia-linear: diagonal linear, Dia-bi-linear: diagonal bidirectional linear. EinFFT: Einstein Fast Fourier Transform.}
\label{table:comparison_method}
\end{table}

\vspace{-0.3in}

\section{Related Works}\label{related_works}

In this section, we review the related works to our work. Although a comprehensive review of all relevant literature is beyond the scope of this study, we present a detailed examination of the most relevant research findings.

\textbf{Linear recurrence.}
As mentioned before, one of the weaknesses in vanilla RNNs is the slower sequential training speed, which can primarily be attributed to the full matrix multiplication when updating the hidden states and the impediment of parallel training caused by the nonlinearity within the recurrence. To alleviate the first issue, \cite{lei2017simple} devised an architecture called Simple Recurrent Unit (SRU) to provide the model expressivity and enable highly parallelized implementation with careful initialization to facilitate training. They demonstrated notably the effectiveness and significant efficiency of the model compared to LSTM and convolutional networks. Another work~\cite{martin2017parallelizing} developed a parallel linear recurrence CUDA kernel and applied it to enhance the training and inference speeds by up to 9x.
Analogously, when adopting linear state-space models, diagonalized variants of S4~\cite{gu2022parameterization, gupta2022diagonal} have also expedited the original training speed of S4 by conducting element-wise recurrence. To tackle another challenge in RNNs, recent findings suggest that the nonlinear relationship on past data can be captured by concatenating multiple linear recurrent layers interleaved with nonlinear blocks like multi-layer perceptron (MLP). This essentially eliminates the nonlinearity from the recurrent layer and adds it back in the output layer~\cite{balduzzi2016strongly,gu2021combining}, which is evidently validated in~\cite{smith2022simplified,peng2023rwkv}. Additionally, to theoretically justify the linear recurrence in sequence modeling, two more recent works~\cite{orvieto2023universality,cirone2024theoretical} for the first time revealed that combining MLPs with either real or complex linear recurrences results in arbitrarily precise approximation of regular causal sequence-to-sequence maps. Such a finding implies formally that linear recurrence has universal approximator properties similar to the conclusion for feedforward neural networks~\cite{scarselli1998universal}. 
A more recent work introduced hierarchical gated RNN~\cite{qin2024hierarchically} by developing a gated linear RNN to include forget gates that are lower
bounded by a learnable value. They claimed that this allows the upper layers to model long-term dependencies and the lower layers to model more local, short-term dependencies. One concurrent work~\cite{feng2024were} has recently developed the linear versions of LSTM and GRU, dubbed minLSTM and minGRU, to accomplish efficient parallel training. The advent of these two models advocates the necessity of linear recurrence in revolutionizing traditional nonlinear recurrent models.

\textbf{Bidirectional recurrent models.} Standard RNNs have restrictions in taking input information that is available to the network due to the only forward pass. To break such restrictions and boost the performance, a seminal work~\cite{schuster1997bidirectional} designed Bidirectional RNNs (BRNNs) by connecting two hidden layers of opposite directions to the same output. Since its emergence, BRNNs have extensively been applied to many areas~\cite{arisoy2015bidirectional,chadha2020bidirectional,grisoni2020bidirectional,turek2020approximating} and led to Bidirectional LSTMs (BLSTMs)~\cite{graves2013hybrid,huang2015bidirectional} and Gated Recurrent Units (BGRUs)~\cite{lynn2019deep,liu2022bidirectional}. Though bidirectional architectures have typically yielded performance enhancement, they are still stuck with slow sequential training speed. We have not been aware of any BRNN and its variants to possess fast parallel training as the nonlinearity is still with the recurrence. Xie et al.~\cite{xie2019image} also extended the bidirectional mechanism to attention architecture so as to improve the model performance. Likewise, bidirectional attentions (BAs) were also developed in object classification~\cite{liu2019bidirectional}, image-text matching~\cite{liu2019focus} and multi-object detection~\cite{wang2023banet}. Though BAs are favorably competitive to the state-of-the-art, the quadratic time complexity still remains a bottleneck if $N$ is large, which makes deploying them not practically feasible. Another recently popular bidirectional model is BERT~\cite{devlin2018bert}, which has led to a variety of applications~\cite{sun2019learning,sun2019bert4rec,yue2020bert4nilm,zhang2019hibert}. While BERT involving bidirectional Transformer encoders enables the representation learning more accurately, similar to BAs, the complexity still remains a challenge.

\section{Preliminaries}
\label{preliminaries}

Denote by $\bm{v}:=(\bm{v}_i)_i^N\in\mathcal{V}\subseteq\mathbb{R}^{d\times N}$ the length-$N$ sequences of real $d$-dimensional inputs. Thus, a sequence-to-sequence map between inputs and outputs is defined as a deterministic transformation of input sequences that generates output sequences of the same length denoted by $\bm{y}:=(\bm{y}_i)_i^N\in\mathcal{Y}\subseteq\mathbb{R}^{s\times N}$, where $s$ is the dimension of output. In this context, we only consider the same length for both inputs and outputs for simplicity. However, the developed model architectures allow for varying lengths of inputs and outputs. We also denote by $[N]=\{1,2,...,N\}$. A sequence-to-sequence map is \textit{causal} if for every $i,k\in[N]$, $\bm{y}_k$ is agnostic of inputs $(\bm{v}_i)_{i\geq k+1}$. We resort to $k$ for indicating a time step that is at least $i$ or may be multiple time steps ahead of $i$. The causal sense in this context is only in terms of the input, instead of hidden state~\cite{turek2020approximating}. Therefore, we have the following definition.
\begin{definition}(~\cite{orvieto2023universality})
    A causal sequence-to-sequence map with length-$N$ sequential $d$-dimensional inputs $\bm{v}:=(\bm{v}_i)_{i=1}^N\in\mathcal{V}\subseteq\mathbb{R}^{d\times N}$ and length-$N$ sequential $s$-dimensional outputs $\bm{y}:=(\bm{y}_i)_{i=1}^N\in\mathcal{Y}\subseteq\mathbb{R}^{s\times N}$ is a series of non-linear continuous functions $\mathcal{F}=(\mathcal{F}_k)_{k=1}^N, \mathcal{F}_k:\mathbb{R}^{d\times k}\to\mathbb{R}^s, \forall k\in[N]$ such that $(\bm{v}_i)_{i=1}^N\xmapsto{\mathcal{F}} (\bm{y}_i)_{i=1}^N, \textnormal{s.t.,} \;\bm{y}_k=\mathcal{F}_k((\bm{v}_i)_{i=1}^k)$.
\end{definition}
In this work, we approximate $\mathcal{F}$ by using a neural network. Thereby, we denote by $\tilde{\mathcal{F}}$ the approximation to $\mathcal{F}$, which essentially is a function composition:
\begin{equation}\label{eq_1}
    \tilde{\mathcal{F}}=
    \mathcal{G}\circ\mathcal{L}\circ\mathcal{E}.
\end{equation}
$\mathcal{E}:\mathbb{R}^d\to\mathbb{R}^m$ is a linear embedding layer with biases on inputs. Correspondingly, the encoded sequence is defined as $\bm{u}:=(\bm{u}_i)_{i=1}^N\mathbb{R}^{m\times N}$, where $\bm{u}_k=\mathcal{E}(\bm{v}_k)\in\mathbb{R}^m,\;\forall k\in[N]$. Under some application scenarios, the embedding layer $\mathcal{E}$ is just the identity. $\mathcal{L}: \mathbb{R}^m\to\mathbb{R}^n$ is a linear RNN processing the encoded inputs $(\bm{u}_i)_{i=1}^N$ to produce a sequence of hidden states $(\bm{h}_i)_{i=1}^N\in\mathbb{R}^{n\times N}$, where $n$ is the dimension of $\bm{h}_i$, i.e., $\bm{h}_k=\mathcal{L}(\bm{u}_k)\in\mathbb{R}^n$. Finally, $\mathcal{G}:\mathbb{R}^n\to\mathbb{R}^s$ is typically a non-linear function acting on each hidden state $\bm{h}_k$, parameterized by a model like MLP, i.e., $\hat{\bm{y}}_k=\mathcal{G}(\bm{h}_k)\in\mathbb{R}^s$, $\forall k\in[N]$. Throughout the paper, $\mathcal{G}$ is assumed to be \textit{regular}, which means it is not oscillating too quickly. In general, the goal is to construct a model $\tilde{\mathcal{F}}$ that can transform inputs $(\bm{v}_i)_{i=1}^N$ to the predicted outputs $(\hat{\bm{y}}_i)_{i=1}^N$, which is expected to be close to $\bm{y}=\mathcal{F}(\bm{v})$.

To ensure the universality presented in the latter section, we also assume that each $\mathcal{F}_k$ is a \textit{Barron} function, with a well-defined integrable Fourier transform. Specifically, the Fourier transform of a function $f:\mathbb{R}^p\to\mathbb{R}$ is defined in the following normalization.
\vspace{-0.1in}

\begin{equation}
    \tilde{f}(\omega):=\frac{1}{(2\pi)^p}\int_{\mathbb{R}^p}f(x)e^{-i\langle\omega,x\rangle}dx, 
\end{equation}
which results in the following definition.
\begin{definition}(~\cite{lee2017ability})
    $f$ is a \textit{Barron} function if $\mathcal{C}_f:=\int_{\mathbb{R}^p}\|\omega\|_2|\tilde{f}(\omega)|d\omega<\infty$. $\|\cdot\|_2$ is the $l_2$ norm.
\end{definition}

In the sequel, we introduce the definition of linear RNN to characterize our proposed BLUR.
\begin{definition} (Linear RNN.)
    Suppose that the linear RNN is parameterized by matrices $\mathbf{A}\in\mathbb{R}^{n\times n}$ and $\mathbf{B}\in\mathbb{R}^{n\times m}$. $\mathcal{L}_{\mathbf{A},\mathbf{B}}:\mathbb{R}^{n\times N}\to \mathbb{R}^{m\times N}$ processes an encoded sequence input $\bm{u}\in\mathbb{R}^{m\times N}$ yielding an output sequence of hidden states $\bm{h}\in\mathbb{R}^{n\times N}$ through the following recursive update:
    \begin{equation}\label{eq_2}
        \bm{h}_k = \mathbf{A}\bm{h}_{k-1}+\mathbf{B}\bm{u}_k,
    \end{equation}
    where $\bm{h}_0=\mathbf{0}\in\mathbb{R}^n$.
\end{definition}
Our proposed model is established on top of a recently proposed model LRU such that in the following, we present some necessary background knowledge of LRU. 

\textbf{LRU.} LRU is intrinsically a diagonal linear RNN that is able to address the renowned vanishing gradient issue, which is attributed to the exponentiations of the matrix $\mathbf{A}$ in the following recurrence:

\vspace{-0.2in}

\begin{equation}
    \begin{split}&\bm{h}_1=\mathbf{B}\bm{u}_1,\bm{h}_2=\mathbf{AB}\bm{u}_1+\mathbf{B}\bm{u}_2,\\&\bm{h}_3=\mathbf{A}^2\mathbf{B}\bm{u}_1+\mathbf{AB}\bm{u}_2+\mathbf{B}\bm{u}_3,...
    \end{split}
\end{equation}
\vspace{-0.2in}

In general, the above recurrence can be rewritten as:
\vspace{-0.1in}

\begin{equation}\label{eq_5}
    \bm{h}_k=\sum_{j=0}^{k-1}\mathbf{A}^j\mathbf{B}\bm{u}_{k-j}
\end{equation}
A well-known result~\cite{horn2012matrix} implies that over the space of $n\times n$ non-diagonal real matrices, the
set of non-diagonalizable (in the complex domain) matrices have measured zero.
Since $\mathbf{A}$ is in a space of $n\times n$ non-diagonal real matrices, up to arbitrarily small perturbations, it is diagonalizable over the complex numbers~\cite{orvieto2023resurrecting,orvieto2023universality}. Hence, we have
\begin{equation}
    \mathbf{A}=\mathbf{Q}\mathbf{\Lambda}\mathbf{Q}^{-1}, \;\mathbf{\Lambda}=\textnormal{diag}(\lambda_1,...,\lambda_n)\in\mathbb{C}^{n\times n},
\end{equation}
where $\lambda_1,...,\lambda_n$ are the corresponding eigenvalues of $\mathbf{A}$ and the columns of $\mathbf{Q}\in\mathbb{C}^{n\times n}$ are the corresponding eigenvectors of $\mathbf{A}$. With this in hand, Eq.~\ref{eq_2} is rewritten as 
\begin{equation}
    \bm{h}_k=\mathbf{Q}\mathbf{\Lambda}\mathbf{Q}^{-1}\bm{h}_{k-1}+\mathbf{B}\bm{u}_k
\end{equation}
With some simple mathematical manipulation, we have
\begin{equation}
    \mathbf{Q}^{-1}\bm{h}_k=\mathbf{\Lambda}\mathbf{Q}^{-1}\bm{h}_{k-1}+\mathbf{Q}^{-1}\mathbf{B}\bm{u}_k
\end{equation}
Replacing $\mathbf{Q}^{-1}\bm{h}_k$ and $\mathbf{Q}^{-1}\mathbf{B}$ with $\bm{h}_k$ and $\mathbf{B}$ produces
\begin{equation}\label{eq_9}
    \bm{h}_k=\mathbf{\Lambda}\bm{h}_{k-1}+\mathbf{B}\bm{u}_k,\;\bm{h}_k=\sum_{j=0}^{k-1}\mathbf{\Lambda}^j\mathbf{B}\bm{u}_{k-j}
\end{equation}
\vspace{-0.2in}

which is the key complex-valued diagonal recursion of LRU. One may argue whether the sequence-to-sequence map defined above still holds valid if we adopt such a replacement for the linear RNN. Indeed, it is as $\mathbf{Q}$ is a linear transformation and can be merged with $\mathcal{L}$ and $\mathcal{G}$. Also, the complex eigenvalues play a central role in inducing peculiar memorization properties based on~\cite{orvieto2023universality}.

\begin{figure}
    \centering
    \includegraphics[width=0.7\linewidth]{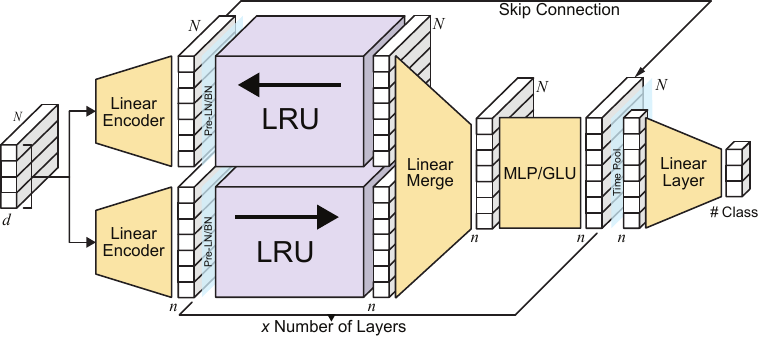}
    \caption{Bidirectional Linear Unit for Recurrent (BLUR) network: the model is a stack of Bidirectional LRU blocks, with two linear encoders, a linear merging layer to merge the bidirectional LRUs, and a nonlinear projection in between, and leverages skip connections and layer (or batch) normalization. The linear encoder is to encode raw inputs to an embedding space. Note also for convenience, we assume that all linear and nonlinear projections in the architecture do not change the hidden dimensions, unlike the dimensions defined in the function composition in Eq.~\ref{blur_function}. Please refer to Figure~\ref{fig:blur_architecture} for more detail.}
    \label{fig:blur}
    \vspace{10pt}
\end{figure}

\section{BLUR}\label{blur}

\subsection{Model architecture}
Motivated by the appealing model expressivity and cheap computational complexity by means of parallel scans~\cite{blelloch1990prefix} of LRU, we propose BLUR, which compromises a forward and a backward LRU, interleaved with a nonlinear layer, as shown in Figure~\ref{fig:blur_architecture}. In the sequel, we start with the bidirectional linear recurrence.
Inspired by BRNN~\cite{schlag2021linear}, the recurrence of hidden states of BLUR can be written as
    \begin{equation}
        \overrightarrow{\bm{h}}_k=\mathbf{\Lambda}_f\overrightarrow{\bm{h}}_{k-1}+\mathbf{B}_f\bm{u}_k,\;
        \overleftarrow{\bm{h}}_k=\mathbf{\Lambda}_b\overleftarrow{\bm{h}}_{k+1}+\mathbf{B}_b\bm{u}_k
    \end{equation}
$\mathbf{\Lambda}_f, \mathbf{B}_f$ are the forward weight matrices and $\mathbf{\Lambda}_b, \mathbf{B}_b$ are the backward weight matrices. Note that the encoded input depends only on the current information $\bm{u}_k$, which is with respect to $(\bm{v}_i)_{i\leq k}$ and blind to any future inputs $(\bm{v}_i)_{i> k}$. This fundamentally satisfies the definition of the causal sequence-to-sequence map. The backward recurrence is also an LRU such that the parallel scans apply immediately. This bidirectional operation can be represented by $\mathcal{L}_1, \mathcal{L}_2:\mathbb{C}^{m}\to\mathbb{C}^n$, respectively. 
One may wonder if this extra layer substantially soars the computational time, compared to the originally single LRU. Empirically, we will show evidently that the increase in computational time is marginal, while the accuracy improvement is significant.

\begin{figure}
    \centering
    \includegraphics[width=0.7\linewidth]{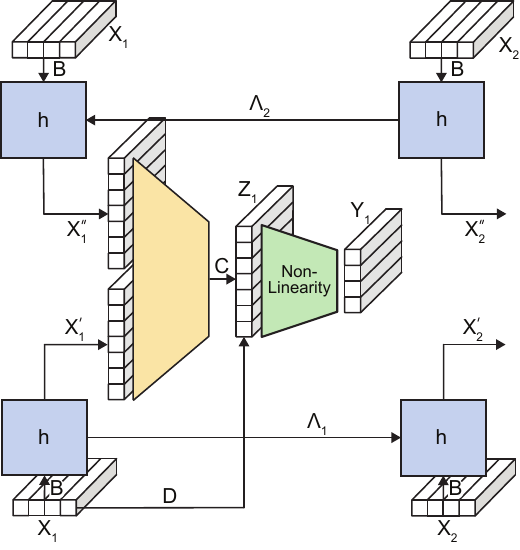}
    \caption{Overview of the BLUR architecture. The model processes input sequences $\{X\}$, which can be sequential images, text, or sensor time series, using forward and backward Linear Recurrent Units (LRUs). Hidden states are updated through diagonal recurrent matrices $\Lambda_1$ and $\Lambda_2$, followed by fusion via a central combination module $C$ and a non-linear transformation. This enables bidirectional context modeling with linear time complexity.}
    \label{fig:blur_architecture}
    \vspace{10pt}
\end{figure}

As discussed in the last section and shown in Figure~\ref{fig:blur}, we need an additional layer to bring the nonlinearity back to the model such that a nonlinear MLP mapping is leveraged accordingly. Different from Eq.~\ref{eq_1}, due to the bidirectional architecture, we first merge both $\overrightarrow{\bm{h}}_k$ and $\overleftarrow{\bm{h}}_k$ with a linear layer and then combines a skip connection from $\bm{u}_k$ such that the following relationship can be acquired:
    \begin{equation}\label{eq_11}
        \hat{\bm{h}}_k=\mathcal{H}(\overrightarrow{\bm{h}}_k,\overleftarrow{\bm{h}}_k)
    \end{equation}
where $\mathcal{H}:\mathbb{C}^n\times\mathbb{C}^n\to\mathbb{C}^n$ is a linear layer to merge hidden states of the opposite directions, $\hat{\bm{h}}_k$ is the merged hidden state, which is key to maintain the consistent dimension as $\overrightarrow{\bm{h}}_k$ or $\overleftarrow{\bm{h}}_k$. $\mathcal{H}$ can also be customized to have nonlinearity.
Moreover, Eq.~\ref{eq_11} is different from $[\overrightarrow{\bm{h}}_k;\overleftarrow{\bm{h}}_k]$ that expands the dimension from $n$ to $2n$~\cite{ma2017dipole}. 
Subsequently, an MLP layer (or GLU) is adopted in this context such that
\begin{equation}
    \bm{z}_k=\mathcal{G}(\hat{\bm{h}}_k),
\end{equation}
where $\mathcal{G}:\mathbb{R}^n\to\mathbb{R}^s$, where we take the real part of $\hat{\bm{h}}_k$ for the computation, as suggested in~\cite{orvieto2023resurrecting}. The output nonlinearity can be made as complex as one wants. The reason why we use MLP is because its universality has been theoretically validated, which will accommodate our analysis. We then resort to skip connections with $\bm{u}_k$ to empirically boost the performance such that,
\begin{equation}
    \bm{y}_k=\mathcal{L}_3(\bm{z}_k+\mathbf{C}\bm{u}_k),
\end{equation}
where $\mathcal{L}_3:\mathbb{R}^n\to\mathbb{R}^s$ is the final linear layer to map hidden states to outputs, $\mathbf{C}\in\mathbb{R}^{n\times m}$.
As Eq.~\ref{eq_1} only supports a single linear RNN layer and has no merging layer, it does not suffice to represent the BLUR model. Consequently, in the next, the new function composition is tailored for BLUR by adapting Eq.~\ref{eq_1}.
\begin{equation}\label{blur_function}
    \hat{\mathcal{F}}=\mathcal{L}_3\circ\mathcal{G}\circ\mathcal{H}\circ(\mathcal{L}_1+\mathcal{L}_2)\circ\mathcal{E}.
\end{equation}
In this context, the embedding layer $\mathcal{E}$ encodes the raw inputs into a higher-dimensional embedding space, which evidently helps enhance performance. Such an embedding may particularly have a significant impact on encoded inputs when raw inputs possess intricate features. Note that the function composition in Eq.~\ref{blur_function} does not show all layers in Figure~\ref{fig:blur}, including the layer (or batch) normalization and the time pooling layers. The architecture shown in Figure~\ref{fig:blur} is what we implement empirically, while the function composition focuses primarily on the key components in BLUR, though we can certainly expand it to include all layers.

\subsection{Theoretical analysis}
In this subsection, we theoretically analyze the stability and universality of the proposed BLUR. 
One may wonder if the analysis for $\tilde{\mathcal{F}}$ is proper to the new function composition of BLUR, $\hat{\mathcal{F}}$. The answer is affirmative, as LRU is linear, which allows us even to design $d$ independent LRUs acting on separate input dimensions. If we assume in this context that $d$ is an even number such that $d/2$ LRUs are forward and the rest of LRUs are backward, then we achieve the bidirectional mechanism. Without loss of generality, we can design opposite directions of LRUs for the same input. Such LRUs can even be combined into one single block using a properly designed $\mathbf{B}$ that takes $\mathbf{B}_f$ and $\mathbf{B}_b$.

\textbf{Stability.}
Recall the recurrence of LRU in Eq.~\ref{eq_9} and adapt it as follows

\vspace{-0.2in}

\begin{equation}\label{eq_15}
    (\textnormal{Forward})\; \bm{h}_k=\sum_{j=0}^{k-1}\mathbf{\Lambda}^j_f\mathbf{B}_f\bm{u}_{k-j}
\end{equation}

\vspace{-0.1in}

The norm of component $j$ of $\bm{h}$ at the time step $k$ evolves such that $\mathcal{O}(|\bm{h}_{k,j}|)=\mathcal{O}(|\lambda_{f,j}|^k)$, where $\lambda_f$ signifies the eigenvalues for the forward recurrence. Similarly, the backward recurrence extending from $k$ to 1 (by setting $\bm{h}_{k+1}=\mathbf{0}$) can be unrolled and represented by
\begin{equation}
    \begin{split}
    &\bm{h}_{k}=\mathbf{B}\bm{u}_{k},\bm{h}_{k-1}=\mathbf{\Lambda B}_b\bm{u}_{k}+\mathbf{B}\bm{u}_{k-1},\\&\bm{h}_{k-2}=\mathbf{\Lambda}_b^2\mathbf{B}\bm{u}_{k}+\mathbf{\Lambda B}_b\bm{u}_{k-1}+\mathbf{B}\bm{u}_{k-2},...
    \end{split}
\end{equation}
such that
\vspace{-0.3in}

\begin{equation}\label{eq_17}
        (\textnormal{Backward})\; \bm{h}_1=\sum_{j=0}^{k-1}\mathbf{\Lambda}^j_b\mathbf{B}_b\bm{u}_{j+1},
\end{equation}
which enables the evolution of the norm of component $j$ of $\bm{h}$ at the corresponding time step 1 to be $\mathcal{O}(|\bm{h}_{1,j}|)=\mathcal{O}(|\lambda_{b,j}|^{k})$, where $\lambda_{b}$ indicates the eigenvalues for the backward sequence. Thereby, for both directions, a sufficient condition to ensure the model stability is $|\lambda_{f,j}|, |\lambda_{b,j}|<1$ for all $j$~\cite{gu2021efficiently}. One may argue that if the forward LRU is stable, then the backward LRU should also be stable. This would degenerate the sufficient condition of the stability to $|\lambda_{f,j}|<1$. However, we empirically initialize both models separately, so the forward stability may not necessarily provide a guarantee for the backward stability.

Through the above analysis, we have realized that the eigenvalues of the linear recurrent weight matrices $\mathbf{\Lambda}_f$ and $\mathbf{\Lambda}_b$ are critical for model learning. When initializing the BLUR, we need to assume that the eigenvalues of both of recurrent weight matrices are distinct. Luckily, we can also control the eigenvalues with a proper initialization. Since our goal is to maintain the absolute values of all eigenvalues $\lambda_f$s and $\lambda_b$s within 1, one can simply sample them uniformly from a complex ring in between $[e_{min}, e_{max}]\subseteq [0,1]$, where $e_{min}$ is the minimum absolute value of eigenvalue and $e_{max}$ the maximum. Specifically, we have the following relationship:
$
    \lambda_{f,i},\lambda_{b,i}\sim \mathbb{T}[e_{min}, e_{max}]:=\{\lambda\in\mathbb{C}|e_{min}\leq|\lambda|\leq e_{max}\}.
$
Motivated by this, the initialization of $\mathbf{\Lambda}_f$ and $\mathbf{\Lambda}_b$ can be close to the unit circle as much as possible.
Such a simple trick can help significantly mitigate the issue of $\bm{h}_k$ exploding while aiding in the long-term dependency modeling. In~\cite{orvieto2023resurrecting}, to reinforce the model training stability, the authors proposed the exponential parameterization for the complex eigenvalues, which also significantly enhances model performance in some experiments. The vanishing gradient issue also becomes one incentive to apply this since when taking powers of eigenvalues close to the origin, the signal from past inputs subsides gradually. In this work, we follow the similar stabilization as done in~\cite{orvieto2023resurrecting} for consistency on the adopted LRU. Apart from stability, the model universality of BLUR is another topic we need to investigate to provide a guarantee for the sufficient approximation of BLUR to the underlyingly complex data patterns. In what follows, by leveraging the classical universality theory from~\cite{barron1993universal} and the recently developed theory from~\cite{orvieto2023universality}, we analyze the universality of BLUR.

\textbf{Universality.}
We will show that if the model $\hat{\mathcal{F}}=\mathcal{L}_3\circ\mathcal{G}\circ\mathcal{H}\circ(\mathcal{L}_1+\mathcal{L}_2)\circ\mathcal{E}$ grows in width, we can always parameterize $\mathcal{F}$ using $\hat{\mathcal{F}}$ such that the approximation error is within a pre-defined desired accuracy $\varepsilon>0$.
\begin{theorem}\label{theorem_1}
    Suppose that the inputs $\bm{v}=(\bm{v}_i)_{i=1}^N\in\mathcal{V}\subseteq\mathbb{R}^{d\times N}$ are bounded, i.e., $\|\bm{v}_i\|<\infty$ for all $i\in[N]$. Denote by $\textnormal{dim}(\mathcal{V})$ the vector-space dimension of $\mathcal{V}$. Let $\mathcal{L}_1$ and $\mathcal{L}_2$ be LRU with either real or complex eigenvalues and with width $n\geq \textnormal{dim}(\mathcal{V})$. Let the nonlinear mapping $\mathcal{E}$ be parameterized by an MLP with width $D\geq \mathcal{O}(nN/\varepsilon^2)$. Hence, $\hat{\mathcal{F}}=\mathcal{L}_3\circ\mathcal{G}\circ\mathcal{H}\circ(\mathcal{L}_1+\mathcal{L}_2)\circ\mathcal{E}$ approximates pointwise $\mathcal{F}$ with error $\varepsilon$:
$ \textnormal{sup}_{\bm{v}\in\mathcal{V}}\|\hat{\mathcal{F}}(\bm{v})-\mathcal{F}(\bm{v})\|\leq \varepsilon$.
\end{theorem}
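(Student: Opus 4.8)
The plan is to prove universality in two stages — an exact linear encoding stage supplied by the bidirectional recurrence, followed by a quantitative nonlinear readout stage supplied by Barron's theorem — following the strategy of \cite{orvieto2023universality} but genuinely adapted to the merged forward/backward state. First I would fix a generic choice of the (distinct) eigenvalues in $\mathbf{\Lambda}_f,\mathbf{\Lambda}_b$ and of $\mathbf{B}_f,\mathbf{B}_b$, and observe from the unrolled recurrences (Eq.~\ref{eq_15}) and (Eq.~\ref{eq_17}) that the forward state $\overrightarrow{\bm{h}}_k$ is a fixed linear image of the past block $(\bm{u}_i)_{i\le k}$ while the backward state $\overleftarrow{\bm{h}}_k$ is a fixed linear image of the future block $(\bm{u}_i)_{i\ge k}$. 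The Vandermonde-type structure induced by distinct eigenvalues — the same diagonalizability-up-to-perturbation fact recalled before the eigendecomposition in the preliminaries — guarantees that each of these maps has full rank, so the pair $(\overrightarrow{\bm{h}}_k,\overleftarrow{\bm{h}}_k)$ \emph{jointly} and injectively encodes the entire encoded sequence $\bm{u}=\mathcal{E}(\bm{v})$, the forward half carrying the past and the backward half the future.

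Second I would show that the linear merge $\mathcal{H}:\mathbb{C}^n\times\mathbb{C}^n\to\mathbb{C}^n$ of Eq.~\ref{eq_11} can be chosen so that $\bm{v}\mapsto\hat{\bm{h}}_k$ stays injective on $\mathcal{V}$. This is the step that is genuinely new relative to the unidirectional LRU universality of \cite{orvieto2023universality}, since collapsing the $2n$-dimensional joint state back down to dimension $n$ could a priori destroy information. Here the hypothesis $n\ge\dim(\mathcal{V})$ is what rescues us: writing $\hat{\bm{h}}_k=W_f\overrightarrow{\bm{h}}_k+W_b\overleftarrow{\bm{h}}_k$ and composing with the injective joint encoding yields a linear map $\mathcal{V}\to\mathbb{C}^n$ whose domain has dimension at most $n$, so it is full rank for generic $(W_f,W_b)$; a measure-zero / density argument of the same flavor as the perturbation-to-diagonalizable fact then shows the injective merges form a dense, full-measure set. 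Consequently there exists a continuous $g_k$ with $\mathcal{F}_k(\bm{v})=g_k(\operatorname{Re}\hat{\bm{h}}_k)$ on the bounded (hence relatively compact) set $\mathcal{V}$; boundedness of $\bm{v}$ is used precisely to keep the image compact and $g_k$ uniformly continuous, and causality of $\mathcal{F}$ is entirely compatible, since a $\mathcal{F}_k$ depending only on the past is a fortiori a function of the full-sequence encoding $\hat{\bm{h}}_k$.

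Third I would invoke the Barron bound. Because each $\mathcal{F}_k$ is a Barron function by assumption and the encoding is linear, a linear change of variables transports the Barron integral and endows $g_k$ with a finite Barron constant $\mathcal{C}$; by \cite{barron1993universal,lee2017ability} a single-hidden-layer MLP of width $\mathcal{O}(\mathcal{C}^2/\varepsilon^2)$ then approximates $g_k$ to accuracy $\varepsilon/\sqrt{N}$ in sup norm on the compact image. Realizing all $N$ readouts jointly with the nonlinear MLP carrying the width-$D$ budget, and absorbing the final linear map $\mathcal{L}_3$ and the harmless skip term $\mathbf{C}\bm{u}_k$ into the last layer, gives total width $D\ge\mathcal{O}(nN/\varepsilon^2)$, with the factor $n$ entering as the input dimension of the readout and the factor $N$ from aggregating the $N$ output positions. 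Summing the per-position errors and taking $\sup_{\bm{v}\in\mathcal{V}}$ yields $\sup_{\bm{v}\in\mathcal{V}}\|\hat{\mathcal{F}}(\bm{v})-\mathcal{F}(\bm{v})\|\le\varepsilon$.

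I expect the main obstacle to be the second stage: rigorously certifying that the information-collapsing merge $\mathcal{H}$ preserves recoverability of $\mathcal{F}_k$, i.e., that generic merge weights keep the composite map injective on the $\dim(\mathcal{V})$-dimensional input space even though the output dimension is only $n$. A secondary technical point is controlling the passage to $\operatorname{Re}\hat{\bm{h}}_k$ for complex eigenvalues without losing injectivity, which I would handle by pairing each complex eigenvalue with its conjugate so that real and imaginary parts are jointly retained — exactly as the real-part readout convention adopted for $\mathcal{G}$ from \cite{orvieto2023resurrecting} intends.
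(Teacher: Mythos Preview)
Your two–stage strategy (lossless linear encoding followed by a Barron readout) matches the paper's, and your treatment of the merge layer $\mathcal{H}$ in stage two is actually \emph{more} careful than the paper's own proof: the paper simply writes the reconstruction as $\Theta_{f,k}+\Theta_{b,k}$ and plugs $\|\Theta_{f,k}+\Theta_{b,k}\|_2$ into its Lemma~\ref{lemma_2} without ever arguing that the merged map remains injective after the $2n\to n$ collapse. Your genericity argument for $(W_f,W_b)$ under the hypothesis $n\ge\dim(\mathcal{V})$ fills a hole the paper leaves open.

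However, your third stage has a genuine gap that the paper closes with a tool you never invoke. The MLP $\mathcal{G}$ in the architecture is \emph{time-independent}: the same network is applied to $\hat{\bm{h}}_k$ for every $k$, and it does not receive $k$ as an explicit input. You construct a separate target $g_k$ for each $k$ and then speak of ``realizing all $N$ readouts jointly'' by ``aggregating the $N$ output positions'', but a shared $\mathcal{G}$ cannot simply stack $N$ independent per-timestep approximators --- at time $k$ it sees only $\hat{\bm{h}}_k$ and must decide, from that alone, which $g_k$ to compute. The paper's device for this is its Theorem~\ref{theorem_3} (built on a harmonic-analysis extension result of \cite{tlas2022bump}): it interpolates the family $\{f_k\}_{k=1}^N$ into a \emph{single} Barron function $\hat{f}:\mathbb{R}^{p+1}\to\mathbb{R}$ with $\hat{f}(\cdot,k)=f_k$ and Barron constant $\hat{\mathcal{C}}=\mathcal{O}\bigl(\sum_k \mathcal{C}_{f_k}+\mathcal{C}'_{f_k}\bigr)=\mathcal{O}(N)$, and it is this single function that the shared MLP then approximates via Theorem~\ref{theorem_2}. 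That extension is where the factor $N$ in $D\ge\mathcal{O}(nN/\varepsilon^2)$ actually originates in the paper's argument --- through the Barron constant of the interpolant, not through per-position aggregation. Without this step (or a substitute showing that the combined map $\hat{\bm{h}}_k\mapsto\bm{y}_k$, viewed as one function on the union over $k$ of the time-$k$ images, has a controlled Barron constant), your stage three does not go through.
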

$\textnormal{dim}(\mathcal{V})$ in Theorem~\ref{theorem_1} can be simply set as $dN$. Due to the limit of the space, we provide the proof sketch here while deferring the proof to Appendix~\ref{additional_proof}. The proof is fairly non-trivial and technical. 

\textit{Proof sketch:}
The first step is to show that both forward and backward LRUs are lossless compressors, which means that the LRU outputs $\overrightarrow{\bm{h}}_k$ and $\overleftarrow{\bm{h}}_k$ are able to reconstruct the input $(\bm{v}_i)_{i=1}^k$, assuming $n$ is sufficiently large. If $\overrightarrow{\bm{h}}_k$ and $\overleftarrow{\bm{h}}_k$ can reconstruct the inputs, it is immediately obtained that $\hat{\bm{h}}_k$ is also able to do so. Another step is to ensure the nonlinear mapping MLP (even some other functions such as GLU~\cite{dauphin2017language}) on top of $\hat{\bm{h}}_k$ can reconstruct the group truth mapping $\mathcal{F}_k$, namely, $\mathcal{F}_k((\bm{v}_i)_{i=1}^k)\simeq \mathcal{L}_3\circ\mathcal{G}(\hat{\bm{h}}_k)$ (we ignore the skip connections in this context for simplicity), by assuming that the number of hidden MLP neurons $D$ satisfies the lower bound $D\geq \mathcal{O}(nN/\varepsilon^2)$. The second step of the proof also requires critical results from~\cite{barron1993universal} to show the universality of one hidden-layer MLP. Please look at Appendix~\ref{additional_proof} for the comprehensive steps.

Theorem~\ref{theorem_1} advocates an adequate approximation of BLUR to the real nonlinear function that maps an input sequence to an output sequence. It also implies the explicit relationship between the width of the nonlinear mapping $D$ and the hidden dimension $n$. This study focuses primarily on MLP due to its analysis maturity, while other nonlinear activation functions can also be used with adjusted constants.
Additionally, we do not make the assumption that the target mapping $\mathcal{F}$ is itself generated from an underlying dynamical system. This is different from previous works~\cite{li2005approximation,schafer2006recurrent,hardt2018gradient}, where an assumption has been imposed that $\mathcal{F}$ is induced from some unknown dynamical systems. Their obtained results usually resemble those in feed-forward neural networks. On the other hand, in our work, we consider general input-output relationships of temporal sequences and do not require the collected data to be sampled from special processes, such as the partially observed Markov process. We also notice that a recent work~\cite{li2022approximation} has performed a systematic study on the approximation property of linear RNNs but in the continuous time domain. They arrive at the similar approximation property for linear RNNs by leveraging functional theory and the classical Riesz-Markov-Kakutani representation theorem~\cite{yaremenko2023generalization}, different from the Universal Approximation Theorem by Barron~\cite{barron1993universal}. In the proof, they also require the mapping $\mathcal{F}$ to be continuous, linear, causal, regular, and time-homogeneous, which seemingly coincides with ours. Additionally, their recurrent weight matrix is not exactly diagonal, though its eigenvalues are assumed to have a negative real part to ensure stability. In this scenario, the updating of the hidden state still involves full matrix multiplication, resulting in a slower training speed.


\begin{figure}[htb!]
\centering
\begin{subfigure}
    \centering
    \includegraphics[width=\linewidth]{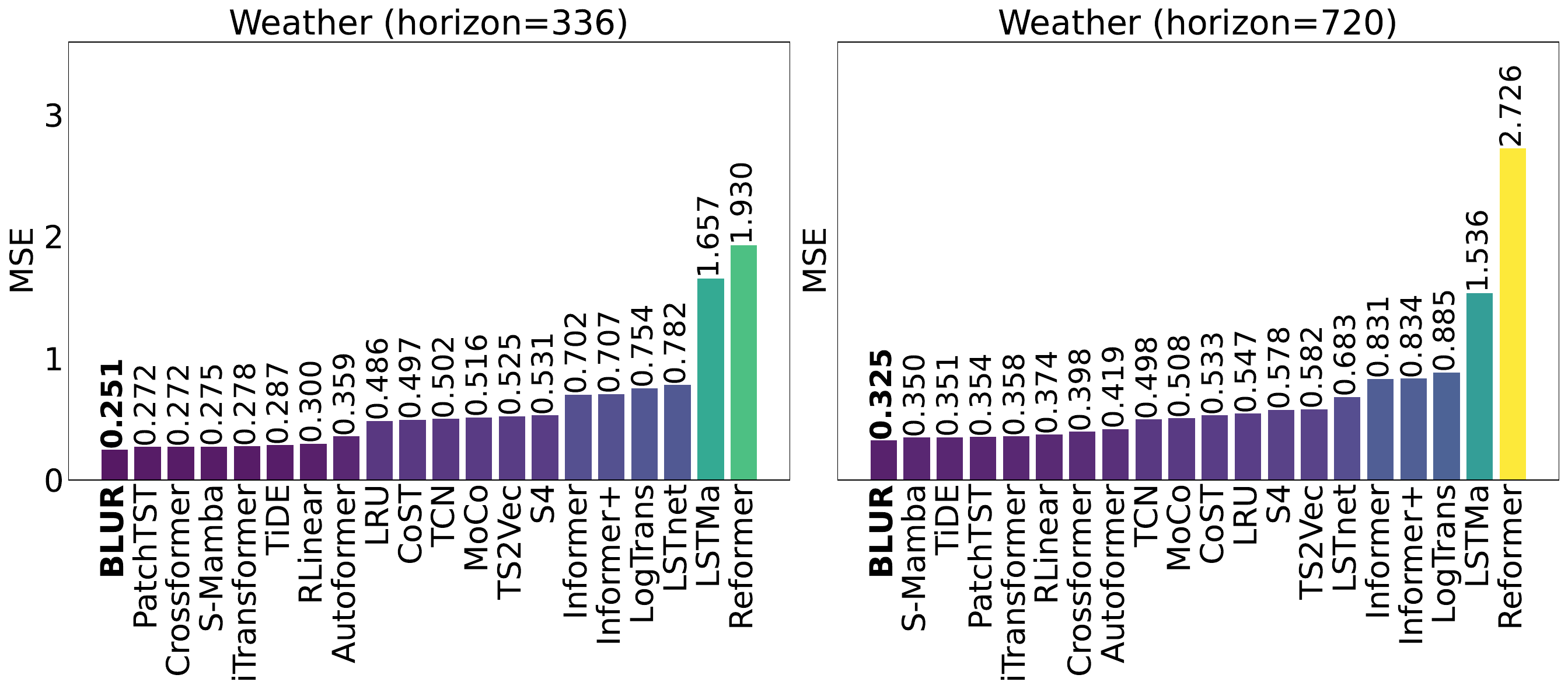}
    \captionsetup{skip=5pt} 
    \caption{MSE comparison with horizons 336 and 720 for Weather.}
    \label{fig:weather_comparison}
\end{subfigure}
\hfill
\vspace{0.1in}
\begin{subfigure}
    \centering
    \includegraphics[width=\linewidth]{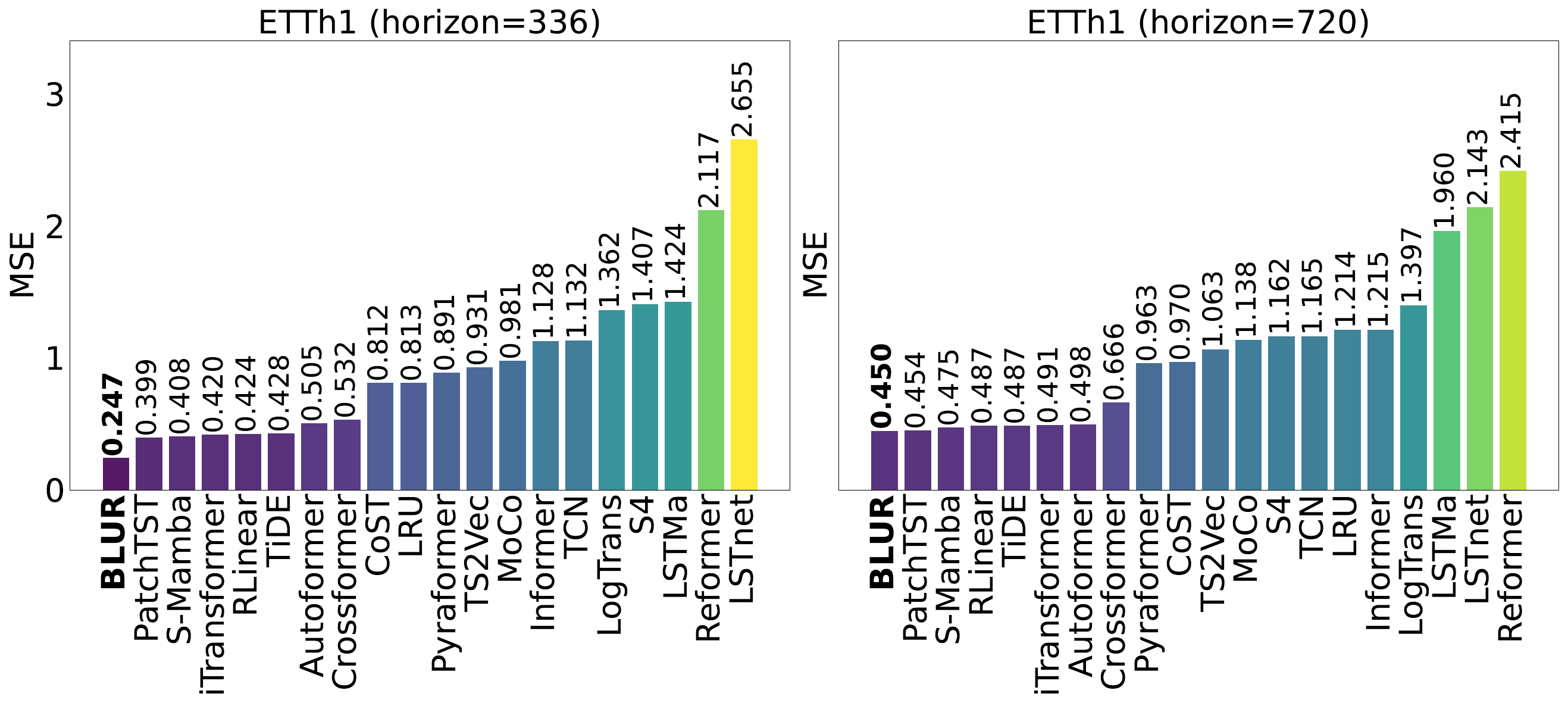}
    \captionsetup{skip=5pt} 
    \caption{MSE comparison with horizons 336 and 720 for ETTh$_1$.}
    \label{fig:etth1_comparison}
\end{subfigure}
\vspace{5pt}
\label{fig:weather_etth1_comparison}
\end{figure}

\section{Numerical Results}
\label{results}

In this section, we present empirical results for the validation of BLUR resorting to two categories of benchmark datasets, comprising sequential images and text and sensory timeseries. The first category includes ListOps~\cite{tay2020long}, Text~\cite{orvieto2023resurrecting}, sequential CIFAR, sequential MNIST, permuted sequential MNIST~\cite{smith2022simplified}. The first three are from the \textit{Long Range Arena} benchmark suite~\cite{tay2020long}. For timeseries, we directly utilize the same benchmark datasets listed in~\cite{zhou2021informer}, which have been popularly used for testing new sequencing models. In this study, we aim to evaluate BLUR on the datasets by comparing it to the recently proposed LRU as well as its precedents, such as S4 and S5. Our primary focus of the evaluation is to inspect if BLUR can benefit from the bidirectional mechanism. This is partially motivated by pairs of RNN vs. BRNN and LSTM vs. BLSTM 
Therefore, for the sequential image and text prediction task, we will have the LRU and S5 as baselines.
While for the timeseries prediction task, we will have an extensive comparison between LRU/BLUR with many baselines, including S4 and Informer. Note also that LRU has never been assessed on timeseries datasets. The evaluation for the suitability of LRU-type models on these datasets can be of independent interest and regarded as an empirical contribution. We refer interested readers to Appendix~\ref{models} and~\ref{datasets} for more details about models and datasets. To achieve fast parallel training, an algorithm called \textit{parallel scan}~\cite{heinsen2023parallelization} has been widely employed in many linear recurrent models. We also exploit it in this study and include some introduction in Appendix~\ref{parallel_scan}. All results are the average based on five random runs. Note also that for the timeseries prediction task, recent works~\cite{han2024softs,patro2024simba} primarily reported results with medium to long horizons (from 96 to 720), while this work complies with the earlier work~\cite{zhou2021informer} to have a more comprehensive spectrum in the horizon (from 24 to 720). We summarize references for all baselines in Appendix~\ref{references}.

\begin{figure}[h!]
    \centering
    \includegraphics[width=1\linewidth]{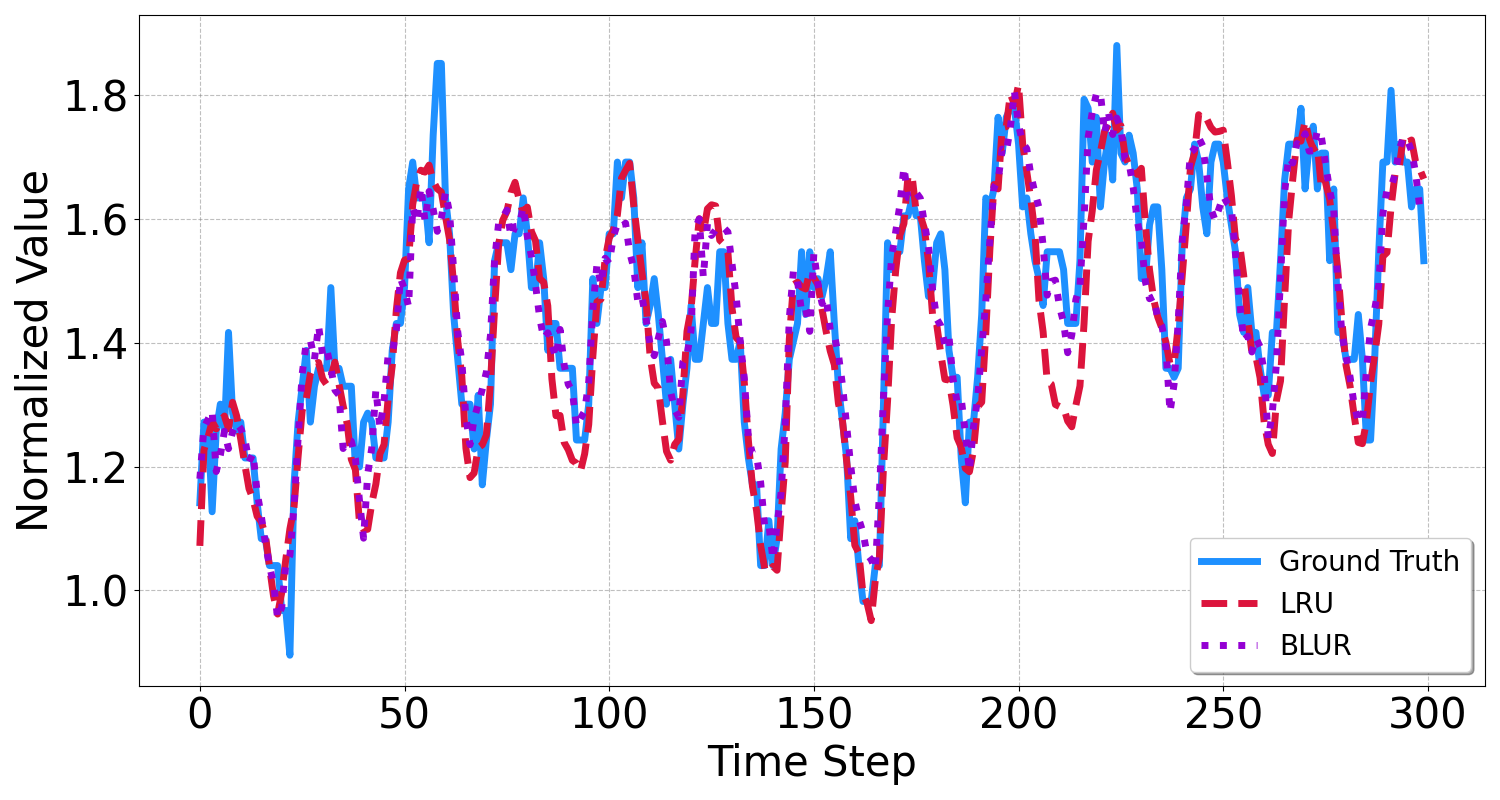}
    \caption{A snippet of comparison between ground truth and predictions of target variable by LRU/BLUR on Weather data with a horizon of 24.}
    \label{fig:ground_truth_prediction}
\end{figure}

\textbf{Sequential images and text.} 
In Table~\ref{table:seq_images}, the results are shown among LRU, S4, S5, and BLUR evaluated on multiple sequential images and text datasets. Values are shown with the metric of classification accuracy, i.e., predicting correct labels. For the parameter setting in model architectures, BLUR uses similar ones adopted in LRU. Overall, BLUR consistently outperforms LRU and S5, except for one task where it falls slightly behind S5. It surpasses S4 in most tasks, demonstrating that the bidirectional mechanism effectively captures the full context of the input sequence, leading to a significant improvement in prediction accuracy.
This necessitates the bidirectional mechanism in challenging sequence modeling tasks. We include the full table in Appendix~\ref{additional_results}.


\begin{table}[htp]
\caption{Performance comparison (\%) between LRU and BLUR on sequential image and text datasets.}
\vspace{-0.1in}
\begin{center}
\begin{tabular}{c c c c c c}
    \toprule
    \textbf{Method} & \textbf{sC} & \textbf{LO} & \textbf{TE} & \textbf{sM} & \textbf{psM}\\ \midrule
      LRU   &        89.0            & 54.8    & 86.7& 99.3& \textbf{98.1}\\
      S4*~\cite{gu2021efficiently}& \textbf{91.1}& \textbf{59.6}& 86.8& 90.9&94.2\\
      S5*~\cite{orvieto2023resurrecting} & 88.8 & 58.5& 86.2& 88.9&95.7\\
      \textbf{BLUR}   & 90.0  & 54.6                     &\textbf{87.3}    & \textbf{99.6}  & \textbf{98.1}       \\ 
      \bottomrule
      
\end{tabular}
\end{center}

\label{table:seq_images}
\caption*{\small sC: sCIFAR, LO: ListOps, TE: Text, sM: sMNIST, psM: psMNIST, * means the paper results.}
\end{table}

\textbf{Timeseries.} 
To thoroughly assess the capabilities of the proposed BLUR, we leverage four datasets with different prediction horizons ranging from short to long terms. We also select S4 and Informer as two baselines. Since the former is the outstanding representative of the linear RNN models and the latter is the delegate of the timeseries transformer model. Their results are included in the main text. To more comprehensively assess the performance of BLUR, we also compare it to many recently developed models over a few years in Appendix~\ref{additional_results}.
Different datasets have slightly different horizons. For ETTh and Weather datasets, the prediction horizons are \{1d, 2d, 7d, 14d, 30d, 40d\}, while ETTm \{6h, 12h, 24h, 72h, 168h\}.
Note that the input past sequence length for each scenario is the same as the prediction horizon. We also resort to two evaluation metrics, including MAE $=\frac{1}{N}\sum_{i=1}^N|\bm{y}-\hat{\bm{y}}_i|$ and MSE $=\frac{1}{N}\sum_{i=1}^N(\bm{y}-\hat{\bm{y}}_i)^2$.


Overall, as shown in Table~\ref{table:timeseries}, the proposed BLUR model significantly outperforms LRU, S4, and Informer. Particularly, compared to S4 and Informer, the model performance improvement from BLUR with the short and medium horizons is extremely remarkable while remaining superior or competitive in long horizons. Additionally, LRU itself also surpasses S4 and Informer in most scenarios. On the one hand, the results in Table~\ref{table:timeseries} suggest that \textit{LRU type of models excel in timeseries prediction tasks and are compelling in modeling long-term dependencies in sequencing tasks}. On the other hand, the comparison between BLUR and LRU strengthens again that predictions can benefit strikingly from the bidirectional mechanism that allows to take both past and future state information at a time step, thus leading to a more effective sequencing model. To compare BLUR with recent models, including other transformer variants and Mamba models, we present additional results in Figure~\ref{fig:weather_comparison} and Figure~\ref{fig:etth1_comparison}, which suggests the superiority of BLUR over other baselines with long horizons.
Figure~\ref{fig:ground_truth_prediction} depicts the ground truth and predicted outputs of $\textnormal{ETTm}_1$ with the horizon of 24 for LRU/BLUR. From the plot, we can observe that BLUR evidently exhibits better prediction capability (closer to ground truth), compared to LRU. We refer interested readers to Appendix~\ref{additional_results} for much more comparison, including BRNN/BLSTM and the more recently developed Mamba-type models.

\vspace{5pt}

\begin{table}[ht!]
\centering
\caption{Multivariate long sequence time-series forecasting results with MAE on three datasets (Four cases). Full results involving other baselines are in Appendix~\ref{additional_results}. H: horizon, B: BLUR, L: LRU, S: S4, and I: Informer.}
\begin{tabular}{|c|c|c|c|c|c|}
\hline
\textbf{Data} & \textbf{H} & \textbf{B} & \textbf{L} &\textbf{S} &\textbf{I}\\
\hline
\multirow{5}{*}{\rotatebox[origin=c]{90}{\textbf{ETTh$_1$}}} 
 & 24   &  \textbf{0.300} &  0.319  & 0.542 &0.549\\
 & 48   &  0.484 & \textbf{0.383}  &0.615 & 0.625\\
 & 168  & \textbf{ 0.508} &  0.523  &0.779 & 0.752\\
 & 336  & \textbf{0.336}  & 0.696   &0.910 & 0.873\\
 & 720  &  \textbf{0.536} & 0.880  &0.842 & 0.896\\
\hline
\multirow{5}{*}{\rotatebox[origin=c]{90}{\textbf{ETTh$_2$}}} 
 & 24   & \textbf{0.108} & 0.230  &0.736 & 0.665\\
 & 48   & \textbf{0.214}  & 0.275   &0.867 & 1.001\\
 & 168  & \textbf{0.803} & 1.030  &1.255 &1.515\\
 & 336  & \textbf{0.853} & 1.160   & 1.128 & 1.340\\
 & 720  & \textbf{0.640} &  2.050  &1.340 & 1.473\\
\hline
\multirow{5}{*}{\rotatebox[origin=c]{90}{\textbf{ETTm$_1$}}} 
 & 24   &  \textbf{0.121} &0.209   &0.487 & 0.369\\
 & 48   &  \textbf{0.220} &  0.288  & 0.565 & 0.503\\
 & 96   &  \textbf{0.337}  &  0.339  &0.649 & 0.614\\
 & 288   & 0.550 &  \textbf{0.465}  &0.674 & 0.786\\
 & 672  &  \textbf{0.644} & 0.672    &0.709 & 0.926\\
\hline
\multirow{5}{*}{\rotatebox[origin=c]{90}{\textbf{Weather}}}  
 & 24   & \textbf{0.186} & 0.238  &0.384 & 0.381\\ 
 & 48   & \textbf{0.232} & 0.290   &0.444 & 0.459\\
 & 168  &   \textbf{0.332}  & 0.413  &0.527 & 0.567\\
 & 336  &   \textbf{0.362} & 0.472  &0.539 & 0.620\\
 & 720  &   \textbf{0.441} & 0.528  &0.578 & 0.731\\
\hline
\multicolumn{2}{|c|}{Count} & 18 & 2 & 0 & 0\\
\hline
\end{tabular}
\label{table:timeseries}
\end{table}


\begin{figure}[h!]
    \centering
    \includegraphics[width=0.9\linewidth]{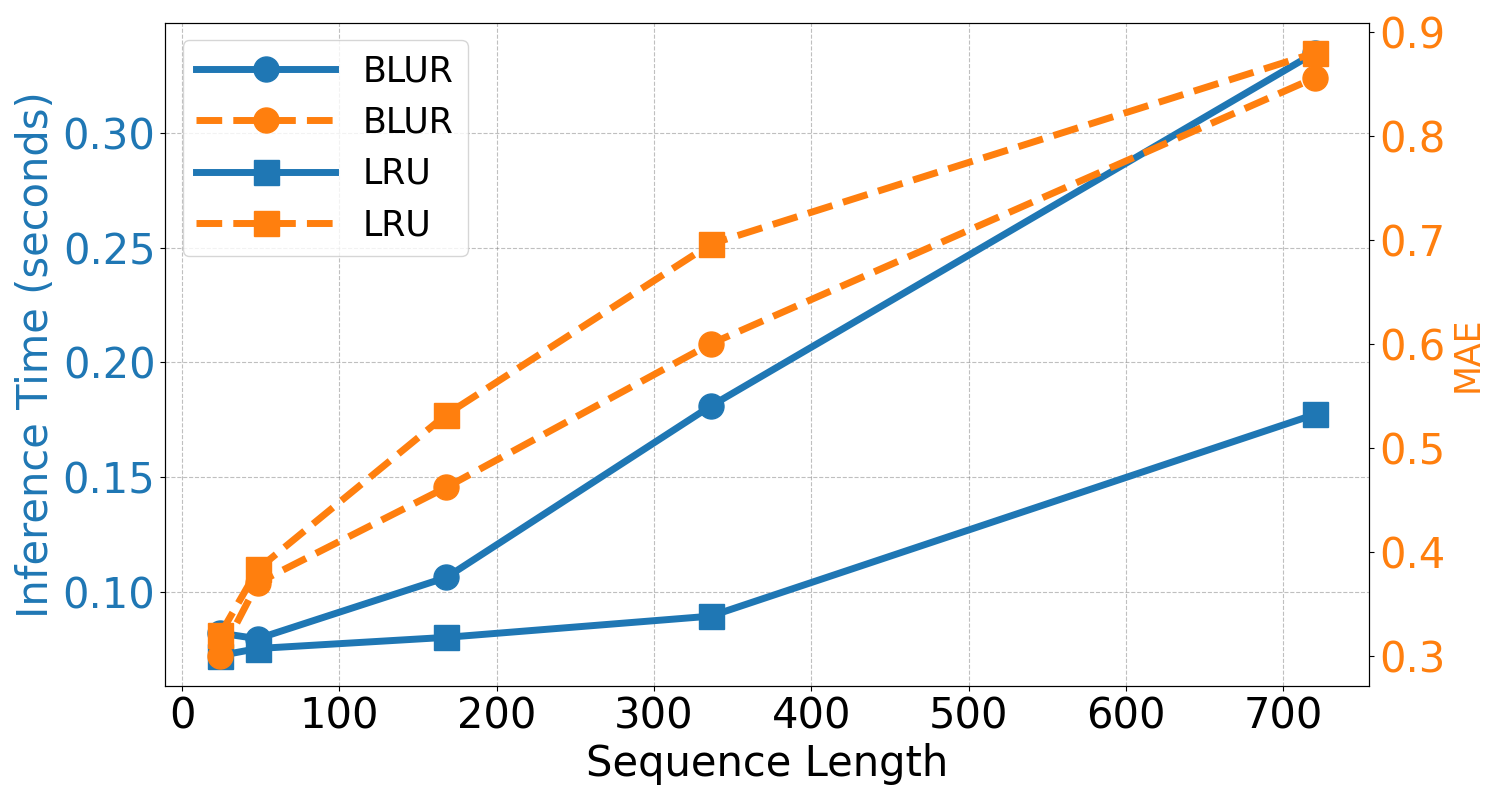}
    \caption{Inference time costs and test MAE along the sequence length with ETTh$_1$.}
    \label{fig:inferece_speed}
    \vspace{10pt} 
\end{figure}

\textbf{Inference time and prediction horizon.}
We also inspect the inference time and the MAE score in Figure~\ref{fig:inferece_speed} along with the prediction length for LRU and BLUR. It is anticipated that both inference time and MAE will increase if the horizon enlarges. The plot also implies that BLUR gains performance improvement, with the compromise of inference time, compared to LRU. To extensively investigate the impact of prediction horizon on the metrics for different types of models, Figure~\ref{fig:horizon_plot} shows the results for linear recurrence (BLUR, LRU, S4), nonlinear recurrence (LSTMa and BLSTM), CNN (TCN), and transformer (Autoformer). One immediate observation is that the longer horizon results can result in larger MSE and MAE due to the negative effect of compounding error. In general, BLUE still surpasses baselines, underscoring the significance of LRU-type models in timeseries prediction tasks.

\begin{figure}[ht!]
\centering
\includegraphics[width=1\linewidth]{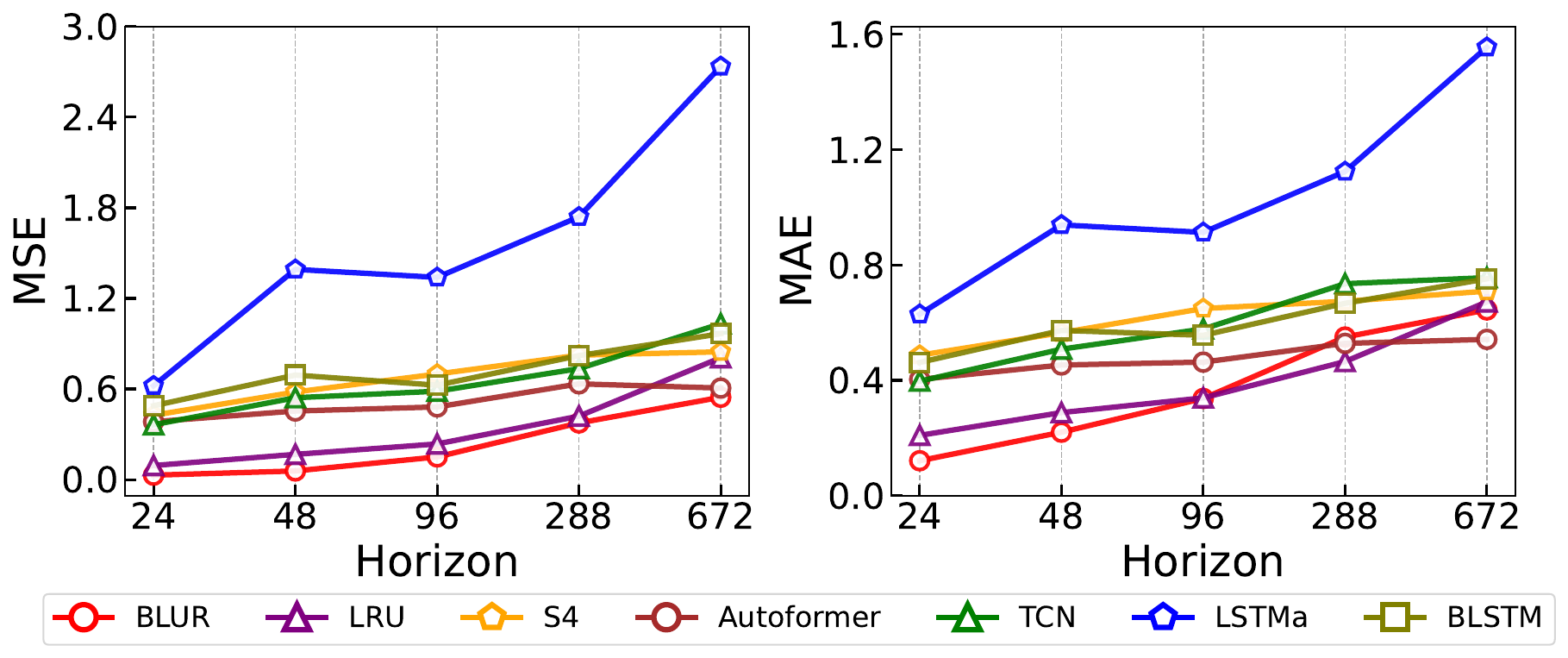}
\caption{MSE and MAE comparison for different models along different horizons for ETTm$_1$ dataset.}
\label{fig:horizon_plot}
\end{figure}

\section{Conclusion}
\label{conclusion}
In this work, we propose BLUR, a novel bidirectional variant of the recently proposed LRU model that leads to fast parallel training in modeling long-term dependencies. We theoretically analyze the stability of BLUR and establish its approximation properties of capturing temporal patterns in sequences. To validate the proposed method, we have used multiple popular benchmark datasets in sequential images and text and timeseries, comparing results between BLUR and baselines. Numerical results empirically show that BLUR has superiority in computational performance and efficiency, over baselines. Particularly, its impressive parallel training capability inherited from LRU demonstrates its great potential when deploying it on long sequence data. We have also found that LRU-type models perform significantly better than transformers on timeseries predictions. Beyond the current results, a few future research directions include investigating the decay rate of the approximation error and testing BLUR in modeling long-term dependencies in other domains, such as NLP and biology.

\nocite{langley00}
\bibliographystyle{elsarticle-num} 
\bibliography{blur}

\begin{thebibliography}{100}
\expandafter\ifx\csname url\endcsname\relax
  \def\url#1{\texttt{#1}}\fi
\expandafter\ifx\csname urlprefix\endcsname\relax\def\urlprefix{URL }\fi
\expandafter\ifx\csname href\endcsname\relax
  \def\href#1#2{#2} \def\path#1{#1}\fi

\bibitem{he2018layer}
T.~He, X.~Tan, Y.~Xia, D.~He, T.~Qin, Z.~Chen, T.-Y. Liu, Layer-wise coordination between encoder and decoder for neural machine translation, Advances in Neural Information Processing Systems 31 (2018).

\bibitem{wu2016google}
Y.~Wu, Google’s neural machine translation system: Bridging the gap between human and machine translation, arXiv preprint arXiv:1609.08144 (2016).

\bibitem{zhou2023expanding}
H.~Zhou, J.~Li, S.~Zhang, S.~Zhang, M.~Yan, H.~Xiong, Expanding the prediction capacity in long sequence time-series forecasting, Artificial Intelligence 318 (2023) 103886.

\bibitem{hua2019deep}
Y.~Hua, Z.~Zhao, R.~Li, X.~Chen, Z.~Liu, H.~Zhang, Deep learning with long short-term memory for time series prediction, IEEE Communications Magazine 57~(6) (2019) 114--119.

\bibitem{zhou2021informer}
H.~Zhou, S.~Zhang, J.~Peng, S.~Zhang, J.~Li, H.~Xiong, W.~Zhang, Informer: Beyond efficient transformer for long sequence time-series forecasting, in: Proceedings of the AAAI conference on artificial intelligence, Vol.~35, 2021, pp. 11106--11115.

\bibitem{stollenga2015parallel}
M.~F. Stollenga, W.~Byeon, M.~Liwicki, J.~Schmidhuber, Parallel multi-dimensional lstm, with application to fast biomedical volumetric image segmentation, Advances in neural information processing systems 28 (2015).

\bibitem{tatsunami2022sequencer}
Y.~Tatsunami, M.~Taki, Sequencer: Deep lstm for image classification, Advances in Neural Information Processing Systems 35 (2022) 38204--38217.

\bibitem{liu2017bidirectional}
Q.~Liu, F.~Zhou, R.~Hang, X.~Yuan, Bidirectional-convolutional lstm based spectral-spatial feature learning for hyperspectral image classification, Remote Sensing 9~(12) (2017) 1330.

\bibitem{sarkar2014sensor}
S.~Sarkar, S.~Sarkar, N.~Virani, A.~Ray, M.~Yasar, Sensor fusion for fault detection and classification in distributed physical processes, Frontiers in Robotics and AI 1 (2014) 16.

\bibitem{ahamed2025tscmamba}
M.~A. Ahamed, Q.~Cheng, Tscmamba: Mamba meets multi-view learning for time series classification, Information Fusion (2025) 103079.

\bibitem{graves2013hybrid}
A.~Graves, N.~Jaitly, A.-r. Mohamed, Hybrid speech recognition with deep bidirectional lstm, in: 2013 IEEE workshop on automatic speech recognition and understanding, IEEE, 2013, pp. 273--278.

\bibitem{kheddar2024automatic}
H.~Kheddar, M.~Hemis, Y.~Himeur, Automatic speech recognition using advanced deep learning approaches: A survey, Information Fusion (2024) 102422.

\bibitem{mehrish2023review}
A.~Mehrish, N.~Majumder, R.~Bharadwaj, R.~Mihalcea, S.~Poria, A review of deep learning techniques for speech processing, Information Fusion 99 (2023) 101869.

\bibitem{hermans2013training}
M.~Hermans, B.~Schrauwen, Training and analysing deep recurrent neural networks, Advances in neural information processing systems 26 (2013).

\bibitem{lin2022survey}
T.~Lin, Y.~Wang, X.~Liu, X.~Qiu, A survey of transformers, AI open 3 (2022) 111--132.

\bibitem{ribeiro2020beyond}
A.~H. Ribeiro, K.~Tiels, L.~A. Aguirre, T.~Sch{\"o}n, Beyond exploding and vanishing gradients: analysing rnn training using attractors and smoothness, in: International conference on artificial intelligence and statistics, PMLR, 2020, pp. 2370--2380.

\bibitem{yu2019review}
Y.~Yu, X.~Si, C.~Hu, J.~Zhang, A review of recurrent neural networks: Lstm cells and network architectures, Neural computation 31~(7) (2019) 1235--1270.

\bibitem{dey2017gate}
R.~Dey, F.~M. Salem, Gate-variants of gated recurrent unit (gru) neural networks, in: 2017 IEEE 60th international midwest symposium on circuits and systems (MWSCAS), IEEE, 2017, pp. 1597--1600.

\bibitem{vaswani2017attention}
A.~Vaswani, Attention is all you need, Advances in Neural Information Processing Systems (2017).

\bibitem{martin2017parallelizing}
E.~Martin, C.~Cundy, Parallelizing linear recurrent neural nets over sequence length, arXiv preprint arXiv:1709.04057 (2017).

\bibitem{gu2021efficiently}
A.~Gu, K.~Goel, C.~R{\'e}, Efficiently modeling long sequences with structured state spaces, arXiv preprint arXiv:2111.00396 (2021).

\bibitem{gu2022parameterization}
A.~Gu, K.~Goel, A.~Gupta, C.~R{\'e}, On the parameterization and initialization of diagonal state space models, Advances in Neural Information Processing Systems 35 (2022) 35971--35983.

\bibitem{gupta2022diagonal}
A.~Gupta, A.~Gu, J.~Berant, Diagonal state spaces are as effective as structured state spaces, Advances in Neural Information Processing Systems 35 (2022) 22982--22994.

\bibitem{smith2022simplified}
J.~T. Smith, A.~Warrington, S.~W. Linderman, Simplified state space layers for sequence modeling, arXiv preprint arXiv:2208.04933 (2022).

\bibitem{gu2023mamba}
A.~Gu, T.~Dao, Mamba: Linear-time sequence modeling with selective state spaces, arXiv preprint arXiv:2312.00752 (2023).

\bibitem{mehta2022long}
H.~Mehta, A.~Gupta, A.~Cutkosky, B.~Neyshabur, Long range language modeling via gated state spaces, arXiv preprint arXiv:2206.13947 (2022).

\bibitem{peng2023rwkv}
B.~Peng, E.~Alcaide, Q.~Anthony, A.~Albalak, S.~Arcadinho, S.~Biderman, H.~Cao, X.~Cheng, M.~Chung, M.~Grella, et~al., Rwkv: Reinventing rnns for the transformer era, arXiv preprint arXiv:2305.13048 (2023).

\bibitem{wang2022pretraining}
J.~Wang, J.~N. Yan, A.~Gu, A.~M. Rush, Pretraining without attention, arXiv preprint arXiv:2212.10544 (2022).

\bibitem{tan2022multimodal}
S.~Y. Tan, M.~Jacoby, H.~Saha, A.~Florita, G.~Henze, S.~Sarkar, Multimodal sensor fusion framework for residential building occupancy detection, Energy and buildings 258 (2022) 111828.

\bibitem{orvieto2023resurrecting}
A.~Orvieto, S.~L. Smith, A.~Gu, A.~Fernando, C.~Gulcehre, R.~Pascanu, S.~De, Resurrecting recurrent neural networks for long sequences, in: International Conference on Machine Learning, PMLR, 2023, pp. 26670--26698.

\bibitem{wen2022transformers}
Q.~Wen, T.~Zhou, C.~Zhang, W.~Chen, Z.~Ma, J.~Yan, L.~Sun, Transformers in time series: A survey, arXiv preprint arXiv:2202.07125 (2022).

\bibitem{zeng2023transformers}
A.~Zeng, M.~Chen, L.~Zhang, Q.~Xu, Are transformers effective for time series forecasting?, in: Proceedings of the AAAI conference on artificial intelligence, Vol.~37, 2023, pp. 11121--11128.

\bibitem{wang2024spatiotemporal}
Z.~Wang, Y.~Wang, F.~Jia, F.~Zhang, N.~Klimenko, L.~Wang, Z.~He, Z.~Huang, Y.~Liu, Spatiotemporal fusion transformer for large-scale traffic forecasting, Information Fusion 107 (2024) 102293.

\bibitem{ahmadi2020long}
A.~Ahmadi, M.~Nabipour, B.~Mohammadi-Ivatloo, A.~M. Amani, S.~Rho, M.~J. Piran, Long-term wind power forecasting using tree-based learning algorithms, IEEE Access 8 (2020) 151511--151522.

\bibitem{shepherd2014atmospheric}
T.~G. Shepherd, Atmospheric circulation as a source of uncertainty in climate change projections, Nature Geoscience 7~(10) (2014) 703--708.

\bibitem{sinha2021estimating}
A.~Sinha, R.~Abernathey, Estimating ocean surface currents with machine learning, Frontiers in Marine Science 8 (2021) 672477.

\bibitem{schuster1997bidirectional}
M.~Schuster, K.~K. Paliwal, Bidirectional recurrent neural networks, IEEE transactions on Signal Processing 45~(11) (1997) 2673--2681.

\bibitem{graves2013speech}
A.~Graves, A.-r. Mohamed, G.~Hinton, Speech recognition with deep recurrent neural networks, in: 2013 IEEE international conference on acoustics, speech and signal processing, Ieee, 2013, pp. 6645--6649.

\bibitem{hewamalage2021recurrent}
H.~Hewamalage, C.~Bergmeir, K.~Bandara, Recurrent neural networks for time series forecasting: Current status and future directions, International Journal of Forecasting 37~(1) (2021) 388--427.

\bibitem{bai2018empirical}
S.~Bai, J.~Z. Kolter, V.~Koltun, An empirical evaluation of generic convolutional and recurrent networks for sequence modeling, arXiv preprint arXiv:1803.01271 (2018).

\bibitem{kitaev2020reformer}
N.~Kitaev, {\L}.~Kaiser, A.~Levskaya, Reformer: The efficient transformer, arXiv preprint arXiv:2001.04451 (2020).

\bibitem{han2024softs}
L.~Han, X.-Y. Chen, H.-J. Ye, D.-C. Zhan, Softs: Efficient multivariate time series forecasting with series-core fusion, arXiv preprint arXiv:2404.14197 (2024).

\bibitem{patro2024simba}
B.~N. Patro, V.~S. Agneeswaran, Simba: Simplified mamba-based architecture for vision and multivariate time series, arXiv preprint arXiv:2403.15360 (2024).

\bibitem{wang2025mamba}
Z.~Wang, F.~Kong, S.~Feng, M.~Wang, X.~Yang, H.~Zhao, D.~Wang, Y.~Zhang, Is mamba effective for time series forecasting?, Neurocomputing 619 (2025) 129178.

\bibitem{lei2017simple}
T.~Lei, Y.~Zhang, S.~I. Wang, H.~Dai, Y.~Artzi, Simple recurrent units for highly parallelizable recurrence, arXiv preprint arXiv:1709.02755 (2017).

\bibitem{balduzzi2016strongly}
D.~Balduzzi, M.~Ghifary, Strongly-typed recurrent neural networks, in: International Conference on Machine Learning, PMLR, 2016, pp. 1292--1300.

\bibitem{gu2021combining}
A.~Gu, I.~Johnson, K.~Goel, K.~Saab, T.~Dao, A.~Rudra, C.~R{\'e}, Combining recurrent, convolutional, and continuous-time models with linear state space layers, Advances in neural information processing systems 34 (2021) 572--585.

\bibitem{orvieto2023universality}
A.~Orvieto, S.~De, C.~Gulcehre, R.~Pascanu, S.~L. Smith, On the universality of linear recurrences followed by nonlinear projections, arXiv preprint arXiv:2307.11888 (2023).

\bibitem{cirone2024theoretical}
N.~M. Cirone, A.~Orvieto, B.~Walker, C.~Salvi, T.~Lyons, Theoretical foundations of deep selective state-space models, arXiv preprint arXiv:2402.19047 (2024).

\bibitem{scarselli1998universal}
F.~Scarselli, A.~C. Tsoi, Universal approximation using feedforward neural networks: A survey of some existing methods, and some new results, Neural networks 11~(1) (1998) 15--37.

\bibitem{qin2024hierarchically}
Z.~Qin, S.~Yang, Y.~Zhong, Hierarchically gated recurrent neural network for sequence modeling, Advances in Neural Information Processing Systems 36 (2024).

\bibitem{feng2024were}
L.~Feng, F.~Tung, M.~O. Ahmed, Y.~Benjio, H.~Hajimirsadeghi, Were rnns all we needed?, arXiv preprint arXiv:2410.01201 (2024).

\bibitem{arisoy2015bidirectional}
E.~Arisoy, A.~Sethy, B.~Ramabhadran, S.~Chen, Bidirectional recurrent neural network language models for automatic speech recognition, in: 2015 IEEE International Conference on Acoustics, Speech and Signal Processing (ICASSP), IEEE, 2015, pp. 5421--5425.

\bibitem{chadha2020bidirectional}
G.~S. Chadha, A.~Panambilly, A.~Schwung, S.~X. Ding, Bidirectional deep recurrent neural networks for process fault classification, ISA transactions 106 (2020) 330--342.

\bibitem{grisoni2020bidirectional}
F.~Grisoni, M.~Moret, R.~Lingwood, G.~Schneider, Bidirectional molecule generation with recurrent neural networks, Journal of chemical information and modeling 60~(3) (2020) 1175--1183.

\bibitem{turek2020approximating}
J.~Turek, S.~Jain, V.~Vo, M.~Capot{\u{a}}, A.~Huth, T.~Willke, Approximating stacked and bidirectional recurrent architectures with the delayed recurrent neural network, in: International Conference on Machine Learning, PMLR, 2020, pp. 9648--9658.

\bibitem{huang2015bidirectional}
Z.~Huang, W.~Xu, K.~Yu, Bidirectional lstm-crf models for sequence tagging, arXiv preprint arXiv:1508.01991 (2015).

\bibitem{lynn2019deep}
H.~M. Lynn, S.~B. Pan, P.~Kim, A deep bidirectional gru network model for biometric electrocardiogram classification based on recurrent neural networks, IEEE Access 7 (2019) 145395--145405.

\bibitem{liu2022bidirectional}
Y.~Liu, Z.~Song, X.~Xu, W.~Rafique, X.~Zhang, J.~Shen, M.~R. Khosravi, L.~Qi, Bidirectional gru networks-based next poi category prediction for healthcare, International Journal of Intelligent Systems 37~(7) (2022) 4020--4040.

\bibitem{xie2019image}
C.~Xie, S.~Liu, C.~Li, M.-M. Cheng, W.~Zuo, X.~Liu, S.~Wen, E.~Ding, Image inpainting with learnable bidirectional attention maps, in: Proceedings of the IEEE/CVF international conference on computer vision, 2019, pp. 8858--8867.

\bibitem{liu2019bidirectional}
C.~Liu, H.~Xie, Z.~Zha, L.~Yu, Z.~Chen, Y.~Zhang, Bidirectional attention-recognition model for fine-grained object classification, IEEE Transactions on Multimedia 22~(7) (2019) 1785--1795.

\bibitem{liu2019focus}
C.~Liu, Z.~Mao, A.-A. Liu, T.~Zhang, B.~Wang, Y.~Zhang, Focus your attention: A bidirectional focal attention network for image-text matching, in: Proceedings of the 27th ACM international conference on multimedia, 2019, pp. 3--11.

\bibitem{wang2023banet}
S.-y. Wang, Z.~Qu, C.-j. Li, L.-y. Gao, Banet: Small and multi-object detection with a bidirectional attention network for traffic scenes, Engineering Applications of Artificial Intelligence 117 (2023) 105504.

\bibitem{devlin2018bert}
J.~Devlin, Bert: Pre-training of deep bidirectional transformers for language understanding, arXiv preprint arXiv:1810.04805 (2018).

\bibitem{sun2019learning}
C.~Sun, F.~Baradel, K.~Murphy, C.~Schmid, Learning video representations using contrastive bidirectional transformer, arXiv preprint arXiv:1906.05743 (2019).

\bibitem{sun2019bert4rec}
F.~Sun, J.~Liu, J.~Wu, C.~Pei, X.~Lin, W.~Ou, P.~Jiang, Bert4rec: Sequential recommendation with bidirectional encoder representations from transformer, in: Proceedings of the 28th ACM international conference on information and knowledge management, 2019, pp. 1441--1450.

\bibitem{yue2020bert4nilm}
Z.~Yue, C.~R. Witzig, D.~Jorde, H.-A. Jacobsen, Bert4nilm: A bidirectional transformer model for non-intrusive load monitoring, in: Proceedings of the 5th International Workshop on Non-Intrusive Load Monitoring, 2020, pp. 89--93.

\bibitem{zhang2019hibert}
X.~Zhang, F.~Wei, M.~Zhou, Hibert: Document level pre-training of hierarchical bidirectional transformers for document summarization, arXiv preprint arXiv:1905.06566 (2019).

\bibitem{lee2017ability}
H.~Lee, R.~Ge, T.~Ma, A.~Risteski, S.~Arora, On the ability of neural nets to express distributions, in: Conference on Learning Theory, PMLR, 2017, pp. 1271--1296.

\bibitem{horn2012matrix}
R.~A. Horn, C.~R. Johnson, Matrix analysis, Cambridge university press, 2012.

\bibitem{blelloch1990prefix}
G.~E. Blelloch, Prefix sums and their applications (1990).

\bibitem{schlag2021linear}
I.~Schlag, K.~Irie, J.~Schmidhuber, Linear transformers are secretly fast weight programmers, in: International Conference on Machine Learning, PMLR, 2021, pp. 9355--9366.

\bibitem{ma2017dipole}
F.~Ma, R.~Chitta, J.~Zhou, Q.~You, T.~Sun, J.~Gao, Dipole: Diagnosis prediction in healthcare via attention-based bidirectional recurrent neural networks, in: Proceedings of the 23rd ACM SIGKDD international conference on knowledge discovery and data mining, 2017, pp. 1903--1911.

\bibitem{barron1993universal}
A.~R. Barron, Universal approximation bounds for superpositions of a sigmoidal function, IEEE Transactions on Information theory 39~(3) (1993) 930--945.

\bibitem{dauphin2017language}
Y.~N. Dauphin, A.~Fan, M.~Auli, D.~Grangier, Language modeling with gated convolutional networks, in: International conference on machine learning, PMLR, 2017, pp. 933--941.

\bibitem{li2005approximation}
X.-D. Li, J.~K. Ho, T.~W. Chow, Approximation of dynamical time-variant systems by continuous-time recurrent neural networks, IEEE Transactions on Circuits and Systems II: Express Briefs 52~(10) (2005) 656--660.

\bibitem{schafer2006recurrent}
A.~M. Sch{\"a}fer, H.~G. Zimmermann, Recurrent neural networks are universal approximators, in: Artificial Neural Networks--ICANN 2006: 16th International Conference, Athens, Greece, September 10-14, 2006. Proceedings, Part I 16, Springer, 2006, pp. 632--640.

\bibitem{hardt2018gradient}
M.~Hardt, T.~Ma, B.~Recht, Gradient descent learns linear dynamical systems, Journal of Machine Learning Research 19~(29) (2018) 1--44.

\bibitem{li2022approximation}
Z.~Li, J.~Han, E.~Weinan, Q.~Li, Approximation and optimization theory for linear continuous-time recurrent neural networks, Journal of Machine Learning Research 23~(42) (2022) 1--85.

\bibitem{yaremenko2023generalization}
M.~I. Yaremenko, Generalization of the riesz-markov-kakutani representation theorem and weak spectral families, Communications in Nonlinear Analysis 11~(2) (2023) 1--10.

\bibitem{tay2020long}
Y.~Tay, M.~Dehghani, S.~Abnar, Y.~Shen, D.~Bahri, P.~Pham, J.~Rao, L.~Yang, S.~Ruder, D.~Metzler, Long range arena: A benchmark for efficient transformers, arXiv preprint arXiv:2011.04006 (2020).

\bibitem{heinsen2023parallelization}
F.~A. Heinsen, Parallelization of an ubiquitous sequential computation, arXiv preprint arXiv:2311.06281 (2023).

\bibitem{nie2022logtrans}
X.~Nie, X.~Zhou, Z.~Li, L.~Wang, X.~Lin, T.~Tong, Logtrans: Providing efficient local-global fusion with transformer and cnn parallel network for biomedical image segmentation, in: 2022 IEEE 24th Int Conf on High Performance Computing \& Communications; 8th Int Conf on Data Science \& Systems; 20th Int Conf on Smart City; 8th Int Conf on Dependability in Sensor, Cloud \& Big Data Systems \& Application (HPCC/DSS/SmartCity/DependSys), IEEE, 2022, pp. 769--776.

\bibitem{zhou2022fedformer}
T.~Zhou, Z.~Ma, Q.~Wen, X.~Wang, L.~Sun, R.~Jin, Fedformer: Frequency enhanced decomposed transformer for long-term series forecasting, in: International conference on machine learning, PMLR, 2022, pp. 27268--27286.

\bibitem{wu2021autoformer}
H.~Wu, J.~Xu, J.~Wang, M.~Long, Autoformer: Decomposition transformers with auto-correlation for long-term series forecasting, Advances in neural information processing systems 34 (2021) 22419--22430.

\bibitem{nie2022time}
Y.~Nie, N.~H. Nguyen, P.~Sinthong, J.~Kalagnanam, A time series is worth 64 words: Long-term forecasting with transformers, arXiv preprint arXiv:2211.14730 (2022).

\bibitem{liu2022pyraformer}
S.~Liu, H.~Yu, C.~Liao, J.~Li, W.~Lin, A.~X. Liu, S.~Dustdar, Pyraformer: Low-complexity pyramidal attention for long-range time series modeling and forecasting, in: \# PLACEHOLDER\_PARENT\_METADATA\_VALUE\#, 2022.

\bibitem{lin2023segrnn}
S.~Lin, W.~Lin, W.~Wu, F.~Zhao, R.~Mo, H.~Zhang, Segrnn: Segment recurrent neural network for long-term time series forecasting, arXiv preprint arXiv:2308.11200 (2023).

\bibitem{wu2022timesnet}
H.~Wu, T.~Hu, Y.~Liu, H.~Zhou, J.~Wang, M.~Long, Timesnet: Temporal 2d-variation modeling for general time series analysis, arXiv preprint arXiv:2210.02186 (2022).

\bibitem{wang2023micn}
H.~Wang, J.~Peng, F.~Huang, J.~Wang, J.~Chen, Y.~Xiao, Micn: Multi-scale local and global context modeling for long-term series forecasting, in: The eleventh international conference on learning representations, 2023.

\bibitem{zhang2023crossformer}
Y.~Zhang, J.~Yan, Crossformer: Transformer utilizing cross-dimension dependency for multivariate time series forecasting, in: The eleventh international conference on learning representations, 2023.

\bibitem{liu2023itransformer}
Y.~Liu, T.~Hu, H.~Zhang, H.~Wu, S.~Wang, L.~Ma, M.~Long, itransformer: Inverted transformers are effective for time series forecasting, arXiv preprint arXiv:2310.06625 (2023).

\bibitem{das2023long}
A.~Das, W.~Kong, A.~Leach, S.~Mathur, R.~Sen, R.~Yu, Long-term forecasting with tide: Time-series dense encoder, arXiv preprint arXiv:2304.08424 (2023).

\bibitem{li2023revisiting}
Z.~Li, S.~Qi, Y.~Li, Z.~Xu, Revisiting long-term time series forecasting: An investigation on linear mapping, arXiv preprint arXiv:2305.10721 (2023).

\bibitem{woo2022cost}
G.~Woo, C.~Liu, D.~Sahoo, A.~Kumar, S.~Hoi, Cost: Contrastive learning of disentangled seasonal-trend representations for time series forecasting, arXiv preprint arXiv:2202.01575 (2022).

\bibitem{he2020momentum}
K.~He, H.~Fan, Y.~Wu, S.~Xie, R.~Girshick, Momentum contrast for unsupervised visual representation learning, in: Proceedings of the IEEE/CVF conference on computer vision and pattern recognition, 2020, pp. 9729--9738.

\bibitem{yue2022ts2vec}
Z.~Yue, Y.~Wang, J.~Duan, T.~Yang, C.~Huang, Y.~Tong, B.~Xu, Ts2vec: Towards universal representation of time series, in: Proceedings of the AAAI Conference on Artificial Intelligence, Vol.~36, 2022, pp. 8980--8987.

\bibitem{lai2018modeling}
G.~Lai, W.-C. Chang, Y.~Yang, H.~Liu, Modeling long-and short-term temporal patterns with deep neural networks, in: The 41st international ACM SIGIR conference on research \& development in information retrieval, 2018, pp. 95--104.

\bibitem{bahdanau2014neural}
D.~Bahdanau, Neural machine translation by jointly learning to align and translate, arXiv preprint arXiv:1409.0473 (2014).

\bibitem{tonekaboni2021unsupervised}
S.~Tonekaboni, D.~Eytan, A.~Goldenberg, Unsupervised representation learning for time series with temporal neighborhood coding, arXiv preprint arXiv:2106.00750 (2021).

\bibitem{li2024cmmamba}
Q.~Li, J.~Qin, D.~Cui, D.~Sun, D.~Wang, Cmmamba: channel mixing mamba for time series forecasting, Journal of Big Data 11~(1) (2024) 153.

\bibitem{liang2024bi}
A.~Liang, X.~Jiang, Y.~Sun, C.~Lu, Bi-mamba4ts: Bidirectional mamba for time series forecasting, arXiv preprint arXiv:2404.15772 (2024).

\bibitem{wu2024umambatsf}
L.~Wu, W.~Pei, J.~Jiao, Q.~Zhang, Umambatsf: A u-shaped multi-scale long-term time series forecasting method using mamba, arXiv preprint arXiv:2410.11278 (2024).

\bibitem{ahamed2024timemachine}
M.~A. Ahamed, Q.~Cheng, Timemachine: A time series is worth 4 mambas for long-term forecasting, arXiv preprint arXiv:2403.09898 (2024).

\bibitem{xu2024integrating}
X.~Xu, Y.~Liang, B.~Huang, Z.~Lan, K.~Shu, Integrating mamba and transformer for long-short range time series forecasting, arXiv preprint arXiv:2404.14757 (2024).

\bibitem{katharopoulos2020transformers}
A.~Katharopoulos, A.~Vyas, N.~Pappas, F.~Fleuret, Transformers are rnns: Fast autoregressive transformers with linear attention, in: International conference on machine learning, PMLR, 2020, pp. 5156--5165.

\bibitem{le2015simple}
Q.~V. Le, N.~Jaitly, G.~E. Hinton, A simple way to initialize recurrent networks of rectified linear units, arXiv preprint arXiv:1504.00941 (2015).

\bibitem{arjovsky2016unitary}
M.~Arjovsky, A.~Shah, Y.~Bengio, Unitary evolution recurrent neural networks, in: International conference on machine learning, PMLR, 2016, pp. 1120--1128.

\bibitem{lim2021temporal}
B.~Lim, S.~{\"O}. Ar{\i}k, N.~Loeff, T.~Pfister, Temporal fusion transformers for interpretable multi-horizon time series forecasting, International Journal of Forecasting 37~(4) (2021) 1748--1764.

\bibitem{oreshkin2019n}
B.~N. Oreshkin, D.~Carpov, N.~Chapados, Y.~Bengio, N-beats: Neural basis expansion analysis for interpretable time series forecasting, arXiv preprint arXiv:1905.10437 (2019).

\bibitem{challu2023nhits}
C.~Challu, K.~G. Olivares, B.~N. Oreshkin, F.~G. Ramirez, M.~M. Canseco, A.~Dubrawski, Nhits: Neural hierarchical interpolation for time series forecasting, in: Proceedings of the AAAI conference on artificial intelligence, Vol.~37, 2023, pp. 6989--6997.

\bibitem{zhou2022film}
T.~Zhou, Z.~Ma, Q.~Wen, L.~Sun, T.~Yao, W.~Yin, R.~Jin, et~al., Film: Frequency improved legendre memory model for long-term time series forecasting, Advances in neural information processing systems 35 (2022) 12677--12690.

\bibitem{tlas2022bump}
T.~Tlas, Bump functions with monotone fourier transforms satisfying decay bounds, Journal of Approximation Theory 278 (2022) 105742.

\bibitem{recht2018cifar}
B.~Recht, R.~Roelofs, L.~Schmidt, V.~Shankar, Do cifar-10 classifiers generalize to cifar-10?, arXiv preprint arXiv:1806.00451 (2018).

\bibitem{nangia2018listops}
N.~Nangia, S.~R. Bowman, Listops: A diagnostic dataset for latent tree learning, arXiv preprint arXiv:1804.06028 (2018).

\bibitem{yenter2017deep}
A.~Yenter, A.~Verma, Deep cnn-lstm with combined kernels from multiple branches for imdb review sentiment analysis, in: 2017 IEEE 8th annual ubiquitous computing, electronics and mobile communication conference (UEMCON), IEEE, 2017, pp. 540--546.

\bibitem{vorontsov2017orthogonality}
E.~Vorontsov, C.~Trabelsi, S.~Kadoury, C.~Pal, On orthogonality and learning recurrent networks with long term dependencies, in: International Conference on Machine Learning, PMLR, 2017, pp. 3570--3578.

\end{thebibliography}

\newpage
\appendix
\onecolumn
\section{Appendix}
\subsection{Reference Summary for All Baseline Methods}\label{references}
As in this work there are numerous baselines (totally 36) adopted in the main content and appendix, we summarize the references for all baseline methods for readers to understand easily. Please see the following table for more details.
\begin{table*}[htp]
\caption{References for all baselines.}
\begin{center}
\begin{threeparttable}
\begin{tabular}{c c}
    \toprule
    \textbf{Method} & \textbf{Reference} \\ \midrule
    TCN& \cite{bai2018empirical} \\
    Transformer&\cite{vaswani2017attention}\\
    Reformer & \cite{kitaev2020reformer} \\
    LogTrans & \cite{nie2022logtrans}\\
    Informer & \cite{zhou2021informer}\\
    Informer+ & \cite{zhou2021informer}\\
    Fedformer & \cite{zhou2022fedformer}\\
    Autoformer & \cite{wu2021autoformer}\\
      LRU   &              \cite{orvieto2023resurrecting}     \\
      SOFTS & \cite{han2024softs}   \\
      SiMBA & \cite{patro2024simba}\\
      S-Mamba & \cite{wang2025mamba} \\
      PatchTST & \cite{nie2022time}\\
      Pyraformer & \cite{liu2022pyraformer}\\
      SegRNN & \cite{lin2023segrnn}\\
      TimesNet & \cite{wu2022timesnet} \\
      MICN & \cite{wang2023micn}\\
      S4 & \cite{gu2021efficiently} \\
      S5 & \cite{orvieto2023resurrecting}\\
      Crossformer & \cite{zhang2023crossformer}\\
      iTransformer & \cite{liu2023itransformer}\\
      TiDE & \cite{das2023long} \\
      RLinear & \cite{li2023revisiting} \\
      CoST & \cite{woo2022cost} \\
      MoCO & \cite{he2020momentum} \\
      TS2Vec & \cite{yue2022ts2vec}\\
      LSTnet & \cite{lai2018modeling}\\
      LSTMa & \cite{bahdanau2014neural}\\
      BRNN & \cite{schuster1997bidirectional}\\
      BLSTM & \cite{huang2015bidirectional}\\
      TNC & \cite{tonekaboni2021unsupervised}\\
    CMMamba & \cite{li2024cmmamba} \\
    Bi-Mamba+ & \cite{liang2024bi}\\
    UMambaTSF & \cite{wu2024umambatsf}\\
      TimeMachine   &              \cite{ahamed2024timemachine}     \\
      SST &\cite{xu2024integrating}\\
      \bottomrule
      
\end{tabular}
\begin{tablenotes}

\end{tablenotes}
\end{threeparttable}
\end{center}
\label{table:references}
\end{table*}

\subsection{Additional Related Works}\label{additional_related_works}
Another line of work is linear Transformers~\cite{katharopoulos2020transformers,schlag2021linear} where an autoregressive Transformer was constructed with a linear attention mechanism, which exhibits a close relationship with linear RNNs. Specifically, these Transformers can be reformulated as RNNs during autoregressive decoding, whose updates are regarded approximately as a special case of element-wise linear recurrence. Traditional RNNs are also deficient in modeling long-term dependencies due to the vanishing gradient issue. Apart from the gating mechanisms, another remedy is to regularize or initialize the eigenvalues of the recurrent weight matrix with identity~\cite{le2015simple} or unitary~\cite{arjovsky2016unitary} matrices. The authors in~\cite{orvieto2023resurrecting} resorted to randomized matrices to initialize eigenvalues to be close to one. To provide a clearer illustration, we present our architecture diagram in Figure~\ref{fig:blur_architecture}. The BLUR architecture builds upon the state space model from LRU, and we incorporate bidirectional techniques to enhance performance.


\subsection{Full Qualitative Comparison Table}\label{full_comparison_table}
We also have the full qualitative table for the comparison, shown in Table~\ref{table:comparison_full}, including other methods and the extra memory column. 
\begin{table*}[ht]
\caption{Qualitative comparison among approaches.}

\begin{center}
\resizebox{\textwidth}{!}
{%
\begin{threeparttable}
\begin{tabular}{c c c c}
    \toprule
    \textbf{Method} & \textbf{Time} &\textbf{Memory}& \textbf{Mechanism}\\ \midrule
    RNN~\cite{hewamalage2021recurrent} & $\mathcal{O}(N)$& $\mathcal{O}(N)$ & Nonlinear recurrence \\
    TCN~\cite{bai2018empirical} & $\mathcal{O}(N)$ & $\mathcal{O}(N)$ & 1D convolutional network + causal convolutions\\
    Transformer~\cite{vaswani2017attention}&$\mathcal{O}(N^2)$& $\mathcal{O}(N^2)$ &Self-attention\\
    TFT$^*$~\cite{lim2021temporal} & N/A & N/A & Variable selection + static covariate\\
    Reformer~\cite{kitaev2020reformer} & $\mathcal{O}(N\textnormal{log}N)$ & $\mathcal{O}(N\textnormal{log}N)$ & LSH attention \\
    N-BEATS$^*$~\cite{oreshkin2019n} &N/A & N/A& Doubly residual stacking + ensemble \\
    LogTrans~\cite{nie2022logtrans} & $\mathcal{O}(N\textnormal{log}N)$ & $\mathcal{O}(N^2)$ & CNN + attention\\
    Informer~\cite{zhou2021informer} & $\mathcal{O}(N\textnormal{log}N)$ & $\mathcal{O}(N\textnormal{log}N)$ & Probsparse self-attention\\
    Fedformer~\cite{zhou2022fedformer} & $\mathcal{O}(N)$ & $\mathcal{O}(N)$ & FEB + FEA + MOE Decomp\\
    Autoformer~\cite{wu2021autoformer} & $\mathcal{O}(N\textnormal{log}N)$ & $\mathcal{O}(N\textnormal{log}N)$ & series decomposition block + auto-correlation mechanism\\
    N-HiTS$^*$~\cite{challu2023nhits} & $\mathcal{O}(N\frac{1-\varsigma^B}{1-\varsigma})$ & $\mathcal{O}(N\frac{1-\varsigma^B}{1-\varsigma})$ & Doubly residual stacking principle \\
    FiLM~\cite{zhou2022film} & $\mathcal{O}(N)$& $\mathcal{O}(N)$ & Legendre projection unit + frequency enhanced layer\\
    DLinear~\cite{zeng2023transformers} &$\mathcal{O}(N)$ &$\mathcal{O}(N)$& Decomposition scheme + linear layer\\
    S4~\cite{gu2021efficiently} & $\mathcal{\tilde{O}}(N)$ &$\mathcal{\tilde{O}}(N)$ & Linear recurrence\\
      LRU~\cite{orvieto2023resurrecting}   &              $\mathcal{O}(N)$ & $\mathcal{O}(N)$&Dia-linear recurrence     \\
      SOFTS~\cite{han2024softs} & $\mathcal{O}(N)$ & $\mathcal{O}(N)$ &  Star aggregate-redistribute   \\
      SiMBA~\cite{patro2024simba} & $\mathcal{O}(N\textnormal{log}N)$ &$\mathcal{O}(N\textnormal{log}N)$& Mamba + EinFFT\\
      S-Mamba~\cite{wang2025mamba} & $\mathcal{O}(N\textnormal{log}N)$ &$\mathcal{O}(N\textnormal{log}N)$& Mamba + MLP \\
      \textbf{BLUR}   &     $\mathcal{O}(N)$ & $\mathcal{O}(N)$ & Dia-bi-linear recurrence               \\ 
      \bottomrule
      
\end{tabular}
\begin{tablenotes}
Time: training time complexity,  LSH: locality sensible hashing, Dia-linear: diagonal linear, Dia-bi-linear: diagonal bidirectional linear. FEA: frequency enhanced attention. FEB: frequency enhanced block. MOE Decomp: mixture of expert decomposition. * means the proposed model is more for interpretability. $B$: the number of blocks in the model, $\varsigma$: the expressivity ratio. EinFFT: Einstein Fast Fourier Transform.
\end{tablenotes}
\end{threeparttable}%
}
\end{center}
\label{table:comparison_full}
\end{table*}
\subsection{Additional Proof for Theorem~\ref{theorem_1}}\label{additional_proof}
In this subsection, we present the detailed proof for Theorem~\ref{theorem_1}. Note that the major proof ideas and techniques are adapted from~\cite{orvieto2023universality}. We remark that the adaptation is still non-trivial as the architecture has become bidirectional LRUs. We restate the theorem here for completeness. Before that, we provide the detailed steps for the proof to add clarify and make it clear.
\begin{itemize}
    \item For the first step, we show the linear RNNs are able to memorize the inputs or encode efficiently the inputs to the latent space, which requires Lemma~\ref{lemma_1}
    \item Then, we show the approximation property of MLP based on the Theorem~\ref{theorem_2},  as the MLP layer brings nonlinearity for the BLUR to make the model more expressive.
    \item We then construct the forward and backward Vandermonde matrices to represent the updates of the bidirectional recurrence in BLUR. Additionally, we establish the linear reconstruction mappings induced by the Moore-Penrose inverses of both forward and backward Vandermonde matrices, resulting in the function extension with the new Barron constant in Theorem~\ref{theorem_3} and the assistance in constructing Lemma~\ref{lemma_2}.
    \item We show a result in Lemma~\ref{lemma_2} that is adapted from Theorem~\ref{theorem_2}  to present the relationship between the number of neurons and the approximate accuracy of the nonlinear layer.
    \item The last step is to combine the aforementioned results. We first use Lemma~\ref{lemma_1} to construct the forward and backward LRUs that can memorize the inputs. Then, based on Lemma~\ref{lemma_2} and Theorem~\ref{theorem_3} (as well as its backward version), we can obtain there exists a function extension with a new Barron constant, which is considered a nonlinear layer that takes inputs from both forward and backward LRUs. We then let the function extension be one hidden-layer MLP and apply Theorem~\ref{theorem_2} to complete the proof.
\end{itemize}

\textbf{Theorem~\ref{theorem_1}:} 
Suppose that the inputs $\bm{v}=(\bm{v}_i)_{i=1}^N\in\mathcal{V}\subseteq\mathbb{R}^{d\times N}$ are bounded, i.e., $\|\bm{v}_i\|<\infty$ for all $i\in[N]$. Denote by $\textnormal{dim}(\mathcal{V})$ the vector-space dimension of $\mathcal{V}$. Let $\mathcal{L}_1$ and $\mathcal{L}_2$ be LRU with either real or complex eigenvalues and with width $n\geq \textnormal{dim}(\mathcal{V})$. Let the nonlinear mapping $\mathcal{E}$ be parameterized by an MLP with width $D\geq \mathcal{O}(nN/\varepsilon^2)$. Hence, $\hat{\mathcal{F}}=\mathcal{L}_3\circ\mathcal{G}\circ\mathcal{H}\circ(\mathcal{L}_1+\mathcal{L}_2)\circ\mathcal{E}$ approximates pointwise $\mathcal{F}$ with error $\varepsilon$:
    \begin{equation}
        \textnormal{sup}_{\bm{v}\in\mathcal{V}}\|\hat{\mathcal{F}}(\bm{v})-\mathcal{F}(\bm{v})\|\leq \varepsilon.
    \end{equation}

To show the proof, we first introduce a few existing results from~\cite{orvieto2023universality}. The first one is regarding the capabilities of linear RNNs, which tells that such models are able to memorize inputs, which sets the foundation of the proof.
\begin{lemma}\label{lemma_1}
    Denote by $\mathcal{U}\subseteq \mathbb{R}^{m\times N}$ the encoded input set. If the hidden dimension scales with $mN$, the linear RNN model can recover the encoded inputs $(\bm{u}_i)_{i=1}^k$ from the hidden states $\bm{h}_k$, for all $k=1,2,...,N$, without any information loss. If $\textnormal{dim}(\mathcal{U})=P<mN$, i.e., $\mathcal{U}$ is linearly compressible, then the hidden dimension can be reduced to $\mathcal{O}(P)$.
\end{lemma}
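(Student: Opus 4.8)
\textit{Proof proposal.} The plan is to exploit the fact that the diagonal recurrence of Eq.~\ref{eq_9} makes $\bm{h}_k$ an explicit linear function of the input prefix, and then to exhibit a linear left inverse. First I would unroll the forward recurrence to write
\begin{equation}
    \bm{h}_k = \sum_{t=1}^{k}\mathbf{\Lambda}^{k-t}\mathbf{B}\bm{u}_t = \mathbf{M}_k\,\bm{\mu}_k,\quad \mathbf{M}_k:=[\mathbf{B},\mathbf{\Lambda}\mathbf{B},\dots,\mathbf{\Lambda}^{k-1}\mathbf{B}]\in\mathbb{C}^{n\times mk},
\end{equation}
where $\bm{\mu}_k$ stacks $(\bm{u}_k,\bm{u}_{k-1},\dots,\bm{u}_1)$. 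Recovering $(\bm{u}_i)_{i=1}^k$ from $\bm{h}_k$ without loss is therefore equivalent to $\mathbf{M}_k$ having full column rank $mk$, in which case $\bm{\mu}_k=\mathbf{M}_k^{+}\bm{h}_k$ with $\mathbf{M}_k^{+}$ the Moore--Penrose inverse, and this must hold simultaneously for every $k\in[N]$.

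Second, I would establish the required rank condition through a Vandermonde construction, using the freedom to design $\mathbf{\Lambda}$ and $\mathbf{B}$. Taking $n=mN$, I would partition the hidden coordinates into $m$ blocks of size $N$, let $\mathbf{\Lambda}$ be block diagonal with distinct nonzero diagonal entries, and choose $\mathbf{B}$ so that block $l$ is driven only by input channel $l$. Each block then reduces to a single-input diagonal system whose reachability matrix factors as $\textnormal{diag}(b_i)\,\mathbf{V}$ with $\mathbf{V}_{it}=\lambda_i^{k-t}$ a Vandermonde matrix; distinct nonzero eigenvalues and nonzero $b_i$ make every such submatrix (for each $k\le N$) full column rank. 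Stacking the $m$ blocks shows $\mathbf{M}_k$ is injective for all $k$, so $n\ge mN$ suffices for lossless recovery, and the backward recurrence of Eq.~\ref{eq_17} is handled identically by its own Vandermonde matrix. Since everything is linear, the reconstruction maps $\mathbf{M}_k^{+}$ can be absorbed into downstream layers, consistent with the remark following Eq.~\ref{eq_9}.

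Third, for the compressible case $\textnormal{dim}(\mathcal{U})=P<mN$, I would note that the prefix map $(\bm{u}_i)_{i=1}^N\mapsto(\bm{u}_i)_{i=1}^k$ is linear, so every prefix lies in a subspace of dimension at most $P$. Fixing a basis of the span of $\mathcal{U}$ gives a linear compression $\mathbf{T}:\mathbb{R}^{mN}\to\mathbb{R}^P$ that is invertible on $\mathcal{U}$; pre-composing the embedding with $\mathbf{T}$ feeds an effectively $P$-dimensional signal into the recurrence, so the same Vandermonde argument requires only $n=\mathcal{O}(P)$ hidden units to preserve injectivity. Reconstruction then proceeds by the pseudoinverse followed by the lift $\mathbf{T}^{-1}$ back into $\mathcal{U}$, yielding the $\mathcal{O}(P)$ bound, with the constant absorbing the factor two from conjugate-paired complex eigenvalues.

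The main obstacle I anticipate is certifying injectivity uniformly in the genuinely multichannel setting: although the block-diagonal construction makes full column rank transparent, I must verify it holds over \emph{all} prefix lengths $k\in[N]$ at once (not merely $k=N$) using a single fixed pair $(\mathbf{\Lambda},\mathbf{B})$, and reconcile the complex diagonalization with the real input-dimension counting so that the pseudoinverse maps $\mathbf{M}_k^{+}$ remain exact reconstructions. Handling the compressible case without assuming $\mathcal{U}$ is itself a subspace also requires care in defining $\mathbf{T}$ on its span and confirming the bound tracks $\textnormal{dim}(\mathcal{U})$ rather than the ambient $mN$.
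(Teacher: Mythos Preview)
Your proposal is correct and follows essentially the same approach as the paper: the paper does not prove Lemma~\ref{lemma_1} in full but imports it from~\cite{orvieto2023universality}, and the sketch it does provide (setting $m=1$, $\mathbf{B}=[1,\dots,1]^\top$, writing $\bm{h}_k=\mathbf{V}_{f,k}\bm{u}_{1:k}^\top$ with $\mathbf{V}_{f,k}$ a Vandermonde matrix, and inverting via the Moore--Penrose pseudoinverse when $n\ge k$) is exactly your construction specialized to a single input channel. Your block-diagonal extension to $m>1$ and your treatment of the compressible case are natural elaborations of the same idea and match the spirit of the cited reference.
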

We next present a well-known result on the approximation property of MLP based on~\cite{barron1993universal}.
\begin{theorem}\label{theorem_2}
    Consider $\mathcal{G}(\bm{h})$ parameterized by a one hidden-layer MLP (with $D$ hidden neurons): $\mathcal{G}(\bm{h})=\sum_{k=1}^D\tilde{c}_k\sigma(\langle\tilde{a}_k,\bm{h}\rangle+\tilde{b}_k)+\tilde{c}_0$, where $\sigma$ is any sigmoidal function. Let $f:\mathbb{R}^p\to\mathbb{R}$ be continuous with Barron constant $\mathcal{C}_f$. If $D\geq 2nr^2\mathcal{C}_f^2\varepsilon^{-2}$, where $r:=\textnormal{sup}_j\{\bm{h}_j|j\in[1,n]\}$, then there exist parameters such that $\textnormal{sup}_{\|\bm{h}\|\leq \sqrt{n}r}|f(\bm{h})-\mathcal{G}(\bm{h})|\leq \varepsilon$.
\end{theorem}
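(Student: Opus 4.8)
The plan is to reproduce the Maurey--Jones--Barron argument, since Theorem~\ref{theorem_2} is precisely Barron's one-hidden-layer approximation bound specialized to the Euclidean ball $\{\bm h:\|\bm h\|\le \sqrt n\, r\}$ of radius $R:=\sqrt n\, r$ in $\mathbb R^p$ with $p=n$. First I would use the hypothesis $\mathcal C_f<\infty$ to write $f$ as a continuous superposition of cosine ridge functions. Writing the Fourier transform in polar form $\tilde f(\omega)=e^{i\theta(\omega)}|\tilde f(\omega)|$ and taking the real part of the inversion formula (using the normalization fixed in the excerpt) gives, for every $\bm h$ in the ball,
\begin{equation}
f(\bm h)-f(\bm 0)=\int_{\mathbb R^p}\frac{\cos(\langle\omega,\bm h\rangle+\theta(\omega))-\cos\theta(\omega)}{\|\omega\|_2}\,\|\omega\|_2|\tilde f(\omega)|\,d\omega .
\end{equation}
The weight $\|\omega\|_2|\tilde f(\omega)|\,d\omega$ has total mass $\mathcal C_f$, so after dividing by $\mathcal C_f$ it is a probability measure, and each integrand $g_\omega(\bm h):=\mathcal C_f\,\|\omega\|_2^{-1}\big(\cos(\langle\omega,\bm h\rangle+\theta)-\cos\theta\big)$ is a ridge function bounded on the ball by $\mathcal C_f R$, since $|\cos a-\cos b|\le|a-b|$ and $|\langle\omega,\bm h\rangle|\le\|\omega\|_2 R$. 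Hence $f-f(\bm 0)$ lies in the closed convex hull of the symmetric class $\{\pm g_\omega\}$, whose elements are uniformly bounded by $B:=\mathcal C_f R=\mathcal C_f\sqrt n\, r$.

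Second, I would pass from cosine ridges to sigmoidal ridges. Because $\sigma$ is sigmoidal, dilated and shifted copies $\sigma\!\big(t(\langle a,\bm h\rangle+b)\big)$ approximate the Heaviside step uniformly off its jump, and any bounded-variation scalar function of $\langle\omega,\bm h\rangle$ (in particular each $g_\omega$, on the compact interval over which $\langle\omega,\bm h\rangle$ ranges) is a uniform limit of convex combinations of such steps. This keeps $f-f(\bm 0)$ in the closed convex hull of the sigmoidal ridge class $\{\bm h\mapsto c\,\sigma(\langle a,\bm h\rangle+b)\}$ with the same magnitude budget $B$. Third, I would invoke Maurey's lemma, the greedy/probabilistic selection underlying Jones's and Barron's estimate: if a point of the Hilbert space $L^2(\mu)$ lies in the convex hull of a set whose members have $L^2(\mu)$-norm at most $B$, then for every $D$ there is a $D$-term average of hull members within $L^2(\mu)$-distance $B/\sqrt D$. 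Absorbing $f(\bm 0)$ into the bias $\tilde c_0$ and collecting the $D$ selected sigmoidal ridges into $\mathcal G$ yields $\|f-\mathcal G\|_{L^2(\mu)}^2\le B^2/D=\mathcal C_f^2 n r^2/D$, so the threshold $D\ge 2nr^2\mathcal C_f^2\varepsilon^{-2}$ forces the error below $\varepsilon$, the extra factor of two comfortably covering the $\pm$ symmetrization and the cosine-to-sigmoid step.

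The main obstacle is the mismatch between the norm Maurey's lemma controls and the one the statement asserts. The convex-hull variance argument is intrinsically an $L^2(\mu)$ (Hilbert-space, type-$2$) estimate and attains the $D^{-1/2}$ rate only in that norm, whereas the conclusion demands the uniform bound $\sup_{\|\bm h\|\le\sqrt n\, r}|f(\bm h)-\mathcal G(\bm h)|\le\varepsilon$. I would close this gap through a genuinely uniform version of the selection step: rather than averaging against a fixed $\mu$, bound the supremum of the Monte-Carlo approximation error over the compact ball by a covering-number/empirical-process estimate for the ridge class, which upgrades $L^2$ control to $L^\infty$ control at the cost of a logarithmic, dimension-dependent factor that can be folded into the $\mathcal O(nN/\varepsilon^2)$ order in Theorem~\ref{theorem_1}. (The weaker, purely $L^2(\mu)$ form is the classical Barron statement and would already suffice for the downstream use if $\mu$ is taken to be the data distribution.) Everything else---the Fourier representation, the ridge-magnitude bookkeeping, and the sigmoidal approximation---is routine once the boundedness $r<\infty$ of the hidden states is in force.
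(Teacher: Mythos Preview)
Your proposal is essentially correct and aligned with the paper's approach: the paper does not supply its own proof of Theorem~\ref{theorem_2} at all, but simply states that ``the detailed proof of Theorem~\ref{theorem_2} can be found in~\cite{barron1993universal}.'' What you have written is a faithful reconstruction of Barron's original Maurey--Jones argument (Fourier representation, convex-hull membership in a ridge class with magnitude bound $\mathcal C_f R$, then probabilistic $D$-term selection), so you and the paper are invoking the same proof.

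Your observation about the $L^2(\mu)$ versus $L^\infty$ mismatch is in fact sharper than the paper itself: Barron's 1993 result controls $\|f-\mathcal G\|_{L^2(\mu)}$ for an arbitrary probability measure $\mu$ on the ball, not the uniform norm, so the ``$\sup$'' in the theorem statement is an overstatement relative to the cited source. The paper does not address this gap, and your proposed fix via covering-number arguments (absorbing the resulting logarithmic factor into the big-$\mathcal O$ of Theorem~\ref{theorem_1}) is the standard way to upgrade; alternatively, as you note, the $L^2$ version already suffices for the downstream application.
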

The detailed proof of Theorem~\ref{theorem_2} can be found in~\cite{barron1993universal}. In what follows, we establish a family of special matrices, called Vandermonde matrix that can represent the updates of both forward and backward recurrence in Eq.~\ref{eq_15} and Eq.~\ref{eq_17}. For simplicity, we set $d=m=1$, the encoder as the identity, and $\mathbf{B}=[1,1,...,1]^\top$. Then Eq.~\ref{eq_15} can be rewritten as
\begin{equation}
    \bm{h}_k=\begin{bmatrix}
\lambda_{f,1}^{k-1} & \lambda_{f,1}^{k-2}  & \cdots & \lambda_{f,1} & 1 \\ 
\lambda_{f,2}^{k-1} & \lambda_{f,2}^{k-2}  & \cdots & \lambda_{f,2} & 1 \\
\vdots & \vdots & \vdots & \ddots &\vdots \\
\lambda_{f,n}^{k-1} & \lambda_{f,n}^{k-2}  & \cdots & \lambda_{f,n} & 1 \\ 
\end{bmatrix}
\begin{bmatrix}
    \bm{u}_1\\
    \bm{u}_2\\
    \vdots\\
    \bm{u}_k
\end{bmatrix}=\mathbf{V}_{f,k}\bm{u}^\top_{1:k},
\end{equation}
where $\bm{u}_{1:k}=\bm{v}_{1:k}=(\bm{v}_i)_{i=1}^k\in\mathbb{R}^{1\times k}$, and $\mathbf{V}_{f,k}$ is a \textit{forward Vandermonde matrix}. When $n\geq k$, we can use the Moore-Penrose inverse of $\mathbf{V}_{f,k}$ to reconstruct $\bm{u}_{1:k}$ such that
\begin{equation}
    \bm{v}_{1:k}^\top=\bm{u}_{1:k}^\top=\mathbf{V}_{f,k}^+\bm{h}_k.
\end{equation}
Similarly, Eq.~\ref{eq_17} can be rewritten as
\begin{equation}
    \bm{h}_1=\begin{bmatrix}
\lambda_{b,1}^{k-1} & \lambda_{b,1}^{k-2}  & \cdots & \lambda_{b,1} & 1 \\ 
\lambda_{b,2}^{k-1} & \lambda_{b,2}^{k-2}  & \cdots & \lambda_{b,2} & 1 \\
\vdots & \vdots & \vdots & \ddots &\vdots \\
\lambda_{b,n}^{k-1} & \lambda_{b,n}^{k-2}  & \cdots & \lambda_{b,n} & 1 \\ 
\end{bmatrix}
\begin{bmatrix}
    \bm{u}_k\\
    \bm{u}_{k-1}\\
    \vdots\\
    \bm{u}_1
\end{bmatrix}=\mathbf{V}_{b,k}\bm{u}^\top_{k:1}.
\end{equation}
$\mathbf{V}_{b,k}$ is a \textit{backward Vandermonde matrix}.
By using the Moore-Penrose inverse, we have
\begin{equation}
    \bm{v}_{k:1}^\top=\bm{u}_{k:1}^\top=\mathbf{V}_{b,k}^+\bm{h}_1.
\end{equation}
Lemma~\ref{lemma_1} implies a worse-case setting that requires the RNN hidden state dimension $n$ to be of size proportional
to the input sequence $N$. Generally speaking, it is impossible to store $\mathcal{O}(N)$ real numbers in a vector whose size is smaller than $\mathcal{O}(N)$ through a linear Vandermonde projection, unless the inputs are sparse. This will require another assumption to parameterize the input sequence with some basis functions, as done in~\cite{orvieto2023universality}. In this study, we only consider the non-sparse setting such that the linear reconstruction mappings for the forward and backward passes are exactly $\mathbf{V}_{f,k}$ and $\mathbf{V}_{b,k}$. Without loss of generality, we also consider real/imaginary inputs representation in $\mathbb{R}^{2n\times n}$.
Typically, $\hat{\mathcal{F}}$ should be time-independent such that we leverage a recent result on harmonic analysis from~\cite{tlas2022bump} to construct an approximate domain relaxation and operate with $\mathcal{G}$ (which is represented by a single MLP). This suggests a result for the Barron complexity for function sequences that involve $N$ time steps. Hence, we can obtain an input dimension that determines the function $\mathcal{G}$ should be implemented. In addition to the Barron constant we have defined in the above, $\mathcal{C}_f$, we denote by $\mathcal{C}'_f=\int_{\mathbb{R}^p}|\tilde{f}(\omega)|d\omega$ the Fourier norm.
In the sequel, we formally state the result for the combinations of Barron maps.
\begin{theorem}\label{theorem_3}
    Let $f: [1,2,...,N]\times\mathbb{R}^p\to\mathbb{R}$ be a function mapping such that each $f(k,\cdot)$ is Barron with constant $\mathcal{C}_{f_k}$ and has Fourier norm $\mathcal{C}'_{f_k}$. There exists an extension $\hat{f}:\mathbb{R}^{p+1}\to\mathbb{R}$ of $f$, where $\hat{f}$ is Barron with constant $\hat{\mathcal{C}}\leq \mathcal{O}(\sum_{k=1}^N\mathcal{C}_{f_k}+\mathcal{C}'_{f_k})=\mathcal{O}(N)$ and $\hat{f}(x,k)=f(x,k), \forall x\in\mathbb{R}^p, \forall k\in[N]$.
\end{theorem}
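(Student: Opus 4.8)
The plan is to relax the discrete time index into a continuous variable and interpolate the slices $f(k,\cdot)$ with a single, fixed kernel whose Fourier-analytic behavior I control once and for all. Concretely, I would fix an interpolation kernel $\phi:\mathbb{R}\to\mathbb{R}$ satisfying $\phi(m)=\delta_{m0}$ for every integer $m$ (so that $\phi$ samples to the discrete delta), and define
\begin{equation}
    \hat{f}(x,t)=\sum_{k=1}^N f(k,x)\,\phi(t-k),\quad x\in\mathbb{R}^p,\ t\in\mathbb{R}.
\end{equation}
The extension property is then immediate: evaluating at an integer node $t=j\in[N]$ gives $\hat{f}(x,j)=\sum_{k}f(k,x)\phi(j-k)=f(j,x)$, so $\hat{f}$ recovers the $j$-th slice exactly on $[N]\times\mathbb{R}^p$, as required. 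This is the ``approximate domain relaxation'' alluded to in the text, with the sinc kernel $\phi(t)=\sin(\pi t)/(\pi t)$ as the prototypical choice, since its Fourier transform is a multiple of the indicator of $[-\pi,\pi]$.

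The second step is to compute the Fourier transform of $\hat{f}$ on $\mathbb{R}^{p+1}$ and exploit its tensor structure. Writing $\tilde{f}_k$ for the Fourier transform of $f(k,\cdot)$ and using the shift rule for $\phi(\cdot-k)$, the joint transform factors as
\begin{equation}
    \tilde{\hat{f}}(\omega_x,\omega_t)=\tilde{\phi}(\omega_t)\sum_{k=1}^N e^{-i\omega_t k}\,\tilde{f}_k(\omega_x),
\end{equation}
so that $|\tilde{\hat{f}}(\omega_x,\omega_t)|\leq |\tilde{\phi}(\omega_t)|\sum_{k=1}^N|\tilde{f}_k(\omega_x)|$ by the triangle inequality together with $|e^{-i\omega_t k}|=1$. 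I would then bound the joint frequency norm by $\|(\omega_x,\omega_t)\|_2\leq \|\omega_x\|_2+|\omega_t|$ and split the Barron integral into products of decoupled one- and $p$-dimensional integrals.

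Carrying out the split yields
\begin{equation}
    \hat{\mathcal{C}}\leq \Big(\int_{\mathbb{R}}|\tilde{\phi}(\omega_t)|\,d\omega_t\Big)\sum_{k=1}^N\mathcal{C}_{f_k}+\Big(\int_{\mathbb{R}}|\omega_t||\tilde{\phi}(\omega_t)|\,d\omega_t\Big)\sum_{k=1}^N\mathcal{C}'_{f_k},
\end{equation}
where the first parenthetical is $\|\tilde{\phi}\|_{L^1}$ and the second is the one-dimensional Barron constant of $\phi$; the coupling of $\|\omega_x\|_2$ against $\mathcal{C}_{f_k}$ and of $|\omega_t|$ against $\mathcal{C}'_{f_k}$ is exactly why both per-slice norms appear in the statement. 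Since $\phi$ is fixed independently of $N$ and of the data, both parentheticals are absolute constants, giving $\hat{\mathcal{C}}\leq \mathcal{O}(\sum_{k=1}^N(\mathcal{C}_{f_k}+\mathcal{C}'_{f_k}))=\mathcal{O}(N)$ once the per-slice constants are uniformly bounded, which is the claimed estimate.

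The main obstacle is the choice of $\phi$: the interpolation constraint $\phi(m)=\delta_{m0}$ and the finiteness requirements $\|\tilde{\phi}\|_{L^1}<\infty$ and $\int|\omega||\tilde{\phi}(\omega)|\,d\omega<\infty$ are in mild tension, since a kernel with a clean sampling property tends to decay only like $1/t$ (as sinc does), which makes the behavior of $\hat{f}$ away from $[1,N]$ delicate and can complicate its downstream use as a Barron input to the MLP layer. I would therefore invoke the bump-function construction of \cite{tlas2022bump} to produce a kernel retaining the sampling property (or an $\varepsilon$-approximation of it, absorbed into the relaxation tolerance) while having a smooth, compactly supported Fourier transform, so that both $\|\tilde{\phi}\|_{L^1}$ and the first moment $\int|\omega||\tilde{\phi}|\,d\omega$ are finite and explicitly controlled. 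Verifying that this relaxed kernel still reproduces $f$ at the integer nodes within the tolerances carried through the rest of the universality argument is the delicate technical point, after which the norm bookkeeping above closes the proof.
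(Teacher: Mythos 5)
Your proposal is correct and takes essentially the same route as the paper: the paper's own treatment of Theorem~\ref{theorem_3} simply defers to Theorem 4 of \cite{orvieto2023universality}, whose argument is precisely your construction — relax the discrete time index with an interpolating kernel (with the bump-function harmonic analysis of \cite{tlas2022bump} supplying a kernel whose Fourier transform has finite $L^1$ mass and first moment), factor the joint Fourier transform, and split $\|(\omega_x,\omega_t)\|_2\leq\|\omega_x\|_2+|\omega_t|$ so that $\mathcal{C}_{f_k}$ and $\mathcal{C}'_{f_k}$ each couple to one of the two resulting terms. Your final bookkeeping, yielding $\hat{\mathcal{C}}\leq\mathcal{O}(\sum_{k=1}^N\mathcal{C}_{f_k}+\mathcal{C}'_{f_k})$, matches the claimed bound.
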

The proof for Theorem~\ref{theorem_3} follows similarly from the proof for Theorem 4 in~\cite{orvieto2023universality}. However, this only applies to the forward pass, and we need to ensure a similar conclusion can apply to the backward pass in BLUR as well. Since the backward LRU has the same architecture and is initialized independently, we can construct the function mapping for the backward LRU that complies with the conclusion in Theorem~\ref{theorem_3}. In the next, we focus first on obtaining the approximation of $\mathcal{F}_k$ by using the MLP, at a fixed time step $k$ from the LRU. Denote by $\Theta_{f,k}$ the functional of the linear reconstruction mapping induced by the Moore-Penrose inverse of the forward Vandermonde matrix $\mathbf{V}_{f,k}^+$ such that $\mathcal{G}(\bm{h}_k)=\mathcal{F}_k(\Theta_f(\bm{h}_k)):=f_k(\bm{h}_k)$. Similarly, we can also define as $\Theta_{b,k}$ the functional of the linear reconstruction mapping induced by the Moore-Penrose inverse of the backward Vandermonde matrix $\mathbf{V}_{b,k}^+$.
With these in hand, we are now ready to state the following result.
\begin{lemma}\label{lemma_2}
    Suppose that the MLP layer $\mathcal{G}$ is sufficient to approximate $f_k:=\mathcal{F}_k\circ(\Theta_{f,k}+\Theta_{b,k})$ at level $\varepsilon$ from BLUR $\hat{\bm{h}}_k$. Then the number of hidden neurons $D$ in $\mathcal{G}$ satisfies the following relationship
    \begin{equation}
        D\leq 4nr^2\mathcal{C}^2_{\mathcal{F}_k}\|\Theta_{f,k}+\Theta_{b,k}\|_2^2s^3\varepsilon^{-2},
    \end{equation}
where $\|\hat{\bm{h}}_k\|_2\leq \sqrt{n}r$, for all $k\in[N]$, and $\mathcal{C}_{\mathcal{F}_k}$ is the Barron constant of $\mathcal{F}_k$.
\end{lemma}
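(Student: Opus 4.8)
The plan is to obtain the bound as a direct consequence of the scalar Barron estimate in Theorem~\ref{theorem_2}, applied to the composite target $f_k=\mathcal{F}_k\circ(\Theta_{f,k}+\Theta_{b,k})$ regarded as a function of the merged state $\hat{\bm{h}}_k$. Since Theorem~\ref{theorem_2} governs scalar targets whereas $\mathcal{F}_k$ maps into $\mathbb{R}^s$, I would first split $\mathcal{F}_k=(\mathcal{F}_k^{(1)},\dots,\mathcal{F}_k^{(s)})$ into its $s$ coordinates and approximate each $f_k^{(l)}=\mathcal{F}_k^{(l)}\circ(\Theta_{f,k}+\Theta_{b,k})$ by a one-hidden-layer network. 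To force the joint error $\|\mathcal{F}_k-\mathcal{G}\|\le\varepsilon$, I allocate a per-coordinate tolerance $\varepsilon/\sqrt{s}$, so that the $s$ squared coordinate errors sum back to $\varepsilon^2$.

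The quantitative heart of the argument is to control the Barron constant of each $f_k^{(l)}$. Because $\Theta_{f,k}$ and $\Theta_{b,k}$ are the \emph{linear} reconstruction maps induced by the Moore--Penrose inverses $\mathbf{V}_{f,k}^+$ and $\mathbf{V}_{b,k}^+$, each $f_k^{(l)}$ is a Barron function precomposed with the fixed linear operator $L:=\Theta_{f,k}+\Theta_{b,k}$. I would establish the scaling rule
\begin{equation}
    \mathcal{C}_{f_k^{(l)}}\le\|L\|_2\,\mathcal{C}_{\mathcal{F}_k^{(l)}}\le\|\Theta_{f,k}+\Theta_{b,k}\|_2\,\mathcal{C}_{\mathcal{F}_k}.
\end{equation}
Working at the level of the Fourier (spectral) measure of $\mathcal{F}_k^{(l)}$ rather than a density, the substitution $\mathcal{F}_k^{(l)}(L\hat{\bm{h}}_k)$ transports the frequency variable by $L^{\top}$; the pushforward measure then gives $\mathcal{C}_{f_k^{(l)}}=\int\|L^{\top}\omega\|_2\,d\mu(\omega)\le\|L\|_2\int\|\omega\|_2\,d\mu(\omega)$, which is exactly the operator-norm factor. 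This measure-level formulation is what lets the rank-deficient, complex pseudoinverse be handled without assuming $L$ is square or invertible, and it is also where the real/imaginary $\mathbb{R}^{2n\times n}$ representation of the complex reconstruction is absorbed so that $L$ is a genuine real linear map.

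With the Barron constant secured, I would invoke Theorem~\ref{theorem_2} on the ball $\|\hat{\bm{h}}_k\|_2\le\sqrt{n}r$ at accuracy $\varepsilon/\sqrt{s}$, obtaining for each coordinate
\begin{equation}
    D_l\le 2nr^2\big(\|\Theta_{f,k}+\Theta_{b,k}\|_2\,\mathcal{C}_{\mathcal{F}_k}\big)^2 s\,\varepsilon^{-2},
\end{equation}
and then assemble the $s$ coordinate networks onto a single shared hidden layer with $D\le\sum_{l=1}^s D_l$. Collecting factors—one $s$ from the per-coordinate tolerance and one from summing over coordinates—yields an $s^2$ dependence, which is comfortably dominated by the stated $s^3$, while the leading constant $4$ accommodates the factor-of-two slack already present in Theorem~\ref{theorem_2}.

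The step I expect to be the main obstacle is precisely the Barron-constant scaling under $L=\Theta_{f,k}+\Theta_{b,k}$: one must justify the frequency change-of-variables for pseudoinverse-induced maps that need be neither square nor invertible, and argue that the independently initialized forward and backward reconstructions combine into one well-defined real linear operator to which the Fourier substitution applies. Once this scaling is in place, the coordinate splitting, error allocation, and final neuron count are routine bookkeeping, and the loose $s^3$ leaves ample room for the constants incurred along the way.
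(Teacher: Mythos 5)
Your proposal is correct and follows essentially the same route as the paper: the paper's proof simply invokes Proposition 2 of the cited universality work with the factor replaced by $\|\Theta_{f,k}+\Theta_{b,k}\|_2$, and your argument—scaling the Barron constant under precomposition with the linear reconstruction map $\Theta_{f,k}+\Theta_{b,k}$, then applying Theorem~\ref{theorem_2} with coordinate splitting and neuron bookkeeping—is exactly what that adaptation amounts to. Your write-up is in fact more self-contained than the paper's one-line citation, and the slack you note (your $s^2$ versus the stated $s^3$) is consistent with the lemma's loose bound.
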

\begin{proof}
    Applying similar proof steps of Proposition 2 in~\cite{orvieto2023universality} with a different factor $\|\Theta_{f,k}+\Theta_{b,k}\|_2$ completes the proof.
\end{proof}
The conclusion of Lemma~\ref{lemma_2} can be adapted from Theorem~\ref{theorem_2}. However, in Theorem~\ref{theorem_2}, the MLP directly applies to the raw inputs $\bm{v}$ such that $\hat{y}_k=\mathcal{G}((\bm{v}_i)_{i=1}^k)$, which results in $4nr^2\mathcal{C}^2_{\mathcal{F}_k}s^3\varepsilon^{-2}$. Instead, in Lemma~\ref{lemma_2}, the operation is on hidden states $\hat{\bm{h}}_k$ such that there is an extra factor $\|\Theta_{f,k}+\Theta_{b,k}\|_2$.
We are ready to give the detailed proof for Theorem~\ref{theorem_1}. 
\begin{proof}(Proof of Theorem~\ref{theorem_1}): Let $f_k:=\mathcal{F}_k\circ(\Theta_{f,k}+\Theta_{b,k}):\mathbb{R}^{2n}\to\mathbb{R}^s$. Based on Lemma~\ref{lemma_2}, we can know that $\mathcal{C}_{f_k}=\|(\Theta_{f,k}+\Theta_{b,k})\mathcal{C}_{\mathcal{F}_k}\|$. $\mathcal{C}'_{f_k}$ scales similarly. Due to Lemma~\ref{lemma_1}, we can obtain that $f_k(\bm{h}_k)=\mathcal{F}(\bm{v}_{1:k})$. With this, we then apply Theorem~\ref{theorem_3} and its backward version to the sequence of $f_k$ (one for each time step): there exists an interpolation function $\hat{f}$ with Barron constant $\mathcal{O}(\sum_{k=1}^N\mathcal{C}_{f_k}+\mathcal{C}'_{f_k})=\mathcal{O}(N)$. Letting this be the specific function the one hidden-layer MLP has to approximate and applying Theorem~\ref{theorem_2} yields the desirable result.
\end{proof}
\subsection{Model Architecture and Hyperparameter Setup}\label{models}
We provide the details on the proposed BLUR in this subsection. Following the same architecture of LRU, BLUR has four layers of LRUs in each direction. Specifically, the forward block of BLUR has four layers of LRUS and the same as the backward block. The latent size of the recurrent unit is 128. We also have the linear embedding layer to encode the raw input to the embedding space with a dimension being 256. The time pool operation in Figure~\ref{fig:blur} is the mean. To control the eigenvalue of the diagonal matrix $\mathbf{\Lambda}$, $e_{min}$ and $e_{max}$ are 0 and 1 respectively. We also have the batch normalization between different components in BLUR to boost its empirical performance. The batch size we have set is 64. In training, for the time-series dataset, the number of epochs is 8, which is the same as in~\cite{zhou2021informer}. For sequential images and text datasets, the number of epochs ranges from 40 to 180. We also have the decay learning rate set as in~\cite{orvieto2023resurrecting}. The dropout rate is 0.1 to be consistent with that for the original LRU. We summarize all model and hyperparameter settings as follows.
\begin{table}[htp]
\caption{Model and hyperparameter setup for BLUR}
\begin{center}
\begin{threeparttable}
\begin{tabular}{c c}
    \toprule
    Hyper. & Value\\ \midrule
      \# of layers   &        4\\
      Latent size & 128\\
     Embedding dimension & 256\\
      $[e_{min},e_{max}]$ & [0,1]\\
      Normalization   &       Batch\\ 
      Batch size & 64\\
      \# of epochs for timeseries & 8\\
      \# of epochs for images and text & [40, 180]\\
      Base learning rate & 0.001\\
      Minimum learning rate & 0.0000001\\
      Decay rate of learning rate & 0.7\\
      Dropout rate & 0.1\\
      Weight decay & 0.05\\
      \bottomrule
      
\end{tabular}
\end{threeparttable}
\end{center}
\label{table:seq_images_full}
\end{table}
\subsection{Detail about Datasets}\label{datasets}
We provide additional detail for each dataset in both tasks.

\textbf{Sequential CIFAR.} The task is to look at a $32\times 32$ CIFAR-10 image and predict the class of the image. There are 10 classes in the dataset, including airplane, automobile, bird, cat, deer, dog, frog, horse, ship, and truck. This dataset is a slightly different version of the popular CIFAR 10 dataset~\cite{recht2018cifar}. The size of sequential CIFAR is 60K.

\textbf{ListOps.} This dataset consists of summary operations on lists of single-digit integers written in prefix notation. The full sequence has a corresponding solution, which is also a single-digit integer, thus making it a ten-way balanced classification problem. Please check~\cite{nangia2018listops} for more detail. The size of ListOps is 100K.

\textbf{Text.} This is the popular IMDB dataset~\cite{yenter2017deep} that comprises 50K movie reviews for natural language processing or text analytics. It is a binary sentiment classification problem. 

\textbf{Sequential MNIST~\cite{vorontsov2017orthogonality}.} This is a standard benchmark task for timeseries classification where each input has sequences of 784 pixel values created by unrolling the MNIST digits, pixel by pixel. The size of the dataset is 70K, including training and testing datasets.

\textbf{Permutated sequential MNIST~\cite{le2015simple}.} The psMNIST dataset is a different version of the sequential MNIST dataset with "a fixed random permutation of the pixels of the digits". It is a transformation of the MNIST dataset to evaluate sequence models. The size is the same as that of sequential MNIST.

\textbf{ETTh$_1$.} It is an electricity dataset with a 2-year range from 2016/07 - 2018/07. It contains 17520 data points sampled hourly. Each data point includes 8 features, i.e., the date of the sample, the target "oil temperature", and six different types of external power load features. The train/val/test is 12/4/4 months.

\textbf{ETTh$_2$.} This dataset has the same information as in \textbf{ETTh$_1$}, while collected at a different electricity station. The train/val/test is 12/4/4 months

\textbf{ETTm$_1$.} It is an electricity dataset with a 2-year range from 2016/07 - 2018/07. It contains 70080 data points sampled every 15 minutes. Each data point includes 8 features, i.e., the date of the sample, the target "oil temperature", and six different types of external power load features. The collection happened in the same place as \textbf{ETTh$_1$}. It can be regarded as a minute-version of \textbf{ETTh$_1$}. The train/val/test is 12/4/4 months.

\textbf{ECL.} This is a dataset containing electricity consumption of 370 points/clients. There are a total of 140256 data samples. The values are in kW of each 15 min. Please see~\cite{zhou2021informer} for more detail. The train/val/test is 15/3/4 months.

\textbf{Weather.} The weather data was collected by NOAA. The dataset contains local climatological data for almost 1600 U.S. locations, four years from 2010 to 2013. The sampling rate is 1 hour. Each data sample comprises the target value "wet bulb" and 11 climate features. The train/val/test is 28/10/10 months.

\subsection{Parallel Scan}\label{parallel_scan}
In traditional nonlinear recurrent models such as RNNs, LSTMs, and GRUs, backpropagation through time (BPTT) is the method to obtain the gradients and then update the parameters. However, BPTT can be extremely time-consuming if the sequence length is large. The emergence of transformers replaced these models due to their parallel training capabilities while suffering the quadratic time complexity, as discussed before. Recently, the structured state-space models have invoked renewed interest in recurrent models by replacing the nonlinear recurrence with the linear recurrence. Many new linear recurrent models have also been proposed to avoid the BPTT issue thanks to the parallel scan algorithm, which is introduced briefly as follows.

The \textit{Parallel Scan} algorithm~\cite{heinsen2023parallelization} is an approach for parallel computation for computing $N$ prefix computations from $N$ sequential data points through an associate operator $\oplus$ such as $+$ and $\times$. The parallel scan algorithm is able to efficiently computes $\{\oplus_{i=1}^k\bm{u}_i\}_{k=1}^N$ from $\{\bm{u}_k\}_{k=1}^N$. Particularly, we can directly apply the parallel scan algorithm to efficiently compute a popular family of functions: $h_k=a_kh_{k-1}+b_k$, where $h_k, a_k, b_k\in\mathbb{R}$ and $h_0=b_0$. This method takes as input $a_1, a_2, ..., a_N$ and $b_1, b_2, ..., b_N$ and computes $h_1, h_2, ..., h_N$ via parallel scans.

\subsection{Additional Results}\label{additional_results}
For the timeseries prediction tasks, in this work, we try our best to follow the same setting as in~\cite{zhou2021informer}. The prediction tasks involving different horizons are shown in Figure~\ref{fig:prediction_tasks}. To comply with their setting, the input and output lengths are all set equal to each other. However, they can be different in other scenarios.
\begin{figure}
    \centering
    \includegraphics[width=0.5\linewidth]{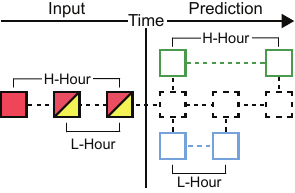}
    \caption{Schematic diagram of prediction tasks for different input/output lengths.}
    \label{fig:prediction_tasks}
\end{figure}

\textbf{Sequential images and text datasets.} To compare BLUR additionally to other existing models, including S4 and S4D-Legs~\cite{gu2021efficiently} on the sequential images and text datasets, we add two more rows to Table~\ref{table:seq_images}, which has now become Table~\ref{table:seq_images_full}. However, we would like to note that the additional results in Table~\ref{table:seq_images_full} are directly attained from their paper without reproducing them. One can immediately observe that S4 has the highest accuracy on the sequential CIFAR dataset. However, it should be acknowledged that the gap between LRU and S4 is significant, while BLUR can reduce the gap with the bidirectional mechanism. Also, BLUR still maintains superior on the Text and permuted sequential MNIST datasets as in Table~\ref{table:seq_images}. The difference can partially be attributed to different running settings, but presumably, this distinction should be marginal.
\begin{table}[htp]
\caption{Performance comparison between LRU and BLUR on sequential image and text datasets.}
\begin{center}
\begin{threeparttable}
\begin{tabular}{c c c c c c}
    \toprule
    \textbf{Method} & \textbf{sC} & \textbf{LO} & \textbf{TE} & \textbf{sM} & \textbf{psM}\\ \midrule
      LRU   &        89.0            & 54.8    & 86.7& 99.3& \textbf{98.1}\\
      S4* & \textbf{91.1}& 59.6& 86.8& 90.9&94.2\\
      S4D-Legs*~\cite{gupta2022diagonal} & 89.9& \textbf{60.5}& 86.2& 89.5&93.1\\
      S5* & 88.8& 58.5& 86.2& 88.9&95.7\\
      \textbf{BLUR}   &90.0& 54.6                     &\textbf{87.3}    &\textbf{99.5} & \textbf{98.1}       \\ 
      \bottomrule
      
\end{tabular}
\begin{tablenotes}
sC: sCIFAR, LO: ListOps, TE: Text, sM: sMNIST, psM: psMNIST, * means the results are taken from the paper directly.
\end{tablenotes}
\end{threeparttable}
\end{center}
\label{table:seq_images_full}
\end{table}

\textbf{Timeseries dataset.} We include full tables to show the comprehensive evaluation of LRU and BLUR on different timeseries datasets with more baselines. Tables~\ref{table:timeseries_full_table1} - \ref{table:timeseries_full_table4} still show that BLUR significantly outperforms other baseline methods, resulting in a competitive sequencing model rival in modeling long-term dependencies. Also, the bidirectional mechanism enhances the model performance on top of LRU. We also show the comparison between BLUR and other two popular bidirectional mechanisms, BRNN and BLSTM, in Table~\ref{table:timeseries_full_table5}. Likewise, BLUR excels from the short to long horizons predominantly, stressing the claim that it is more computationally efficient and effective again. Though in the main text, we have already compared BLUR to a couple Mamba type of models. To more comprehensively assess our model's performance when compared to this kind of state-of-the-art model, we present a case study in Table~\ref{table:timeseries_full_table6} by leveraging ETTm$_1$ dataset and setting a horizon equal to 96. Again, BLUR performs better than all Mamba types of models. Combining Table~\ref{table:timeseries_full_table4}, it implies that BLUR is better at timeseries prediction when the horizon is short to medium. This intuitively makes sense as Mamba has many more model parameters that can assist in the model expressivity. 

\begin{table*}[ht]
\centering
\caption{Multivariate long sequence time-series forecasting results} 
\resizebox{\textwidth}{!}{%
\begin{tabular}{|c|c|cc|cc|cc|cc|cc|cc|cc|cc|}
\hline
\multirow{2}{*}{Methods} & \multirow{2}{*}{Metric} & \multicolumn{2}{c|}{\textbf{BLUR}} & \multicolumn{2}{c|}{LRU} & \multicolumn{2}{c|}{S4} & \multicolumn{2}{c|}{Informer} & \multicolumn{2}{c|}{Informer†} & \multicolumn{2}{c|}{LogTrans} & \multicolumn{2}{c|}{Reformer} \\
 &  & MSE & MAE & MSE & MAE & MSE & MAE & MSE & MAE & MSE & MAE & MSE & MAE & MSE & MAE \\
\hline
\multirow{5}{*}{\rotatebox{90}{ETTh$_1$}} 
 & 24  & \textbf{0.151} &  \textbf{0.300}  & 0.210  & 0.319 & 0.525  & 0.542  & 0.577  & 0.549  & 0.620  & 0.577 & 0.686  & 0.604 & 0.991 & 0.754 \\
 & 48  & \textbf{0.271}  & 0.484  & 0.319 & \textbf{0.383} & 0.641  & 0.615  & 0.685  & 0.625  & 0.692  & 0.671 & 0.766  & 0.757 & 1.313 & 0.906\\
 & 168 & \textbf{0.397} & \textbf{0.508}  & 0.552 & 0.523 & 0.980  & 0.779  & 0.931  & 0.752  & 0.947  & 0.797 & 1.002  & 0.846 & 1.824 & 1.138 \\
 & 336 & \textbf{0.247} & \textbf{0.336}  & 0.813 & 0.696  & 1.407  & 0.910  & 1.128  & 0.873  & 1.094  & 0.813 & 1.362  & 0.952 & 2.117 & 1.280\\
 & 720 & \textbf{0.450} & \textbf{0.536}  & 1.214 & 0.880 & 1.162  & 0.842  & 1.215  & 0.896  & 1.241  & 0.917 & 1.397  & 1.291 & 2.415 & 1.520 \\
\hline
\multirow{5}{*}{\rotatebox{90}{ETTh$_2$}} 
& 24 & \textbf{0.019} & \textbf{0.108} & 0.107 & 0.230 & 0.871 & 0.736 & 0.720 & 0.665 & 0.753 & 0.727 & 0.828 & 0.750 & 1.531 & 1.613 \\
& 48 & \textbf{0.073} & \textbf{0.214} & 0.156 & 0.539 & 1.240 & 0.867 & 1.457 & 1.001 & 1.461 & 1.077 & 1.806 & 1.034 & 1.871 & 1.735 \\
& 168 & \textbf{0.965} & \textbf{0.803} & 2.080 & 1.030 & 2.580 & 1.255 & 3.489 & 1.515 & 3.485 & 1.612 & 4.070 & 1.681 & 4.660 & 1.846 \\
& 336 & \textbf{1.030} & \textbf{0.853} & 2.841 & 1.160 & 1.980 & 1.128 & 2.723 & 1.340 & 2.626 & 1.285 & 3.875 & 1.763 & 4.028 & 1.688 \\
& 720 & \textbf{0.594} & \textbf{0.640} & 2.949 & 2.050 & 2.650 & 1.340 & 3.467 & 1.473 & 3.548 & 1.495 & 3.913 & 1.552 & 5.381 & 2.015 \\ 
\hline
\multirow{5}{*}{\rotatebox{90}{ETTm$_1$}} 
 & 24  & \textbf{0.029} & \textbf{0.121}  & 0.093 & 0.209  & 0.426  & 0.487  & 0.323  & 0.369  & 0.306  & 0.371 & 0.419  & 0.412 & 0.724 & 0.607\\
 & 48  &  \textbf{0.057} & \textbf{0.220}  & 0.167 &  0.288 & 0.580  & 0.565  & 0.494  & 0.503  & 0.465  & 0.470 & 0.507  & 0.583 & 1.098 & 0.777\\
 & 96  &  \textbf{0.150} & \textbf{0.337}  & 0.236 & 0.339 & 0.699  & 0.649  & 0.678  & 0.614  & 0.681  & 0.612 & 0.768  & 0.792 & 1.433 & 0.945\\
 & 288 & \textbf{0.375} &0.550  & 0.420  & \textbf{0.465} & 0.824  & 0.674  & 1.056  & 0.786  & 1.162  & 0.879 & 1.462  & 1.320 & 1.820 & 1.094\\
 & 672 & \textbf{0.545}  & \textbf{0.644}  & 0.807 & 0.672  & 0.846  & 0.709  & 1.192  & 0.926  & 1.231  & 1.103 & 1.669  & 1.461 & 2.187 & 1.232\\
\hline
\multirow{5}{*}{\rotatebox{90}{Electricity}} 
 & 48  & \textbf{0.221} & \textbf{0.340} & 0.245  & 0.360 & 0.255  & 0.352  & 0.344  & 0.393  & 0.334  & 0.399 & 0.355  & 0.418 & 1.404 & 0.999\\
 & 168 & \textbf{0.244}  & \textbf{0.357}  & 0.261 & 0.383  & 0.283  & 0.373  & 0.368  & 0.424  & 0.353  & 0.420 & 0.368  & 0.432 & 1.515 & 1.069\\
 & 336 & \textbf{0.274} & \textbf{0.387}  & 0.283 & 0.395  & 0.292  & 0.382  & 0.381  & 0.431  & 0.381  & 0.439 & 0.373  & 0.439 & 1.601 & 1.104\\
 & 720 & 0.418  & 0.505  & 0.398 &  0.478 & \textbf{0.289} & \textbf{0.377}  & 0.406  & 0.443  & 0.391  & 0.438 & 0.409  & 0.454 & 2.009 & 1.170\\
 & 960 & 0.773 & 0.704  & 0.465 & 0.510 & \textbf{0.299}  & \textbf{0.387}  & 0.460  & 0.548  & 0.492  & 0.550 & 0.477  & 0.589 &2.141 &1.387 \\
\hline
\multirow{5}{*}{\rotatebox{90}{Weather}}  
 & 24  & \textbf{0.075} & \textbf{0.186}  & 0.225 & 0.238 & 0.334  & 0.385  & 0.335  & 0.381  & 0.349  & 0.397 & 0.435  & 0.477 & 0.655 & 0.583\\
 & 48  & \textbf{0.104}  & \textbf{0.232}  & 0.265 & 0.290 & 0.406  & 0.444  & 0.395  & 0.459  & 0.386  & 0.433 & 0.426  & 0.495 & 0.729 & 0.666\\
 & 168 & \textbf{0.197} &  \textbf{0.332}  & 0.395 & 0.413 & 0.525  & 0.527  & 0.608  & 0.567  & 0.613  & 0.582 & 0.727  & 0.671 & 1.318 & 0.855\\
 & 336 & \textbf{0.251} &  \textbf{0.362}  & 0.486 & 0.472 & 0.531  & 0.539  & 0.702  & 0.620  & 0.707  & 0.634 & 0.754  & 0.670 & 1.930 & 1.167\\
 & 720 & \textbf{0.325} &  \textbf{0.441}  & 0.547 & 0.528 & 0.578  & 0.578  & 0.831  & 0.731  & 0.834  & 0.741 & 0.885  & 0.773 & 2.726 & 1.575\\
\hline

Count& - & \multicolumn{2}{c|}{22} & \multicolumn{2}{c|}{1} & \multicolumn{2}{c|}{2} & \multicolumn{2}{c|}{0} & \multicolumn{2}{c|}{0} & \multicolumn{2}{c|}{0} & \multicolumn{2}{c|}{0}\\ 
\hline
\end{tabular}%
}
\label{table:timeseries_full_table1}
\end{table*}

\begin{table*}[ht]
\centering
\caption{Multivariate long sequence time-series forecasting results}
\resizebox{\textwidth}{!}{%
\begin{tabular}{|c|c|cc|cc|cc|cc|cc|cc|cc|cc|}
\hline
\multirow{2}{*}{Methods} & \multirow{2}{*}{Metric} & \multicolumn{2}{c|}{\textbf{BLUR}} & \multicolumn{2}{c|}{CoST} & \multicolumn{2}{c|}{TS2Vec} & \multicolumn{2}{c|}{TNC} & \multicolumn{2}{c|}{MoCo} & \multicolumn{2}{c|}{TCN} & \multicolumn{2}{c|}{LSTnet} \\
 &  & MSE & MAE & MSE & MAE & MSE & MAE & MSE & MAE & MSE & MAE & MSE & MAE & MSE & MAE \\
\hline
\multirow{5}{*}{\rotatebox{90}{ETTh$_1$}} 
 & 24  & \textbf{0.151} & \textbf{0.300}  & 0.386 & 0.429 & 0.590 & 0.531 & 0.708 & 0.592 & 0.623 & 0.555 & 0.583 & 0.547 & 1.293 & 0.901 \\
 & 48  & \textbf{0.271}  & 0.484  & 0.437 & \textbf{0.464} & 0.624 & 0.555 & 0.749 & 0.619 & 0.669 & 0.586 & 0.670 & 0.606 & 1.456 & 0.960 \\
 & 168 & \textbf{0.397} & \textbf{0.508}  & 0.643 & 0.582 & 0.762 & 0.639 & 0.884 & 0.690 & 0.820 & 0.674 & 0.811 & 0.680 & 1.997 & 1.214 \\
 & 336 & \textbf{0.247} & \textbf{0.336}  & 0.812 & 0.679 & 0.931 & 0.728 & 1.020 & 0.768 & 0.981 & 0.755 & 1.132 & 0.815 & 2.655 & 1.369 \\
 & 720 & \textbf{0.450} & \textbf{0.536}  & 0.970 & 0.771 & 1.063 & 0.799 & 1.157 & 0.838 & 1.138 & 0.831 & 1.165 & 0.813 & 2.143 & 1.380 \\
\hline
\multirow{5}{*}{\rotatebox{90}{ETTh$_2$}} 
 & 24  & \textbf{0.019} & \textbf{0.108} & 0.447 & 0.502 & 0.423 & 0.489 & 0.616 & 0.595 & 0.444 & 0.495 & 0.935 & 0.754 & 2.742 & 1.457 \\
 & 48  & \textbf{0.073} & \textbf{0.214}  & 0.699 & 0.637 & 0.619 & 0.605 & 0.840 & 0.716 & 0.613 & 0.595 & 1.300 & 0.911 & 3.567 & 1.687 \\
 & 168 & \textbf{0.965} & \textbf{0.803} & 1.549 & 0.982 & 1.845 & 1.074 & 2.359 & 1.213 & 1.791 & 1.034 & 4.017 & 1.579 & 3.242 & 2.513 \\
 & 336 & \textbf{1.030} & \textbf{0.853}  & 1.749 & 1.042 & 2.194 & 1.197 & 2.789 & 1.342 & 2.241 & 1.186 & 3.460 & 1.456 & 2.544 & 2.591 \\
 & 720 & \textbf{0.594} & \textbf{0.640} & 1.971 & 1.092 & 2.636 & 1.370 & 2.753 & 1.394 & 2.425 & 1.292 & 3.106 & 1.381 & 4.625 & 3.709 \\
\hline
\multirow{5}{*}{\rotatebox{90}{ETTm$_1$}} 
 & 24  & \textbf{0.029} & \textbf{0.121}  & 0.246 & 0.329 & 0.453 & 0.444 & 0.522 & 0.472 & 0.458 & 0.444 & 0.363 & 0.397 & 1.968 & 1.170 \\
 & 48  &  \textbf{0.057} & \textbf{0.220}  & 0.331 & 0.386 & 0.592 & 0.521 & 0.695 & 0.567 & 0.594 & 0.528 & 0.542 & 0.508 & 1.999 & 1.215 \\
 & 96  &  \textbf{0.150} & \textbf{0.337}  & 0.378 & 0.419 & 0.653 & 0.554 & 0.913 & 0.674 & 0.816 & 0.768 & 0.585 & 0.578 & 2.762 & 1.542 \\
 & 288 & \textbf{0.375} & 0.550  & 0.472 & \textbf{0.486} & 0.693 & 0.597 & 0.831 & 0.648 & 0.876 & 0.755 & 0.735 & 0.735 & 1.257 & 2.076 \\
 & 672 & \textbf{0.545}  & 0.644 & 0.620 & \textbf{0.574} & 0.782 & 0.653 & 0.932 & 0.826 & 0.821 & 0.674 & 1.032 & 0.756 & 1.917 & 2.941 \\
\hline
\multirow{4}{*}{\rotatebox{90}{Electricity}}  
 & 48  & 0.221  & 0.340  & \textbf{0.153} & \textbf{0.258} & 0.309 & 0.391 & 0.375 & 0.431 & 0.316 & 0.389 & 0.253 & 0.359 & 0.429 & 0.456 \\
 & 168 & \textbf{0.244} &  \textbf{0.357}  & 0.515 & 0.509 & 0.506 & 0.518 & 0.358 & 0.423 & 0.426 & 0.466 & 0.372 & 0.425 & 0.372 & 0.425 \\
 & 336 & 0.274 &  0.387  & \textbf{0.196} & \textbf{0.296} & 0.351 & 0.422 & 0.417 & 0.465 & 0.325 & 0.417 & 0.352 & 0.409 & 0.352 & 0.409 \\
 & 720 & 0.418 &  0.505 & \textbf{0.276} & \textbf{0.342} & 0.368 & 0.475 & 0.469 & 0.519 & 0.416 & 0.508 & 0.287 & 0.381 & 0.380 & 0.443 \\
\hline
\multirow{5}{*}{\rotatebox{90}{Weather}}  
 & 24  & \textbf{0.075} & \textbf{0.186}   & 0.298 & 0.360 & 0.307 & 0.363 & 0.320 & 0.373 & 0.311 & 0.365 & 0.321 & 0.367 & 0.615 & 0.545 \\
& 48  & \textbf{0.104}  & \textbf{0.232} & 0.359 & 0.411 & 0.374 & 0.418 & 0.358 & 0.435 & 0.398 & 0.456 & 0.386 & 0.423& 0.660 & 0.589 \\
 & 168 & \textbf{0.197} &  \textbf{0.332} &0.464 & 0.491 & 0.491 & 0.506 & 0.479 & 0.495 & 0.482 & 0.499& 0.491 & 0.501& 0.748 & 0.647\\
& 336 & \textbf{0.251} &  \textbf{0.362} & 0.497 & 0.517 & 0.525 & 0.530 & 0.505 & 0.514 & 0.516 & 0.523 & 0.502 & 0.507& 0.782 & 0.683\\
 & 720 & \textbf{0.325} &  \textbf{0.441}  & 0.533 & 0.542 & 0.582 & 0.540 & 0.611 & 0.532 & 0.508 & 0.507 & 0.498 & 0.508& 0.854 & 0.757 \\
\hline
Count& - & \multicolumn{2}{c|}{19} & \multicolumn{2}{c|}{5} & \multicolumn{2}{c|}{0} & \multicolumn{2}{c|}{0} & \multicolumn{2}{c|}{0} & \multicolumn{2}{c|}{0} & \multicolumn{2}{c|}{0}\\ 
\hline
\end{tabular}%
}
\label{table:timeseries_full_table2}
\end{table*}

\begin{table*}[ht]
\centering
\caption{Multivariate long sequence time-series forecasting results}
\resizebox{0.6\textwidth}{!}{%
\begin{tabular}{|c|c|cc|cc|cc|cc|}
\hline
\multirow{2}{*}{Methods} & \multirow{2}{*}{Metric} & \multicolumn{2}{c|}{\textbf{BLUR}} & \multicolumn{2}{c|}{LSTMa} & \multicolumn{2}{c|}{Autoformer} & \multicolumn{2}{c|}{Pyraformer} \\
 &  & MSE & MAE & MSE & MAE & MSE & MAE & MSE & MAE \\
\hline
\multirow{5}{*}{\rotatebox{90}{ETTh$_1$}} 
 & 24  & \textbf{0.151} & \textbf{0.300}  & 0.650 & 0.624 & 0.384 & 0.425 & 0.479 & 0.499 \\
 & 48  & \textbf{0.271} & 0.484  & 0.702 & 0.675 & 0.392 & \textbf{0.419} & 0.518 & 0.520 \\
 & 168 & \textbf{0.397} & 0.508  & 1.212 & 0.867 & 0.490 & \textbf{0.481} & 0.758 & 0.665 \\
 & 336 & \textbf{0.247} & \textbf{0.336}  & 1.424 & 0.994 & 0.505 & 0.484 & 0.891 & 0.738 \\
 & 720 & \textbf{0.450} & 0.536  & 1.960 & 1.322 & 0.498 & \textbf{0.500} & 0.963 & 0.782 \\
\hline
\multirow{5}{*}{\rotatebox{90}{ETTh$_2$}} 
 & 24  & \textbf{0.019} & \textbf{0.108}  & 1.143 & 0.813 & 0.261 & 0.341 & 0.477 & 0.537 \\
 & 48  & \textbf{0.073} & \textbf{0.214}  & 1.671 & 1.221 & 0.312 & 0.373 & 0.934 & 0.764 \\
 & 168 & 0.965 & 0.803 & 4.117 & 1.674 & \textbf{0.457} & \textbf{0.455} & 3.913 & 1.557 \\
 & 336 & 1.030 & 0.853 & 3.434 & 1.549 & \textbf{0.471} & \textbf{0.475} & 0.907 & 0.747 \\
 & 720 & 0.594 & 0.640  & 3.963 & 1.788 & \textbf{0.474} & \textbf{0.484} & 0.963 & 0.783 \\
\hline
\multirow{5}{*}{\rotatebox{90}{ETTm$_1$}} 
 & 24  & \textbf{0.029} & \textbf{0.121}  & 0.621 & 0.629 & 0.383 & 0.403 & 0.332 & 0.383 \\
 & 48  & \textbf{0.057} & \textbf{0.220}  & 1.392 & 0.939 & 0.454 & 0.453 & 0.492 & 0.475 \\
 & 96  & \textbf{0.150} & \textbf{0.337}  & 1.339 & 0.913 & 0.481 & 0.463 & 0.543 & 0.510 \\
 & 288 & \textbf{0.375} & 0.550  & 1.740 & 1.124 & 0.634 & \textbf{0.528} & 0.656 & 0.598 \\
 & 672 & \textbf{0.545} & 0.644  & 2.736 & 1.555 & 0.606 & \textbf{0.542} & 0.901 & 0.720 \\
\hline
\multirow{5}{*}{\rotatebox{90}{Weather}}  
 & 24  & \textbf{0.075} & \textbf{0.186}  & 0.546 & 0.570 & -- & -- & -- & -- \\
 & 48  & \textbf{0.104} & \textbf{0.232}  & 0.829 & 0.677 & -- & -- & -- & -- \\
 & 168 & \textbf{0.197} & \textbf{0.332}  & 1.038 & 0.835 & -- & -- & -- & -- \\
 & 336 & \textbf{0.251} & \textbf{0.362}  & 1.657 & 1.059 & 0.359 & 0.395 & -- & -- \\
 & 720 & \textbf{0.325} & 0.441  & 1.536 & 1.109 & 0.419 &\textbf{0.428} & -- & -- \\
\hline
Count& - & \multicolumn{2}{c|}{14} & \multicolumn{2}{c|}{0} & \multicolumn{2}{c|}{6} & \multicolumn{2}{c|}{0} \\ 
\hline
\end{tabular}%
}
\label{table:timeseries_full_table3}
\end{table*}

\begin{table*}[ht]
\centering
\caption{Multivariate long sequence time-series forecasting results}
\resizebox{\textwidth}{!}{%
\begin{tabular}{|c|c|cc|cc|cc|cc|cc|cc|cc|}
\hline
\multirow{2}{*}{Methods} & \multirow{2}{*}{Metric} & \multicolumn{2}{c|}{\textbf{BLUR}} & \multicolumn{2}{c|}{S-Mamba} & \multicolumn{2}{c|}{iTransformer} & \multicolumn{2}{c|}{RLinear} & \multicolumn{2}{c|}{PatchTST} & \multicolumn{2}{c|}{Crossformer} & \multicolumn{2}{c|}{TiDE} \\
 &  & MSE & MAE & MSE & MAE & MSE & MAE & MSE & MAE & MSE & MAE & MSE & MAE & MSE & MAE \\
\hline
\multirow{2}{*}{{ETTh$_1$}} 
 & 336 & \textbf{0.247} & \textbf{0.336}  & 0.408 & 0.413 & 0.420 & 0.415 & 0.424 & 0.415 & 0.399 & 0.410 & 0.532 & 0.515 & 0.428 & 0.454 \\
 & 720 & \textbf{0.450} & 0.536  & 0.475 & 0.448 & 0.491 & 0.459 & 0.487 & 0.450 & 0.454 & \textbf{0.439} & 0.666 & 0.589 & 0.487 & 0.461 \\
\hline
\multirow{2}{*}{{ETTh$_2$}} 
 & 336 & 1.030 & 0.853 & \textbf{0.424} & 0.431 & 0.426 & \textbf{0.426} & 0.426 & 0.433 & 1.043 & 0.731 & 0.673 & 0.613 & 0.505 & 0.671 \\
 & 720 & 0.594 & 0.640 & 0.620 & \textbf{0.445} & 0.678 & 0.449 & \textbf{0.431} & 0.446 & 1.104 & 1.123 & 0.874 & 0.679 & 0.671 & 0.694 \\
\hline
\multirow{1}{*}{{ETTm$_1$}} 
 & 96  & \textbf{0.150} & \textbf{0.337}  & 0.333 & 0.368 & 0.334 & 0.368 & 0.355 & 0.376 & 0.329 & 0.367 & 0.404 & 0.426 & 0.364 & 0.387 \\
\hline
\multirow{2}{*}{{Weather}}  
 & 336 & \textbf{0.251} & 0.362  & 0.274 & \textbf{0.297} & 0.278 & \textbf{0.297} &0.300 & 0.317 & 0.272 & \textbf{0.297} & 0.272 & 0.335 & 0.287 & 0.303 \\
 & 720 & \textbf{0.325} & 0.441  & 0.350 & \textbf{0.345} & 0.358 & 0.347 & 0.374 & 0.365& 0.354 & 0.348 & 0.398 & 0.418 & 0.351 & 0.386 \\
\hline
Count& - & \multicolumn{2}{c|}{3} & \multicolumn{2}{c|}{2} & \multicolumn{2}{c|}{1} & \multicolumn{2}{c|}{0} & \multicolumn{2}{c|}{1} & \multicolumn{2}{c|}{0} & \multicolumn{2}{c|}{0}\\ 
\hline
\end{tabular}%
}
\label{table:timeseries_full_table4}
\end{table*}

\begin{table}[]
\centering
\caption{Multivariate long sequence time-series forecasting results on four datasets (five cases) with the comparison to BRNN and BLSTM.} 
\begin{tabular}{|c|c|cc|cc|cc|}
\hline
\multirow{2}{*}{Methods} &
  \multirow{2}{*}{Metric} &
  \multicolumn{2}{c|}{\textbf{BLUR}} &
  \multicolumn{2}{c|}{BRNN} &
  \multicolumn{2}{c|}{BLSTM} \\ 
 &
   &
  MSE &
  MAE &
  MSE &
  MAE &
  MSE &
  MAE \\ \hline
\multirow{5}{*}{ETTh$_1$} 
& 24  & \textbf{0.151}  & \textbf{0.300} & 0.578 & 0.496 & 0.622 & 0.518          \\
& 48  & \textbf{0.271}  & \textbf{0.484} & 0.678 & 0.557 & 0.650 & 0.576          \\
& 168 & \textbf{0.397}  & \textbf{0.508} & 0.976 & 0.754 & 0.968 & 0.729          \\
& 336 & \textbf{0.247}  & \textbf{0.336} & 1.243 & 0.829 & 1.150  & 0.787          \\
& 720 & \textbf{0.450}  & \textbf{0.536} & 1.396 & 0.938 & 1.423 & 0.905          \\ \hline
\multirow{5}{*}{ETTh$_2$} 
& 24  & \textbf{0.019} & \textbf{0.108}         & 0.298 & 0.332 & 0.306 & 0.365          \\
& 48  & \textbf{0.073} & \textbf{0.214}        & 0.538 & 0.570 & 0.589 & 0.478          \\
& 168 & \textbf{0.965} & \textbf{0.803}         & 2.630  & 1.050  & 3.150  & 1.002          \\
& 336 & \textbf{1.030} & \textbf{0.853}         & 2.750  & 1.116 & 2.350  & 0.976 \\
& 720 & \textbf{0.594} & \textbf{0.640}        & 2.830  & 1.130  & 2.798 & 0.990  \\ \hline
\multirow{5}{*}{ETTm$_1$} 
& 24  & \textbf{0.029} & \textbf{0.121} & 0.456 & 0.450 & 0.489 & 0.462          \\
& 48  & \textbf{0.057} & \textbf{0.220} & 0.721 & 0.593 & 0.692 & 0.573          \\
& 96  & \textbf{0.150} & \textbf{0.337} & 0.678 & 0.576 & 0.625 & 0.556          \\
& 288 & \textbf{0.375}  & \textbf{0.550} & 0.883 & 0.701 & 0.821 & 0.667          \\
& 672 & \textbf{0.545}  & \textbf{0.644} & 1.085 & 0.778 & 0.966 & 0.752          \\ \hline
Count                                            &   -  & \multicolumn{2}{c|}{15}             & \multicolumn{2}{c|}{0}     & \multicolumn{2}{c|}{0}             \\ \hline
\end{tabular}
\label{table:timeseries_full_table5}
\end{table}

\begin{table}[htp]
\caption{Quantitative comparison of approaches between \textit{Mamba} type of models and BLUR for ETTm$_1$ dataset with horizon = 96.}
\begin{center} 
\begin{tabular}{c c c}
    \toprule
    \textbf{Method} & \textbf{MSE} & \textbf{MAE}\\ \midrule
    SiMBA~\cite{patro2024simba} & 0.324 & 0.360 \\
    S-Mamba~\cite{wang2025mamba}&0.333& 0.368\\
    CMMamba~\cite{li2024cmmamba} & 0.293 & 0.347 \\
    Bi-Mamba+\cite{liang2024bi} & 0.320 & 0.360\\
    UMambaTSF~\cite{wu2024umambatsf} & 0.316 & 0.356\\
      TimeMachine~\cite{ahamed2024timemachine}   &              0.317 &0.355     \\
      SST~\cite{xu2024integrating} &0.298&0.355\\
      \textbf{BLUR}   &     \textbf{0.150} & \textbf{0.337}               \\ 
      \bottomrule
      
\end{tabular}
\end{center}
\label{table:timeseries_full_table6}
\end{table}


We also validated our findings using several additional datasets. In Figures 
~\ref{fig:groundtruth_prediction_ETTH1},
~\ref{fig:groundtruth_prediction_full_ETTH2},  ~\ref{fig:groundtruth_prediction_full_ETTM1}, ~\ref{fig:groundtruth_prediction_full_WTH}, and ~\ref{fig:groundtruth_prediction_full_ECL}, we present the comparison for ETTh$_1$, ETTh$_2$, ETTm$_1$, and the Weather dataset with a horizon of 24, and for Electricity with a horizon of 48. Although both LRU and BLUR effectively capture the overall trends of the ground truth, BLUR still outperforms LRU with a closer fit to the ground truth data.

\begin{figure}
    \centering
    \includegraphics[width=0.8\linewidth]{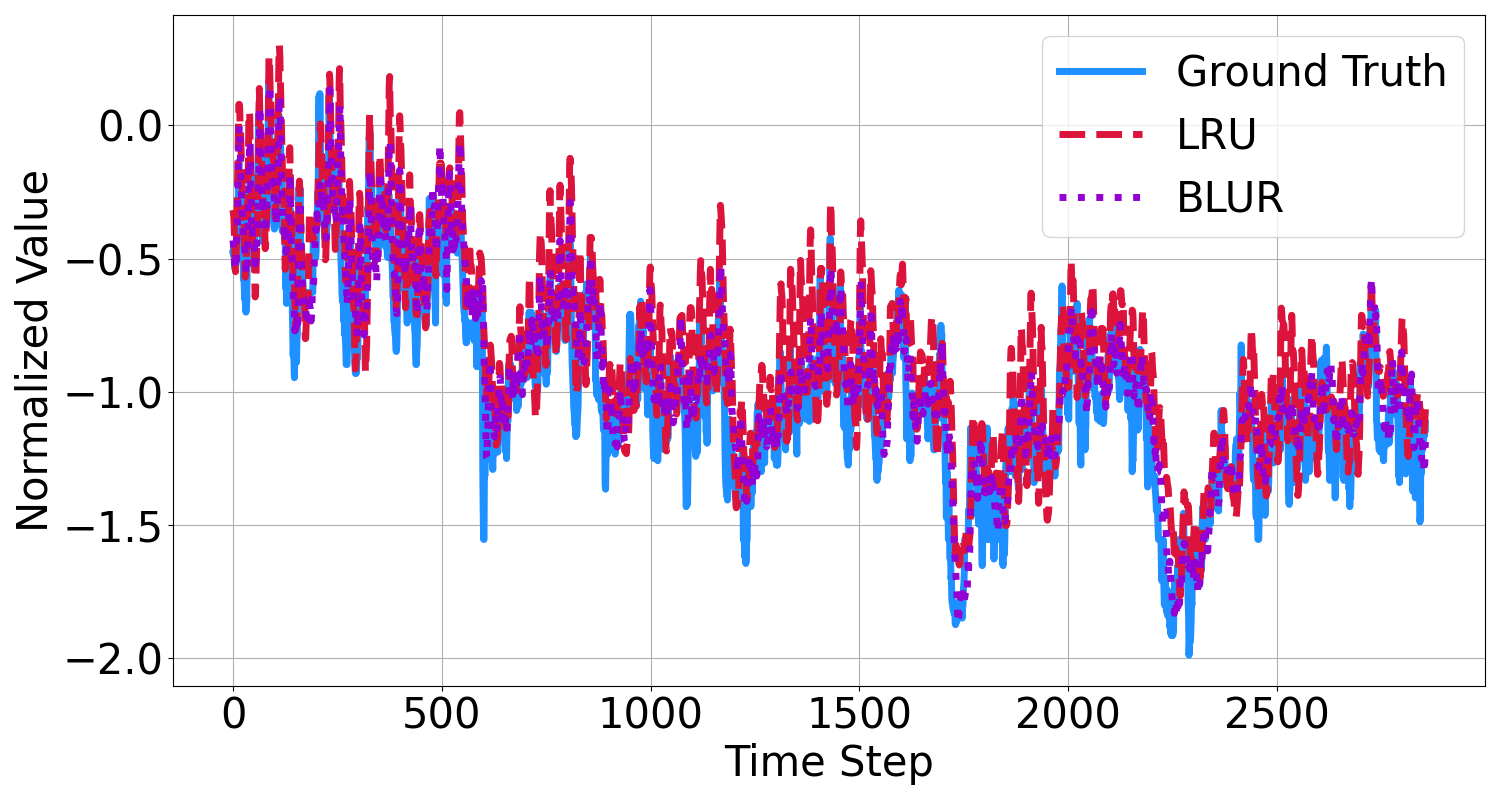}
    \caption{Model comparison between LRU and BLUR on the full testing data of ETTh$_1$ for the prediction.}
    \label{fig:groundtruth_prediction_ETTH1}
\end{figure}

\begin{figure}[h!]
    \centering
    \includegraphics[width=0.8\linewidth]{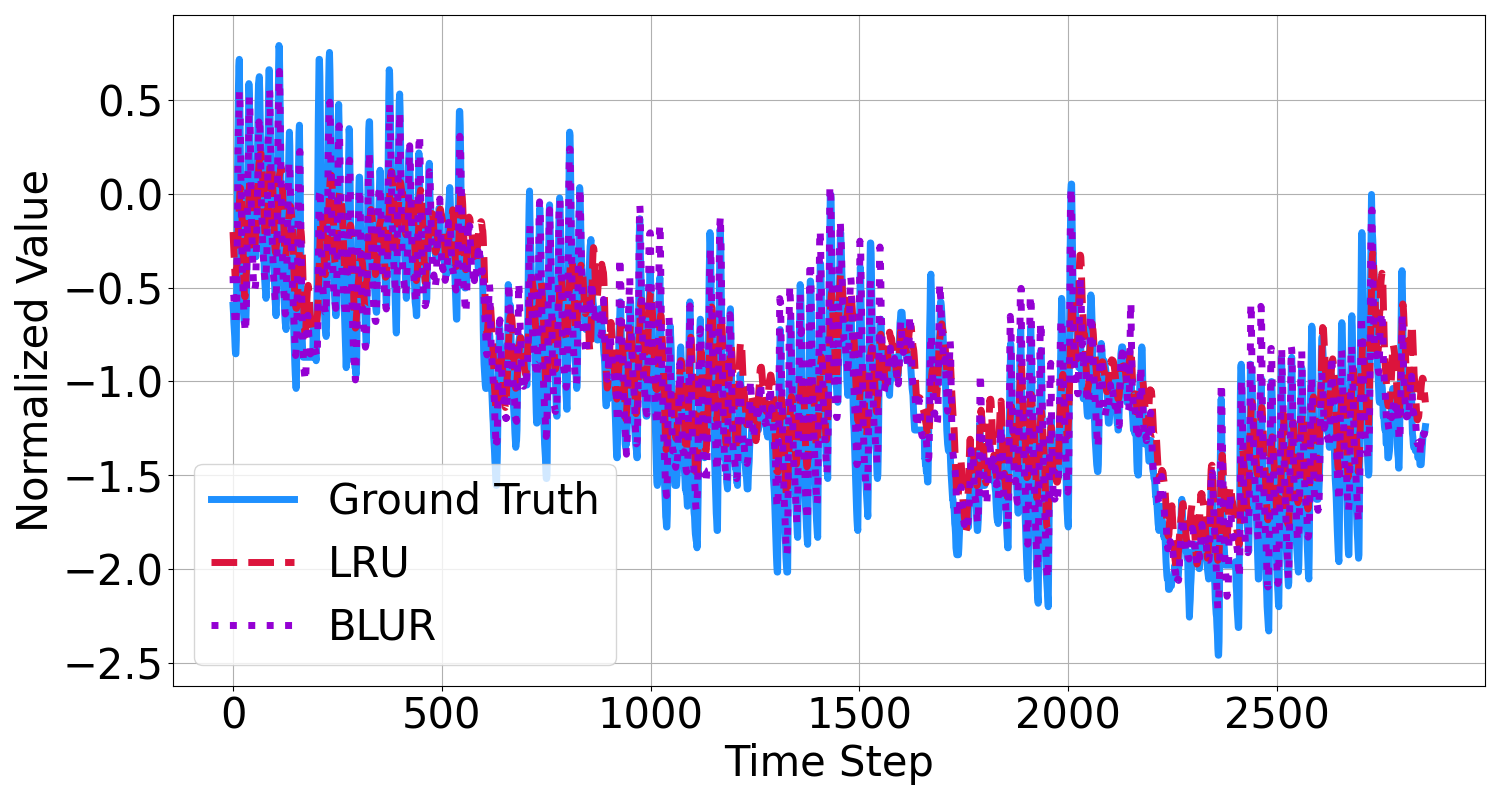}
    \caption{Model comparison between LRU and BLUR on the full testing data of ETTh$_2$ for the prediction.}
    \label{fig:groundtruth_prediction_full_ETTH2}
\end{figure}

\begin{figure}[h]
    \centering
    \includegraphics[width=0.8\linewidth]{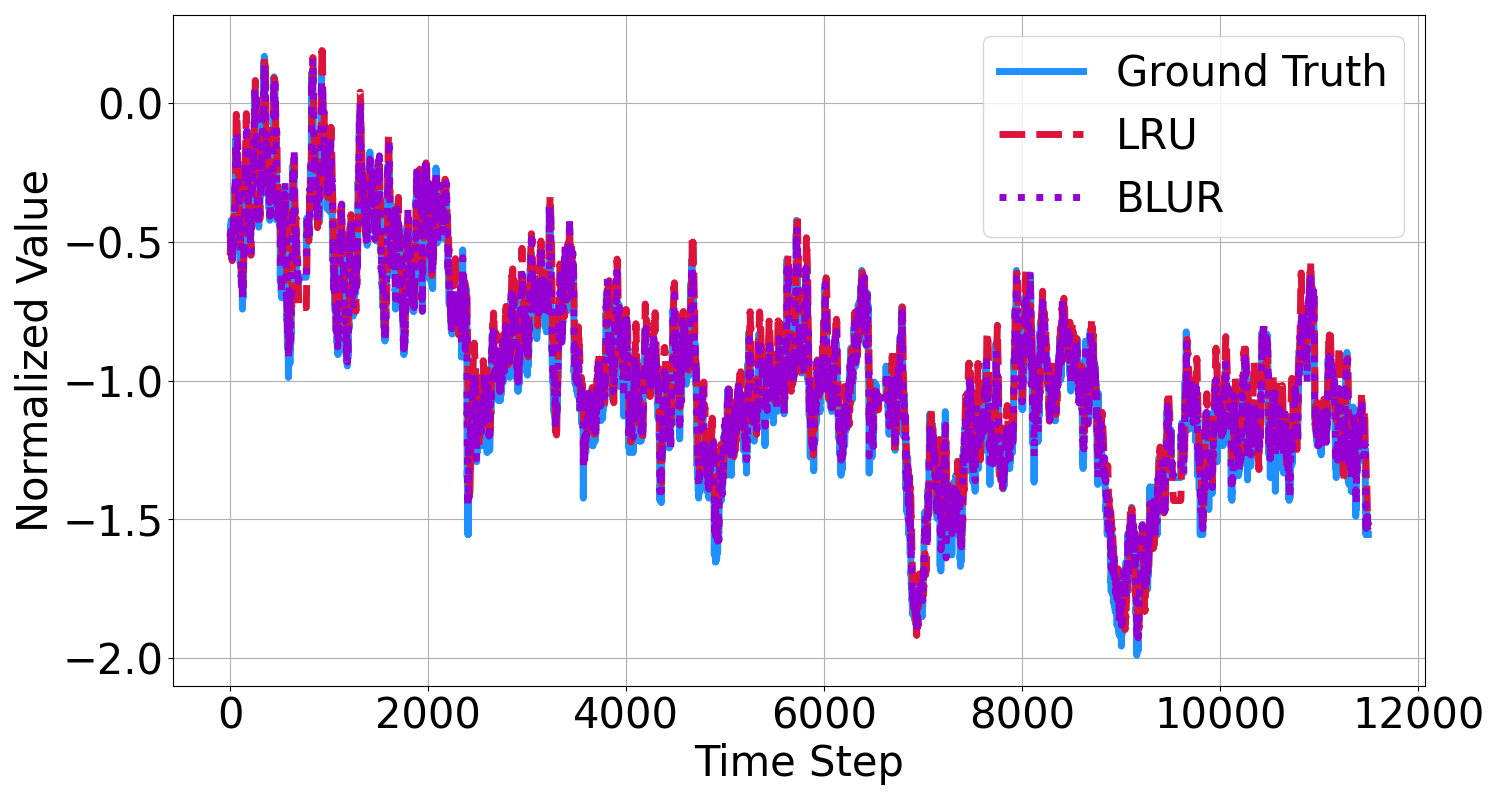}
    \caption{Model comparison between LRU and BLUR on the full testing data of ETTm$_1$ for the prediction.}
    \label{fig:groundtruth_prediction_full_ETTM1}
\end{figure}

\begin{figure}[h]
    \centering
    \includegraphics[width=0.8\linewidth]{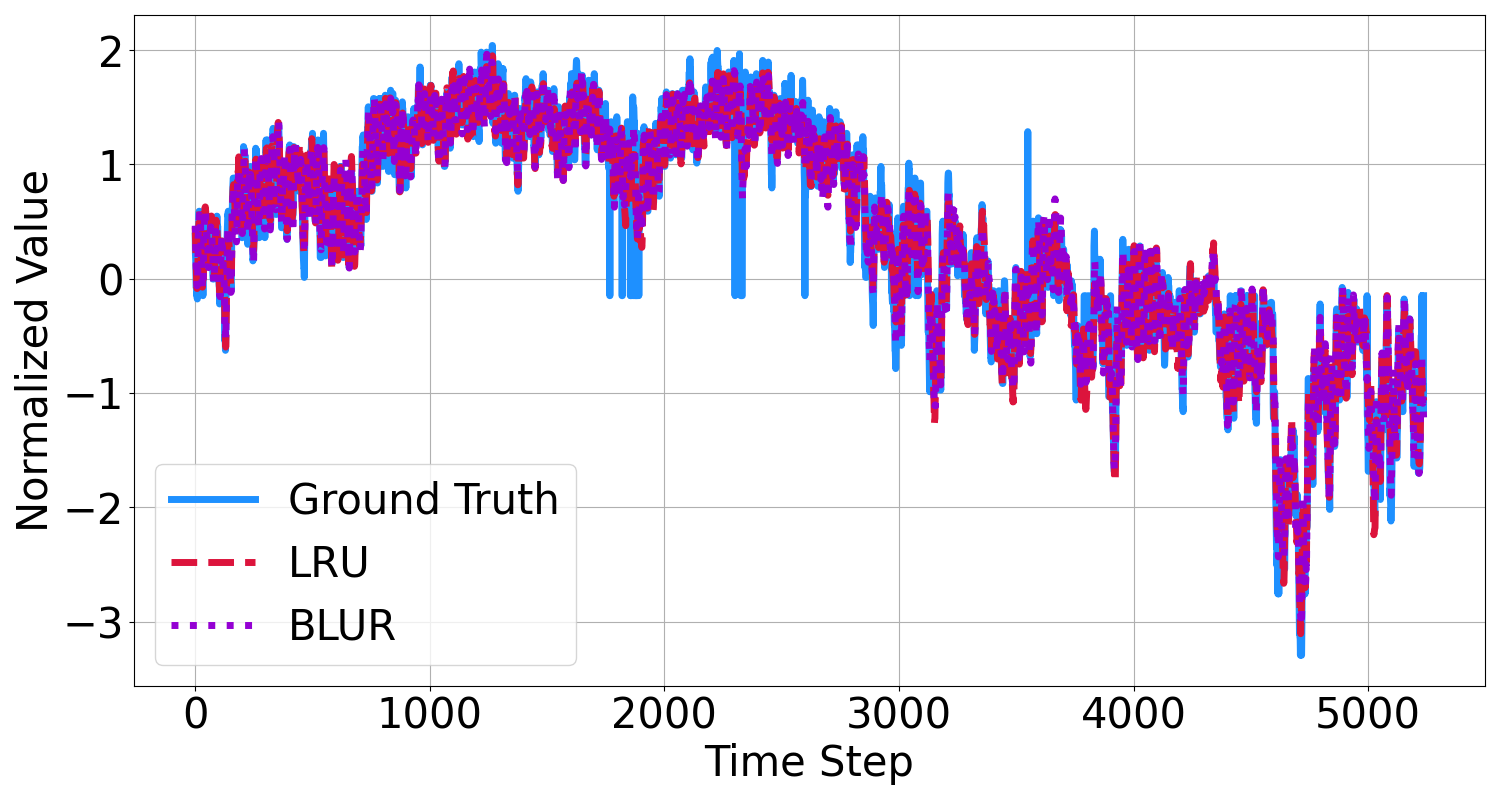}
    \caption{Model comparison between LRU and BLUR on the full testing data of Weather for the prediction.}
    \label{fig:groundtruth_prediction_full_WTH}
\end{figure}

\begin{figure}[h]
    \centering
    \includegraphics[width=0.8\linewidth]{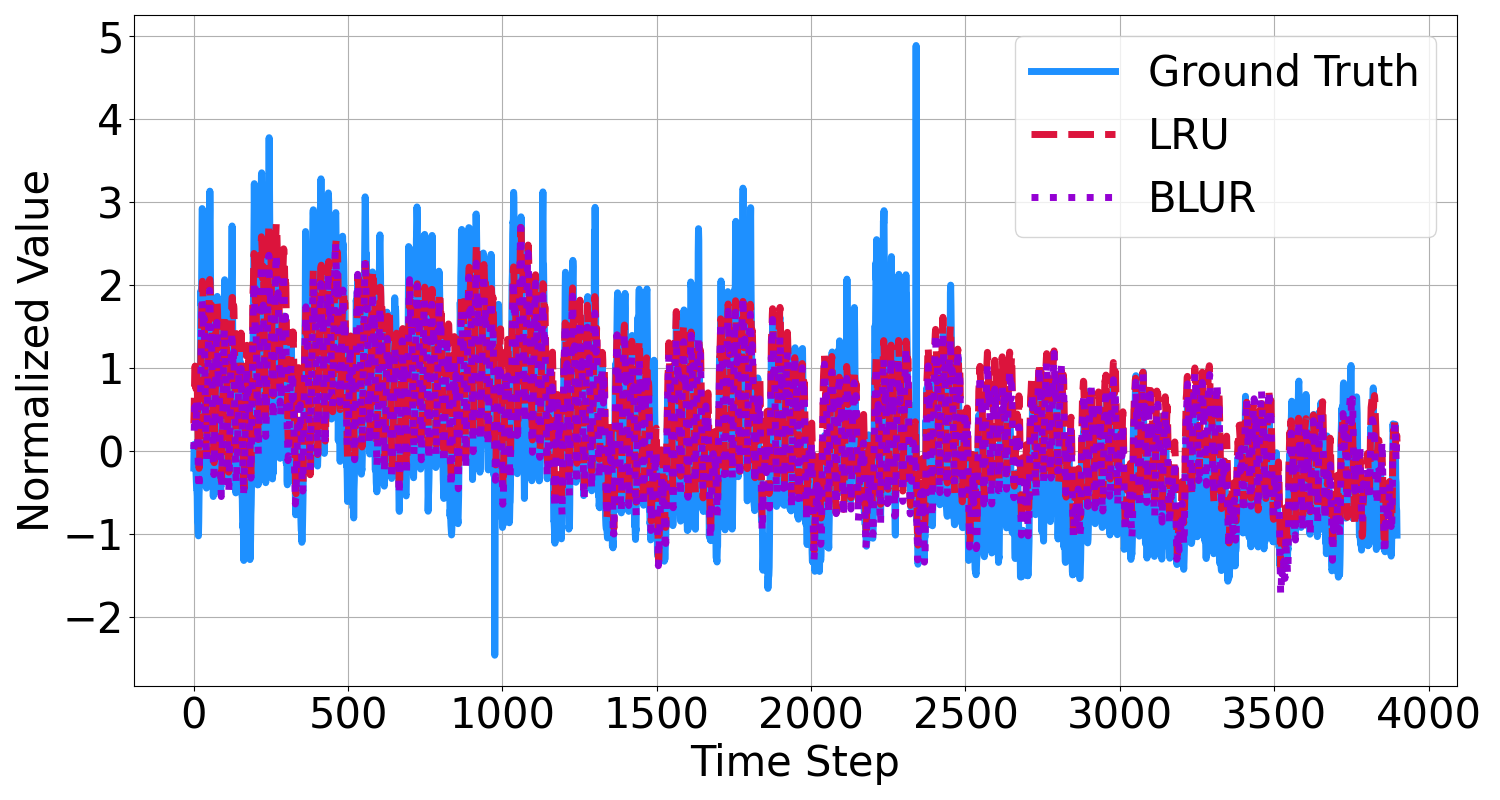}
    \caption{Model comparison between LRU and BLUR on the full testing data of Electricity for the prediction.}
    \label{fig:groundtruth_prediction_full_ECL}
\end{figure}
\textbf{Discussion on Model Performance. } Figure~\ref{fig:groundtruth_prediction_ETTH1} presents the prediction results on the ETTh$_1$ dataset with a prediction horizon of 24. The red dashed line represents the baseline LRU model, while the purple dotted line corresponds to our proposed BLUR model. The blue solid line denotes the ground truth. Overall, both models successfully capture the general trend of the ground truth. However, BLUR demonstrates a noticeably closer fit compared to LRU, particularly in capturing sharp fluctuations. For instance, at around time step 1800, BLUR accurately tracks the spike, whereas LRU exhibits a delayed and less pronounced response. 

Figure~\ref{fig:groundtruth_prediction_full_ETTH2} illustrates the results for the ETTh$_2$ dataset. Similar to ETTh$_1$, both models align closely with the ground truth, but BLUR demonstrates a more refined representation in sections with high volatility. A notable example is between time steps 1000 and 1500, where BLUR maintains tighter alignment with the peaks and troughs, while LRU tends to oversmooth the trajectory.

Figure~\ref{fig:groundtruth_prediction_full_ETTM1} showcases the results on ETTm$_1$ dataset. In this scenario, BLUR’s adaptability becomes more apparent. For instance, around time step 11000, the ground truth undergoes a series of rapid oscillations. BLUR successfully tracks these oscillations with minimal lag, whereas LRU slightly leads ahead, leading to a phase shift in its predictions. 

Figure~\ref{fig:groundtruth_prediction_full_WTH} evaluates the models on the WTH dataset, where both models perform well in capturing the overall trend. However, BLUR exhibits a slightly more precise alignment, particularly in the latter part of the time series. For instance, at time step 5000, LRU clearly overestimates the peak, whereas BLUR maintains a relatively more accurate prediction.

Figure~\ref{fig:groundtruth_prediction_full_ECL} presents the results on the Electricity dataset with a prediction horizon of 48. Compared to the previous datasets, Electricity exhibits high-frequency variations with multiple peaks and troughs, making accurate forecasting more challenging. Both models effectively track the overall trend of the ground truth, but BLUR shows a slightly better alignment, especially in high-variance regions. For instance, between time steps 3000 and 3500, LRU exhibits a slight overestimation of peak values, whereas BLUR maintains a closer fit to the ground truth. Additionally, in regions with sharp downward movements, BLUR adjusts more smoothly, avoiding the overshooting behavior observed in LRU.

\textbf{Computational time.}
Since the proposed BLUR inherits the parallel training capability from LRU, which has been shown to significantly reduce the computational cost compared to traditional RNNs and transformers, in this context, we present the practical runtime results to scrutinize if BLUR would suffer high computational time issue. Figures~\ref{fig:time_comparison_classification} and~\ref{fig:time_comparison_regression} show the wall-clock time of running one epoch for different models in the same machine. The immediate observation from the plot is that LRU/BLUR benefits considerably from the linear recurrence block that enables parallel training. BLUR is $\sim$3x and $\sim$72x computationally faster than S4/S5 and Informer, respectively, in classification and prediction tasks. However, we would like to note that in the timeseries prediction task, such high computational efficiency is also partially attributed to the usage of the JAX platform, which has been justified in~\cite{smith2022simplified} for the S5 model. Additionally, based on the report from Hugging Face, neither Informer nor Autoformer is supported by Flax, which means that both of them are not implemented by using JAX as of now. This imposes difficulty on the direct computational time comparison between BLUR and either of them.
Regardless of computational platforms, the prediction performance motivates us to resort to linear RNNs in modeling long-term dependencies.

\begin{figure}
    \centering
    \includegraphics[width=0.65\linewidth]{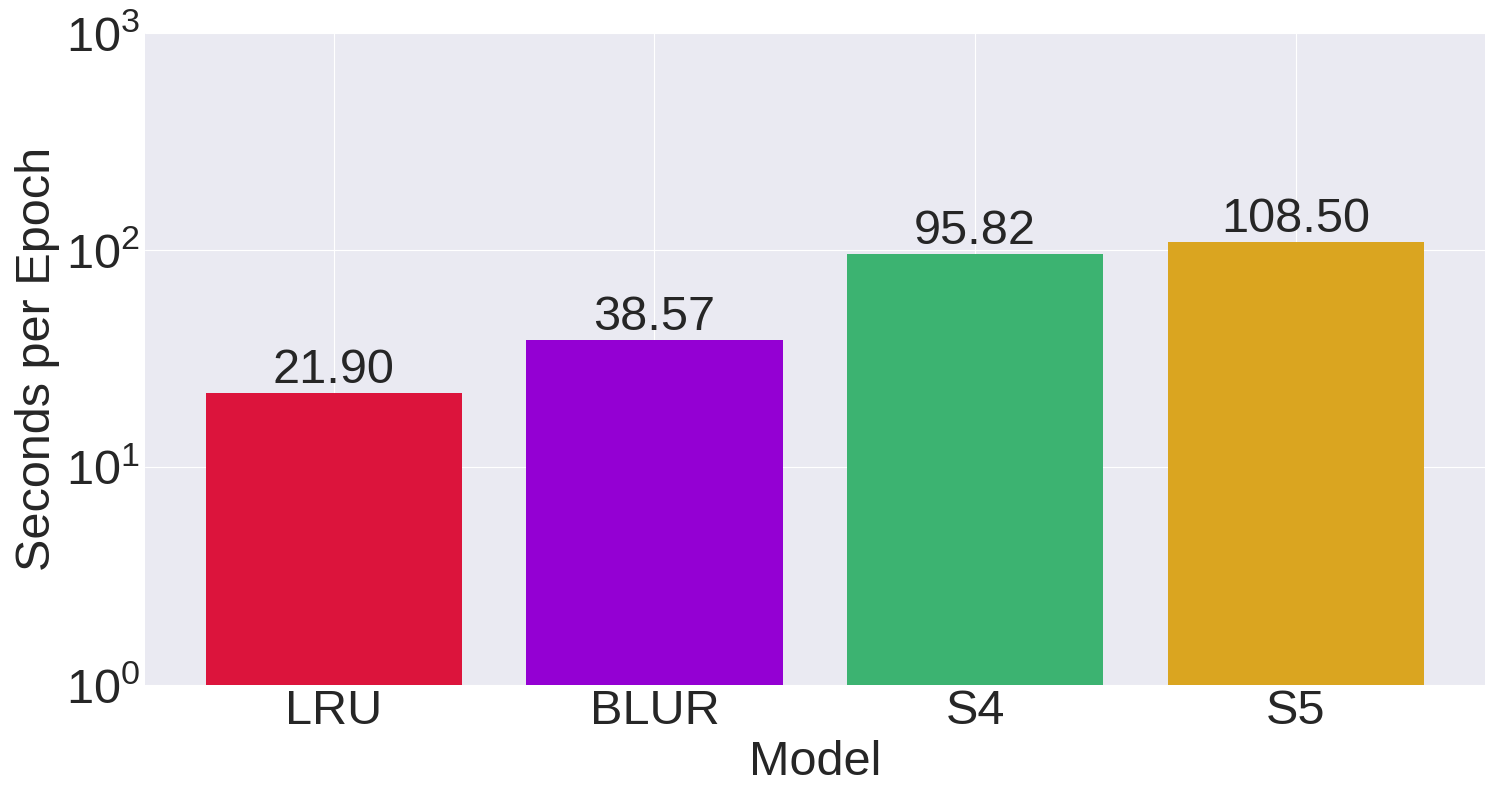}
    \caption{Training time costs per epoch for different methods with Sequential CIFAR.}
    \label{fig:time_comparison_classification}
\end{figure}

\begin{figure}
    \centering
    \includegraphics[width=0.65\linewidth]{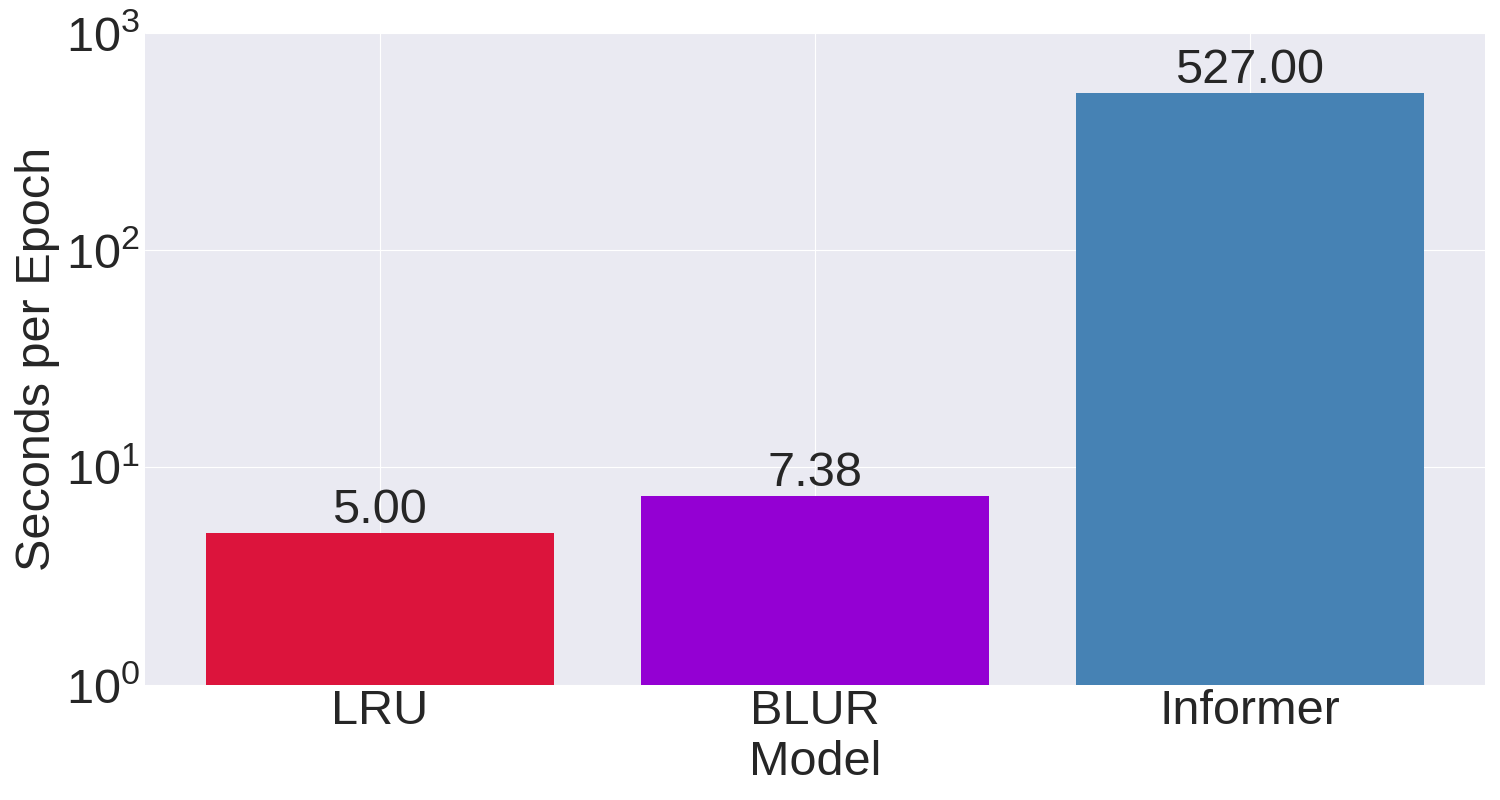}
    \caption{Training time costs per epoch for different methods with ETTh$_1$.}
    \label{fig:time_comparison_regression}
\end{figure}



\end{document}